\crefname{appendix}{Appendix}{Appendices}
\DeclareMathOperator*{\argmax}{arg\,max}
\DeclareMathOperator*{\argmin}{arg\,min}
\algnewcommand{\LineComment}[1]{\State \(\triangleright\) #1}
\newcommand{\C}{\mathcal{C}}
\newcommand{\A}{\mathcal{A}}
\newcommand{\cS}{\mathcal{S}}
\newcommand{\norm}[1]{\|#1\|}
\newcommand{\Prob}{\mathbb{P}}
\newcommand{\E}{\mathbb{E}}
\newcommand{\vc}[1]{v^c_{#1}}
\newcommand{\vr}[1]{v^r_{#1}}
\newcommand{\vp}[1]{v^{p}_{#1}}
\newcommand{\qr}[1]{q^r_{#1}}
\newcommand{\qc}[1]{q^c_{#1}}
\newcommand{\qp}[1]{q^p_{#1}}
\newcommand{\wwr}[1]{{w^r_{#1}}}
\newcommand{\wwc}[1]{{w^c_{#1}}}
\newcommand{\bqr}{\bar q^r}
\newcommand{\bqc}{\bar q^c}
\newcommand{\bq}[1]{\bar q_{#1}}
\newcommand{\bpi}[1]{{\bar \pi}_{#1}}
\newcommand{\m}{(\floor{m}+1)}
\newcommand{\spis}{\pi^*_{\triangle}}
\newcommand{\M}{\mathcal{M}_1}
\newcommand{\Qc}[1]{Q^c_{#1}}
\newcommand{\Qr}[1]{Q^r_{#1}}
\newcommand{\Qp}[1]{Q^{p}_{#1}}
\newcommand{\poly}{\operatorname{poly}}
\newcommand{\trunc}{\operatorname{trunc}}
\newcommand{\proj}{\operatorname{proj}}
\newcommand{\Cov}{\operatorname{Cov}}
\newcommand{\ActionCov}{\operatorname{ActionCov}}
\newcommand{\Vc}[1]{V^c_{#1}}
\newcommand{\Vr}[1]{V^r_{#1}}
\newcommand{\tVc}[1]{\tilde V^c_{#1}}
\newcommand{\tQ}[1]{\tilde Q_{#1}}
\newcommand{\tQc}[1]{\tilde Q^c_{#1}}
\newcommand{\tQr}[1]{\tilde Q^r_{#1}}
\newcommand{\tQp}[1]{\tilde Q^p_{#1}}
\newcommand{\tlamb}[1]{\lambda_{#1}}
\newcommand{\inner}[1]{\langle #1 \rangle}
\newcommand{\floor}[1]{\lfloor #1 \rfloor}
\newcommand{\R}{\mathbb{R}}
\newtheorem{lemma}{Lemma}
\newtheorem{theorem}{Theorem}
\newtheorem{definition}{Definition}
\newtheorem{assumption}{Assumption}
\newenvironment{proof}[1]{\par\noindent\underline{Proof:}\space#1}{\hfill $\blacksquare$}
\providecommand{\customgenericname}{}
\newcommand{\newcustomtheorem}[2]{%
  \newenvironment{#1}[1]
  {%
   \renewcommand\customgenericname{#2}%
   \renewcommand\theinnercustomgeneric{##1}%
   \innercustomgeneric
  }
  {\endinnercustomgeneric}
}
\title{Confident Natural Policy Gradient for Local Planning in  $q_\pi$-realizable Constrained MDPs}
\author{
  Tian Tian \\
  University of Alberta, Edmonton\\
  \texttt{ttian@ualberta.ca} \\
  \And
  Lin F. Yang \footnotemark[1]\\
  University of California, Los Angeles \\
  \texttt{linyang@ee.ucla.edu} \\
  \AND
  Csaba Szepesvári \footnotemark[1]\\
  University of Alberta, Google DeepMind, Edmonton \\
  \texttt{szepesva@ualberta.ca}
}
\begin{document}
{
    \renewcommand{\thefootnote}{\fnsymbol{footnote}}
    \footnotetext[1]{Corresponding author.}
}

\maketitle
\setcounter{footnote}{0}

\begin{abstract}
The constrained Markov decision process (CMDP) framework emerges as an important reinforcement learning approach for imposing safety or other critical objectives while maximizing cumulative reward. However, the current understanding of how to learn efficiently in a CMDP environment with a potentially infinite number of states remains under investigation, particularly when function approximation is applied to the value functions. In this paper, we address the learning problem given linear function approximation with $q_{\pi}$-realizability, where the value functions of all policies are linearly representable with a known feature map, a setting known to be more general and challenging than other linear settings. Utilizing a local-access model, we propose a novel primal-dual algorithm that, after $\tilde{O}(\poly(d) \epsilon^{-3})$\footnote{Here $\tilde{O}(\cdot)$ hides $\log$ factors.} queries, outputs with high probability a policy that strictly satisfies the constraints while nearly optimizing the value with respect to a reward function. Here, $d$ is the feature dimension and $\epsilon > 0$ is a given  error. The algorithm relies on a carefully crafted off-policy evaluation procedure to evaluate the policy using historical data, which informs policy updates through policy gradients and conserves samples. To our knowledge, this is the first result achieving polynomial sample complexity for CMDP in the $q_{\pi}$-realizable setting.

\end{abstract}

\section{Introduction}
In the classical reinforcement learning (RL) framework, optimizing a single objective above all else can be challenging for safety-critical applications like autonomous driving, robotics, and Large Language Models (LLMs). For example, it may be difficult for an LLM agent to optimize a single reward that fulfills the objective of generating helpful responses while ensuring that the messages are harmless \citep{dai2024safe}. 
In autonomous driving, designing a single reward often requires reliance on complex parameters and hard-coded knowledge, making the agent less efficient and adaptive \citep{kamran2022modern}. Optimizing a single objective in motion planning involves combining heterogeneous quantities like path length and risks, which depend on conversion factors that are not necessarily straightforward to determine \citep{6899341}. 

The constrained Markov decision process (CMDP) framework \citep{altman2021constrained} emerges as an important RL approach for imposing safety or other critical objectives while maximizing cumulative reward \citep{wachi2020safe, dai2024safe, kamran2022modern, 9294262, GIRARD2015223, 6899341}.

In addition to the single reward function optimized under a standard Markov decision process (MDP), CMDP considers multiple reward functions, with one designated as the primary reward function. The goal of a CMDP is to find a policy that maximizes the primary reward function while satisfying constraints defined by the other reward functions. Although the results of this paper can be applied to multiple constraint functions, for simplicity of presentation, we consider the CMDP problem with only one constraint function.

Our current understanding of how to learn efficiently in a CMDP environment with a potentially infinite number of states remains limited, particularly when function approximation is applied to the value functions. Most works studying the sample efficiency of a learner have focused on the tabular or simple linear CMDP setting (see related works for more details). However, there has been little work in the more general settings such as the  $q_\pi$-realizability, which assumes
the value function of all policies can be approximated by a linear combination of a feature map with unknown parameters. 
Unlike Linear MDPs \citep{yang2019sample, jin2020provably}, where the transition model is assumed to be linearly representable by a feature map, $q_\pi$-realizability only imposes the assumption on the existence of a feature map to represent value functions of policies. 

Nevertheless, the generality of $q_\pi$-realizability comes with a price, as it becomes considerably more challenging to design effective learning algorithms, even for the unconstrained settings. For the general online setting, we are only aware of one sample-efficient MDP learning algorithm \citep{weisz2023online}, which, however, is computationally inefficient.
To tackle this issue, a line of research \citep{kearns2002sparse, yin2022efficient, hao2022confident, weisz2022confident} applies the \emph{local-access model}, where the RL algorithm can restart the environment from any visited states - a setting that is also practically motivated, especially when a simulator is provided. The local-access model is more general than the generative model \citep{kakade2003sample, sidford2018near, yang2019sample, lattimore2020learning, vaswani2022near}, which allows visitation to arbitrary states in an MDP.
The local-access model provides the ability to unlock both the sample and computational efficiency of learning with $q_\pi$-realizability for the unconstrained MDP settings. However, it remains unclear whether we can harness the power of local-access for CMDP learning.

In this paper, we present a systematic study of CMDP for large state spaces, given $q_\pi$-realizable function approximation in the local-access model. We summarize our contributions as follows:
\begin{itemize}
    \item We design novel, computationally efficient primal-dual algorithms to learn CMDP near-optimal policies with the local-access model and $q_\pi$-realizable function classes. The algorithms can return policies with small constraint violations or even no constraint violations and can handle model misspecification.
    \item We provide theoretical guarantees for the algorithms, showing that they can compute an $\epsilon$-optimal policy with high probability, making no more than $\tilde{O}(\poly(d) \epsilon^{-3})$ queries to the local-access model. The returned policies can strictly satisfy the constraint.
    \item Under the misspecification setting with a misspecification error $\omega$, we show that our algorithms achieve an $\tilde{O}(\omega) + \epsilon$ sub-optimality with high probability, maintaining the same sample efficiency of $\tilde{O}(\poly(d) \epsilon^{-3})$.
\end{itemize}

\section{Related works}
 Most provably efficient algorithms developed for CMDP are in the tabular and linear MDP settings.  In the tabular setting, most notably are the works by \citep{efroni, liu, pmlr-v120-zheng20a, vaswani2022near, kalagarla2021sample, yu2021provably, gattami2021reinforcement, hasanzadezonuzy2021model, chen2021primal, kitamura2024policy}.  Work by \cite{vaswani2022near} have showed their algorithm uses no more than $\tilde{O}\left(\frac{SA}{(1-\gamma)^3 \epsilon^2}\right)$ samples to achieve relaxed feasibility and $\tilde{O}\left(\frac{SA}{(1-\gamma)^5 \zeta^2 \epsilon^2}\right)$ samples to achieve strict feasibility. Here, the $\gamma \in [0,1)$ is the discount factor and $\zeta \in (0, \frac{1}{1-\gamma}]$ is the Slater's constant, which characterizes the size of the feasible region and hence the hardness of the CMDP. In their work, they have also provided a lower bound of $\Omega\left(\frac{SA}{(1-\gamma)^5 \zeta^2 \epsilon^2}\right)$ on the sample complexity under strict feasibility.  However, all the aforementioned results all scale polynomially with the cardinality of the state space. 
 
 For problems with large or possibly infinite state spaces, works by \citep{jain2022towards, ding2021provably, miryoosefi2022simple, ghosh2024towards, liu2022policy} have used linear function approximations to address the curse of dimensionality. All these works, except \cite{jain2022towards, liu2022policy}, make the linear MDP assumption, where the transition function is linearly representable.

 Under the generative model, for the infinite horizon discounted case, the online algorithm proposed in \cite{jain2022towards} achieves a regret of $\tilde{O}(\sqrt{d}/\sqrt{K})$ with $\tilde{O}(\sqrt{d}/\sqrt{K})$ constraint violation, where $K$ is the number of iterations.  Work by \cite{liu2022policy} is able to achieve a faster $O(\ln(K)/K)$ convergence rate for both the reward suboptimality and constraint violation. For the online access setting under linear MDP assumption, \cite{ding2021provably, ghosh2024towards} achieve a regret of $\tilde{O}(\poly(d) \poly(H) \sqrt{T})$ with $\tilde{O}(\poly(d) \poly(H) \sqrt{T}))$ violations, where $T$ is the number of episodes and $H$ is the horizon term. 
 
 \cite{miryoosefi2022simple} presented an algorithm that achieves a sample complexity of $\tilde{O}\left(\frac{d^3 H^6}{\epsilon^2}\right)$, where $d$ is the dimension of the feature space and $H$ is the horizon term in the finite horizon CMDP setting. In the more general setting under $q_\pi$-realizability, the best-known upper bounds are in the unconstrained MDP setting.
 
 In the unconstrained MDP setting with access to a local-access model, early work by \cite{kearns2002sparse} have developed a tree-search style algorithms under this model, albeit in the tabular setting. Under $v^*$-realizability, \cite{weisz2021query} presented a planner that returns an $\epsilon$-optimal policy using $O((dH/\epsilon)^{|\A|})$ queries to the simulator. More works by \citep{yin2022efficient, hao2022confident, weisz2022confident} have considered the local-access model with $q_\pi$-realizability assumption.  Recent work by \cite{weisz2022confident} have shown their algorithm can return a near-optimal policy that achieves a sample complexity of $\tilde{O}\left(\frac{d}{(1-\gamma)^4 \epsilon^2}\right)$.  
 
\section{Problem formulation}
\subsection*{Constrained MDP}
We consider an infinite-horizon discounted CMDP $(\cS, \A, P, r, c, \gamma, b, s_0)$ consisting a possibly countably infinite state space $\cS$ with a finite set of actions $\A$, a reward function $r:\cS \times \A \to [0,1]$, a constraint function $c:\cS \times \A \to [0,1]$, a discount factor $\gamma \in [0,1)$, a constraint threshold $b \geq 0$, and a fixed initial state $s_0$.  Let $\M(X)$ denote the space of probability distributions supported on the set $X$.  Then, the transition probability $P: \cS \times \A \to \M(\cS)$. 

Define a set of stationary randomized policies $\Pi_{\text{rand}}$, and a policy $\pi \in \Pi_{\text{rand}}$ maps states to probability distributions over the actions (i.e., $\pi : \cS \to \M(\A)$).  Given a $\pi \in \Pi_{\text{rand}}$, the policy $\pi$ interacts with the CMDP starting from any state $s \in \cS$ through discrete steps indexed by $t \in \mathbb{N}_0$, where $\mathbb{N}_0 = \{0,1,2,\dots\}$.  This interaction generates a trajectory of $\{S_t, A_t\}_{t \in \mathbb{N}_0}$, where $S_0 = s, A_t \sim \pi(\cdot| S_t)$, and $S_{t+1} \sim P(\cdot | S_t, A_t)$.  The reward action-value function is defined as $q_{\pi}^r(s,a) = \E \left[\sum_{t=0}^{\infty} \gamma^t r(S_t, A_t) | S_0 = s, A_0 = a \right]$.  Similarly, the constraint action-value function is defined as $q_{\pi}^c(s,a) = \E \left[\sum_{t=0}^{\infty} \gamma^t c(S_t, A_t) | S_0 = s, A_0 = a \right]$.  The reward state-value function $\vr{\pi}(s) = \inner{\pi(\cdot |s), q_{\pi}^r(s,\cdot)}$, where $\inner{\cdot, \cdot}$ denotes the inner product over actions.  Likewise, the constraint state-value function $\vc{\pi}(s) = \inner{\pi(\cdot |s), q_{\pi}^c(s,\cdot)}$.

The objective of the CMDP is to find a policy $\pi$ that maximizes the state-value function $\vr{\pi}$ starting from a given state $s_0$, while ensuring that the constraint $\vc{\pi}(s_0) \geq b$ is satisfied:
\begin{align}
    \max_{\pi \in \Pi_{\text{rand}}} \vr{\pi}(s_0) \quad s.t. \quad  \vc{\pi}(s_0) \geq b. \label{eq:cmdp}
\end{align}
We assume the existence of a feasible solution to \cref{eq:cmdp} and let $\pi^*$ denote a solution to \cref{eq:cmdp}.  A quantity unique to CMDP is the Slater's constant, which is denoted as $\zeta = \max_{\pi} \vc{\pi}(s_0) - b$.  Slater's constant characterizes the size of the feasibility region, and hence the hardness of the problem. 

Because the state space can be large or possibly infinite, we use linear function approximation to approximate the values of stationary randomized policies.  Let $\phi: \cS \times \A \to \R^d$ be a feature map. We assume that both $\qr{\pi}$ and $\qc{\pi}$ satisfy the following condition:
\begin{assumption}($q_{\pi}$-realizability) \label{assump:q_pi_realizable}
    There exists $B> 0$ and a misspecification error $\omega \geq 0$ such that for every $\pi \in \Pi_{\text{rand}}$, there exists a weight vector $w_{\pi} \in \R^d$, $\norm{w_{\pi}}_2 \leq B$, and ensures $|q_\pi(s,a) - \inner{w_\pi, \phi(s,a)}| \leq \omega$ for all $(s,a) \in \cS \times \A$.
\end{assumption}

A \bf mixture policy \rm is defined as a policy randomly selected from a finite set of policies $\{\pi_0, \cdots, \pi_K\}$ and executed for all subsequent steps.  For example, a mixture policy $\bpi{K}$ is constructed by sampling a policy $\pi_k$ with probability $\frac{1}{K}$ and following it.  The value function of such mixture policy for state $s \in \cS$ is given by $v_{\bpi{K}}(s) = \frac{1}{K} \sum_{k=0}^{K-1} v_{\pi_k}(s)$, where $v_{\pi_k}(s)$ is the value function of the individual policy $\pi_k$.  Note that $\bpi{K}$ is a non-stationary policy, and the set of non-stationary policies includes the set of stationary randomized policies $\Pi_{\text{rand}}$.  
 
We assume access to a local access model, where the agent can query the simulator only for states that have been encountered during previous simulations.  Our goal is to design an algorithm that outputs a near-optimal mixture policy $\bpi{K}$, whose performance can be characterized in one of two ways.  

For a given target error $\epsilon > 0$, the
\bf relaxed feasibility \rm requires the returned policy $\bpi{K}$ whose sub-optimality gap $\vr{\pi^*}(s_0) - \vr{\bpi{K}}(s_0)$ is bounded by $ \epsilon$, while allowing for a small constraint violation.  Formally, we require $\bpi{K}$ such that
\begin{align*}
    \vr{\pi^*}(s_0) - \vr{\bpi{K}}(s_0) \leq \epsilon \quad
    s.t \quad \vc{\bpi{K}} (s_0) \geq b - \epsilon.
\end{align*}
On the other hand, \bf strict-feasibility \rm requires the returned policy $\bpi{K}$ whose sub-optimality gap $\vr{\pi^*}(s_0) - \vr{\bpi{K}}(s_0)$ is bounded by $\epsilon$ while not allowing any constraint violation.  Formally, we require $\bpi{K}$ such that
\begin{align*}
    \vr{\pi^*}(s_0) - \vr{\bpi{K}}(s_0) \leq \epsilon \quad s.t \quad \vc{\bpi{K}} (s_0) \geq b. 
\end{align*}

\subsection*{Notations}
For any real number $a \in \R$, we let $\floor{a}$ to denote the smallest integer $i$ such that $i \leq a$.    For vector $x \in \R^d$, let $\norm{x}_{1} = \sum_{i} |x_i|$, $\norm{x}_2 = \sqrt{\sum_{i} x_i^2}$, and $\norm{x}_{\infty} = \max_{i} |x_i|$.  For a positive definite matrix $A \in \R^{d \times d}$, the $\norm{x}_{A}^2 = x^\top A x$.  We let $\proj_{[a_1, a_2]}(\lambda) = \argmin_{p \in [a_1, a_2]} |\lambda - p|$, and $\trunc_{[a_1, a_2]}(y) = \min \{ \max\{y,a_1\}, a_2 \}$.  For any two positive numbers $a, b$, we write $a = O(b)$ if there exists an absolute constant $c > 0$ such that $a \leq c b$.  We use the $\tilde O$ to hide any polylogarithmic terms.

\section{Confident-NPG-CMDP, a local-access algorithm for CMDP} \label{sec:confident-npg-cmdp}
In this section, we introduce a primal-dual algorithm, which we call \emph{Confident-NPG-CMDP} (see \cref{alg:cmdp}).  

\begin{algorithm}
    \caption{Confident-NPG-CMDP} \label{alg:cmdp}
    \begin{algorithmic}[1]
    \State \textbf{Input: } $s_0$ (initial state), $\epsilon$ (target accuracy), $\delta \in (0, 1]$ (failure probability); $\gamma$ (discount factor)

  \State   \textbf{Initialize: } 
  \State Define $K, \eta_1, \eta_2, m$ according to Theorem 1 for relaxed-feasibility and Theorem 2 for strict-feasibility,
  \State Set $L \gets  \floor{\floor{K} /  \m }$.  
  \State  For each iteration $k \in \{0,\dots,\floor{K}\}: \pi_k \leftarrow \mathrm{Unif}(\A)$, \ $\tQp{k}(\cdot,\cdot) \leftarrow 0, \ \tVc{k}(\cdot) \leftarrow 0$, and $\tlamb{k} \leftarrow 0$.  
   
 \State   For each phase $l \in \{0,\dots,L+1\}: \C_l \leftarrow (), \ D_l \leftarrow \{ \}$ 

   \Statex

   \State  For {$a \in \A$}:  if {$(s_0, a) \not \in \mathrm{ActionCov}(\C_0)$}, then append $(s_0, a)$ to $\C_0$ and set $\perp$ to $D_0[(s_0, a)]$ \Comment{see ActionCov defined in \cref{eq:action_cover}} \label{pseudo:initialC0}
    
    \Statex
        \While{True} \Comment{main loop}
            \State Let $\ell$ be the smallest integer s.t. $D_{\ell}[z'] = \perp$ for some $z' \in \C_\ell$ \label{pseudo:running_phase}
            \State Let $z$ be the first state-action pair in $\C_\ell$ s.t. $D_{\ell}[z] = \perp$
            \Statex 
            \State If {$\ell =L+1$}, then return {$\bar \pi_{K}$}
            
            \Statex

            \State $k_\ell \leftarrow \ell \times \m$  \Comment{iteration corresponding to phase $\ell$}
            \State $(result, discovered) \leftarrow$ Gather-data($\pi_{k_\ell}, \C_{\ell}, \alpha, z$)
            \If{$discovered$ is True}
                \State Append $result$ to $\C_0$ and set $\perp$ to $D_0[result]$ \Comment{$result$ is a state-action pair} \label{pseudo:add_to_c_0}
                \State Goto line 8
            \EndIf
            
            \Statex
            \State $D_{\ell}[z] \leftarrow result$ 
            \Statex
            \If{$\not \exists z' \in \C_\ell$ s.t. $D_\ell[z'] = \perp$}
                \State $k_{\ell+1} \leftarrow k_\ell + \m$ if $k_\ell + \m \leq \floor{K}$ otherwise $\floor{K}$
                \Statex
                \For{$k=k_\ell,\dots,k_{\ell+1} -1$} \Comment{off-policy iterations  reusing $\C_{\ell}, D_{\ell}$}
                    \State $Q_{k}^r, \ Q_{k}^c \leftarrow LSE(\C_{\ell}, D_{\ell}, \pi_{k}, \pi_{k_\ell})$  \label{pseudo:lse}
                    \Statex
                    \State For {$s \in \Cov(\C_\ell) \setminus Cov(\C_{\ell+1})$, and for $a \in \A$}
                        \State \indent $\tQp{k}(s,a) \leftarrow \trunc_{\left[0, \frac{1}{1-\gamma}\right]} Q_{k}^r(s,a) + \tlamb{k} \trunc_{\left[0, \frac{1}{1-\gamma}\right]}Q_{k}^c(s,a)$ \label{pseudo:primal_Q}
                        \State \indent $\tVc{k}(s) \leftarrow \trunc_{\left[0, \frac{1}{1-\gamma}\right]}\inner{\pi_{k}(\cdot |s), \Qc{k}(s,\cdot)}$ \label{pseudo:dual_V}
                   
                    \Statex
                    \LineComment{update policy}
                    \State For {$s, a \in \cS \times \A$}: 
                    \State  \indent
                            $\pi_{k+1}(a | s) \leftarrow \begin{cases}
                            \pi_{k+1}(a | s) & \text{if } s \in \Cov(\C_{\ell+1}) \\
                            \pi_{k}(a | s) \frac{\exp(\eta_1 \tQp{k}(s,a))}{\sum_{a' \in \A} \pi_k(a'|s) \exp(\eta_1 \tQp{k}(s,a'))} & \text{otherwise }                         
                        \end{cases} $ \label{pseudo:policy_update}

                    \Statex
                    \LineComment{update dual variable}
                    \State $\tlamb{k+1} \leftarrow \begin{cases}
                        \tlamb{k+1} & \text{if } s_0 \in \Cov(\C_{\ell+1}) \\
                        \proj_{[0,U]} \left(\tlamb{k} - \eta_2 ( \tVc{k}(s_0) - b) \right) & \text{otherwise}.
                    \end{cases}$ \label{pseudo:dual_update}
                 
                    \Statex
                \EndFor
                \State For {$z \in \C_{\ell}$ s.t. $z \not \in \C_{\ell + 1}$}: append $z$ to $\C_{\ell+1}$ and set $\perp$ to $D_{\ell+1}[z]$ \label{pseudo:c_l_extension}
            \EndIf
        \EndWhile
    \end{algorithmic}
\end{algorithm}

\subsection{A primal-dual approach}
We approach solving the CMDP problem by framing it as an equivalent saddle-point problem: 
\begin{align*}
    \max_{\pi} \min_{\lambda \geq 0} L(\pi, \lambda), 
\end{align*}
where $L: \Pi_{\text{rand}} \times \R_{+}  \to \R$ is the Lagrange function.  For a policy $\pi \in \Pi_{\text{rand}}$ and a Lagrange multiplier $\lambda \in \R_{+}$, we have
\begin{align*}
    L(\pi, \lambda) = \vr{\pi}(s_0) + \lambda(\vc{\pi}(s_0) - b).
\end{align*}
Let $(\pi^*, \lambda^*)$ be a solution to this saddle-point problem. By an equivalence to a LP formulation and strong duality \citep{altman2021constrained}, $\pi^*$ is the policy that achieves the optimal value in the CMDP as defined in \cref{eq:cmdp}.  An optimal Lagrange multiplier $\lambda^* \in \argmin_{\lambda \geq 0} L(\pi^*, \lambda)$,   Therefore, solving \cref{eq:cmdp} is equivalent to finding a saddle-point of the Lagrange function.

A typical primal dual algorithm that finds the saddle-point will proceed in an iterative fashion alternating between a policy update using policy gradient and a dual variable update using mirror descent.  The policy gradient is computed with respect to the primal value $\qp{\pi_k, \lambda_k} = \qr{\pi_k} + \lambda_k \qc{\pi_k}$ and the mirror descent is computed with respect to the constraint value $\vc{\pi_{k}}(s_0) = \inner{\pi_{k}(\cdot | s_0), \qc{\pi_{k}}(s_0, \cdot)}$.  

Given that we do not have access to an oracle for exact policy evaluations, we must collect data to estimate the primal and constraint values. If we have the least-squares estimates of $\qr{\pi_k}$ and $\qc{\pi_k}$, denoted by $\Qr{k}$ and $\Qc{k}$, respectively, then we can compute the least-squares estimate $\Qp{k} = \Qr{k} + \lambda_k \Qc{k}$ to be the estimate of the primal value $\qp{\pi_k, \lambda_k}$. Additionally, we can compute $V^c_{k}(s_0)=\inner{\pi_{k}(\cdot | s_0), \Qc{k}(s_0, \cdot)}$ to be the least-squares estimate of the constraint value $\vc{\pi_{k}}(s_0)$.  Then, for any given $(s,a) \in \cS \times \A$, our algorithm makes a policy update of the following form:
\begin{align}
    &\pi_{k+1}(a|s) \propto \pi_k(a|s) \exp(\eta_1 \Qp{k}(s,a)), \label{eq:primal_update} 
\end{align}
followed by a dual variable update of the following form:
\begin{align*}
    &\lambda_{k+1} \leftarrow \lambda_k - \eta_2 \left(V^c_{k}(s_0) - b\right), 
\end{align*}
where the $\eta_1$ and $\eta_2$ are the step-sizes.  

\subsection{Core set and least square estimates}

To construct the least-squares estimates, let us assume for now that we are given a set of state-action pairs, which we call the core set $\C$. By organizing the feature vector of each state-action pair in $\C$ row-wise into a matrix $\Phi_{\C} \in \R^{|\C| \times d}$, we can write the covariance matrix as $V(\C, \alpha) = \Phi_{\C}^{\top} \Phi_{\C} + \alpha I$. For each $(s,a) \in \C$, suppose we have run Monte Carlo rollouts using the rollout policy $\pi$ with the local access simulator to obtain an averaged Monte Carlo return denoted by $\bq{}(s,a)$.  Then we gather all the state-action pairs into a vector $\bq{} \in \R^ {|\C|}$. For any state-action pair $(s,a) \in \cS \times \A$, the least-square estimate of action-value $q_\pi$ is defined to be 
\begin{align}
    Q(s,a) = \inner{\phi(s,a), V(\C, \alpha)^{-1} \Phi_{\C}^{\top} \bq{}}. \label{eq:lse}
\end{align}

Since the algorithm can only rely on estimates for policy improvement and constraint evaluation, it is imperative that these estimates closely approximate their true action values. In the local access setting, an algorithm may not be able to visit all state-action pairs, so we cannot guarantee that the estimates will closely approximate the true action values for all state-action pairs. However, we can ensure the accuracy of the estimates for a subset of states. 

Given $\C$, let us define a set of state-action pairs whose features satisfies the condition $\norm{\phi(s,a)}_{V(\C, \alpha)^{-1}} \leq 1$, then we call this set the action-cover of $\C$:
\begin{align}
&\ActionCov(\C) = \{ (s,a) \in \cS \times \A: \norm{\phi(s,a)}_{V(\C, \alpha)^{-1}} \leq 1 \}. \label{eq:action_cover}
\end{align}
Following from the action-cover, we have the cover of $\C$.  For a state $s$ to be in the cover of $\C$, all its actions $a \in \A$, the pair $(s,a)$ is in the action-cover of $\C$.  In other words, \[\Cov(\C) = \{s \in \cS : \forall a \in \A, (s,a) \in \ActionCov(\C)\}. \]  

For any $s \in \Cov(\C)$, we can ensure the least square estimate $Q(s,a)$ defined by \cref{eq:lse} closely approximates its true action value $q_{\pi}(s,a)$ for all $a \in \A$.  However, such a core set $\C$ is not available before the algorithm is run. Therefore, we need an algorithm that will build a core set incrementally in the local-access setting while planning. To achieve this, we build our algorithm on CAPI-QPI-Plan \citep{weisz2022confident}, using similar methodology for core set building and data gathering. 

\subsection{Core set building and data gathering to control the accuracy of the least-square estimates}

Confident-NPG-CMDP does not collect data in every iteration but collects data in interval of $m = O \left( \ln(1+\rho_0)\poly(\epsilon^{-1} (1-\gamma)^{-1}) \right)$, where $\rho_0 \geq 0$ is an user defined constant. During each data collection phase, the algorithm performs on-policy evaluation. Between these phases, it conducts $\m$ off-policy evaluations, reusing data from the most recent on-policy iteration.

By setting $\rho_0$ to a positive value, we impose an upper bound of $1 + \rho_0$ on the per-trajectory importance sampling ratio used in off-policy evaluations, and $m$ is adjusted accordingly to maintain this bound. The total number of data collection phases is $L = \floor{\floor{K}/\m}$, where $K$ is the total number of iterations. When $\rho_0$ is set to zero, we have $L = K$, resulting in a purely on-policy version of the algorithm.

Confident-NPG-CMDP maintains a set of core sets $\{\C_l\}_{l=0}^{L+1}$, one for each data collection phases.  Each core set $\C_l$ is a list of state-action pairs. Due to the off-policy evaluations, Confident-NPG-CMDP also maintains a set of data sets $\{D_l\}_{l=0}^{L}$.  Initially, all core sets are empty, all policies are initialized to the uniform policy, and all data sets are empty.

The algorithm begins by adding the feature vectors corresponding to $(s_0, a)$ for all actions $a \in \mathcal{A}$ that are not in the action-cover of $\mathcal{C}_0$. These feature vectors are considered informative. For every $(s, a) \in \mathcal{C}_0$, the algorithm adds an entry to $D_0$ and sets its value to the placeholder $\perp$, indicating that there is no roll-out data yet.  Then, in  \cref{pseudo:running_phase} of \cref{alg:cmdp}, the algorithm finds the smallest integer $l \in \{ 0, \dotsc, L \}$ such that the corresponding $D_l$ has an entry without roll-out data (i.e., it contains the placeholder $\perp$).  When such a phase is found, a running phase begins, denoted by $\ell$ in \cref{alg:cmdp}.  We note that when $\ell = L+1$, the algorithm returns and no roll-outs are stored.

Since only one running phase $\ell$ can be active at a time, and $\ell$ can only take value $l \in \{0,\dots,L\}$, the algorithm updates the policies of the corresponding iterations in \cref{pseudo:policy_update}, updates the dual variables of these iterations in \cref{pseudo:dual_update}, and extends the core set for the next phase in \cref{pseudo:c_l_extension}.  

Suppose during a running phase with $\ell = l$, while performing the roll-out in Gather-data subroutine (\cref{alg:gather} in \cref{sec:confident-NPG-single}), if any state-action pair $(s,a) \in \cS \times \A$ is not in the action-cover of $\C_\ell$, the current running phase stops and the newly discovered state-action pair is added to $\C_0$ in \cref{pseudo:add_to_c_0}.  The same state-action pair is then propagated to $\C_1$ and so on by \cref{pseudo:c_l_extension}.

Once a state-action pair is added to a core set by \cref{pseudo:initialC0}, \cref{pseudo:add_to_c_0}, and \cref{pseudo:c_l_extension}, it remains in that core set for the duration of the algorithm. This means that any $\C_l$, $l \in \{ 0,\dots,L+1\}$ can grow in size and be extended multiple times during the execution of the algorithm. When any new state-action pair is added to a core set, the least-square estimate should be recomputed with the newly added information. This implies that the policy needs to be updated and data re-collected. However, we can avoid restarting the entire data collection procedure by updating only the policy for states that are newly added to the extended core set. We elaborate on this approach further in the next paragraph.

When the algorithm enters the running phase $\ell=l$, and the Gather-data subroutine returns, the LSE subroutine (\cref{alg:lse}) computes the least-squares estimate $Q_k^r, Q^c_k$ using the most recently extended core set $\C_\ell$ for each corresponding iteration $k = k_\ell, \ldots, k_{\ell+1} - 1$.  Subsequently, $\tQp{k}$ of \cref{pseudo:primal_Q} of \cref{alg:cmdp} is updated with the newly updated least-square estimates $Q_k^r, Q^c_k$. However, the policy $\pi_{k+1}$ will only be updated for states that are newly covered by $\C_\ell$ (i.e., $s \in \Cov(\C_\ell) \setminus \Cov(\C_{\ell+1})$). For any states that are already covered by $\C_\ell$ (i.e., $s \in \Cov(\C_{\ell+1})$), the policy remains unchanged from its previous update using the $\tQp{}$ at that time.  By updating the policy in this manner, the accuracy guarantee of $\tQp{k}(s,a)$ with respect to $\qp{\pi_k, \lambda_k}(s,a)$ is ensured not just for $\pi_k$, but for an extended set of policies defined as follows:
\begin{definition} \label{definition:extended_policy_set}
    For any policy $\pi$ from the set of randomized policies $\Pi_{rand}$ and any subset $\mathcal{X} \subseteq \cS$, the extended set of policies is defined as:
    \begin{align*}
    \Pi_{\pi, \mathcal{X}} = \{\pi' \in \Pi_{\text{rand}} \mid \pi(\cdot | s) = \pi'(\cdot | s) \text{ for all } s \in \mathcal{X} \}.
    \end{align*}
\end{definition} 

By maintaining a set of core sets, gathering data via the Gather-data subroutine (\cref{alg:gather} in \cref{sec:confident-NPG-single}), making policy updates by line ~\ref{pseudo:policy_update}, and dual variable updates by \cref{pseudo:dual_update}, we have:
\begin{lemma} \label{lemma:Q-accuracy-cmdp}
  Whenever LSE subroutine in \cref{pseudo:lse} of Confident-NPG-CMDP is executed during a running phase $\ell = l$ for $l \in \{0,\dots,L\}$, the least-square estimate $\tQp{k}(s,a)$ satisfies the following condition for all iterations $k = k_\ell,\dots, k_{\ell+1}-1$ associated with this phase and for all $s \in \Cov(\C_\ell)$ and $a \in \A$,
    \begin{align}
        | \tQp{k}(s,a) - q_{\pi_k', \lambda_k}^p(s,a) | \leq \epsilon' \quad \text{for all } \pi_k' \in \Pi_{\pi_k, \Cov(\C_\ell)}, \label{eq:q_bar_accuracy}
    \end{align}
    where $\epsilon' = (1+U)(\omega + \sqrt{\alpha} B + (\omega + \epsilon) \sqrt{\tilde d}$) with $\tilde d = \tilde O(d)$ and $U$ is an upper bound on the optimal Lagrange multiplier.  Similarly, for initial state $s_0$, we have
    \begin{align}
        | \tVc{k}(s_0) - v_{\pi_k'}^c(s_0) | \leq \omega + \sqrt{\alpha} B + (\omega + \epsilon) \sqrt{\tilde d} \quad \text{for all } \pi_k' \in \Pi_{\pi_k, \Cov(\C_\ell)}.\label{eq:q_bar_c_accuracy}
    \end{align}    
\end{lemma}
The accuracy guarantee of \cref{eq:q_bar_accuracy} and \cref{eq:q_bar_c_accuracy} are maintained throughout the execution of the algorithm. By lemma 4.5 of \cite{weisz2022confident} (restated in \cref{lemma:crucial} in \cref{sec:confident-NPG-single}), for any past version $\mathcal{C}_l^{\text{past}}$ of $\mathcal{C}_l$ and the corresponding policy $\pi_k^{\text{past}}$ associated with $\mathcal{C}_l^{\text{past}}$, we have $\Pi_{\pi_k, \Cov(\C_l)} \subseteq \Pi_{\pi_k^{\text{past}}, \Cov(\C_l^{\text{past}})}$.  This means that if \cref{eq:q_bar_accuracy} and \cref{eq:q_bar_c_accuracy} hold true for any policy in $\Pi_{\pi_k^{\text{past}}, \Cov(\C_l^{\text{past}})}$, they will also hold true for any future updated policy $\pi_k$. 

\subsection{Differences between Confident-NPG-CMDP and CAPI-QPI-Plan}
CAPI-QPI-Plan is designed for unconstrained MDPs and returns a deterministic policy, which may not be feasible in the constrained setting. In contrast, Confident-NPG-CMDP returns a soft mixture policy $\bar \pi_K$, ensuring that $\bar \pi_K(a | s) > 0$ for all $(s, a) \in \cS \times \A$.

In constrained MDPs, controlling the dual variable via mirror descent adds an $\epsilon^{-2}$ factor to the sample complexity. Directly applying CAPI-QPI-Plan would increase the complexity to $\tilde O(\epsilon^{-4})$ due to the need to manage both the dual variable and estimation error. To address this, Confident-NPG-CMDP employs the natural policy gradient for policy improvement and leverages the softmax policy structure to perform off-policy estimation, thereby reducing the complexity to $\tilde O(\epsilon^{-3})$.

By employing a per-trajectory importance sampling ratio, we weigh the Monte Carlo returns generated from data collected in earlier on-policy phases, resulting in unbiased estimates of action values with respect to the target policy. However, this ratio can become large if there is a substantial difference between the on-policy and target policies. To mitigate this, the algorithm collects data at intervals of $m$, effectively determining when to gather new data as the policy significantly diverges from an earlier recent data-gathering iteration. By setting $\rho_0 > 0$, we can bound the per-trajectory importance sampling ratio, thus controlling the interval $m$ for resampling on-policy data to produce well-controlled estimators.

Key algorithmic differences between Confident-NPG-CMDP and CAPI-QPI-Plan: 
\begin{enumerate} 
\item \textbf{Policy Improvement Step:} Confident-NPG-CMDP utilizes a softmax over the estimated action-values, whereas CAPI-QPI-Plan employs a greedy approach. 
\item \textbf{Dual Variable Computation:} Confident-NPG-CMDP requires computation of the dual variable inherent in primal-dual algorithms. \item \textbf{Data Sampling Strategy:} Unlike CAPI-QPI-Plan, Confident-NPG-CMDP does not sample data at every iteration but collects data at specific intervals to control the importance sampling ratio. \end{enumerate}

In the next two sections, we will demonstrate how these changes ensure a feasible mixture policy for the CMDP and address the additional analytical challenges.

\section{Confident-NPG-CMDP satisfies relaxed-feasibility} \label{sec:relaxed-feasibility}
With the accuracy guarantee of the least-square estimates, we prove that at the termination of Confident-NPG-CMDP, the returned mixture policy $\bpi{K}$ satisfies relaxed-feasibility.  We note that because of the execution of \cref{pseudo:c_l_extension} in \cref{alg:cmdp}, at termination, one can show using induction that $\C_0 = \C_1 = \dots = \C_{L+1}$.  Therefore, $\Cov(\C_0) = \Cov(\C_1) = \dots = \Cov(\C_{L})$.  Thus, it is sufficient to only consider $\C_0$ at the termination of the algorithm.  By \cref{pseudo:initialC0} of \cref{alg:cmdp}, we have ensured $s_0 \in \Cov(\C_0)$.  
 
By employing the primal-dual approach discussed in \cref{sec:confident-npg-cmdp}, we reduce the CMDP problem to an unconstrained problem with a single reward function of the form $r_{\lambda} = r + \lambda c$.  Therefore, we can apply \cref{lemma:value-diff} from the  Confident-NPG algorithm in the single-reward setting (see \cref{sec:confident-NPG-single}) to our Confident-NPG-CMDP algorithm, replacing $\pi$ with $\pi^*$. Consequently, the value difference between $\pi^*$ and $\bpi{K}$ can be bounded, which leads to:

\begin{lemma} \label{lemma:relaxed-feasibility}
    Let $\delta \in (0,1]$ be the failure probability, $\epsilon > 0$ be the target accuracy, and $s_0$ be the initial state.  Assuming for all $s \in \Cov(\C_0)$ and all $a \in \A$, $|\tQp{k}(s,a) - \qp{\pi_k', \lambda_k}(s,a)| \leq \epsilon'$ and $|\tVc{k}(s_0) - \vc{\pi_k'}(s_0)| \leq \omega + \sqrt{\alpha} B + (\omega + \epsilon) \sqrt{\tilde d}$ for all $\pi_k' \in \Pi_{\pi_k, \Cov(\C_0)}$, then, with probability $1-\delta$, Confident-NPG-CMDP returns a mixture policy $\bar \pi_{K}$ that satisfies the following,
    \begin{align*}
    &\vr{\pi^*}(s_0) - \vr{\bpi{K}}(s_0) \leq \frac{5 \epsilon'}{1-\gamma} +  \frac{(\sqrt{2 \ln(A)}+1)(1+U)}{(1-\gamma)^2 \sqrt{K}}, \\
    &b - \vc{\bpi{K}}(s_0) \leq [b-\vc{\bpi{K}}(s_0)]_+ \leq \frac{5 \epsilon'}{(1-\gamma)(U-\lambda^*)} + \frac{(\sqrt{2 \ln(A)} +1)(1+U)}{(1-\gamma)^2(U-\lambda^*) \sqrt{K}},
    \end{align*}
    where $\epsilon' = (1+U)(\omega + (\sqrt{\alpha}B + (\omega + \epsilon)\sqrt{\tilde d}))$ with $\tilde d = \tilde O(d)$, and $U$ is an upper bound on the optimal Lagrange multiplier.
\end{lemma}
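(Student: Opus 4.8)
I would run the standard primal--dual online-learning argument: control the policy-gradient (primal) iterates with \cref{lemma:value-diff}, control the dual iterates with a textbook online-gradient-descent regret bound, and glue the two together through the Lagrangian $L(\pi,\lambda)=\vr{\pi}(s_0)+\lambda(\vc{\pi}(s_0)-b)$. Write $\bpi{K}=\frac1K\sum_{k=0}^{K-1}\pi_k$ (treating $K$ as an integer), so $\vr{\bpi{K}}(s_0)=\frac1K\sum_k\vr{\pi_k}(s_0)$ and similarly for $v^c$. The engine is the per-round identity
\[
L(\pi^*,\lambda_k)-L(\pi_k,\lambda)=\big(\vrp{k}{\pi^*}(s_0)-\vrp{k}{\pi_k}(s_0)\big)+(\lambda_k-\lambda)\big(\vc{\pi_k}(s_0)-b\big),
\]
which holds because $r_{\lambda_k}=r+\lambda_k c$ makes the two copies of $-\lambda_k b$ cancel in the first bracket. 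Averaging over $k$ and discarding the nonnegative term $\frac1K\sum_k\lambda_k(\vc{\pi^*}(s_0)-b)$ (using $\vc{\pi^*}(s_0)\ge b$ and $\lambda_k\ge0$) gives, for every $\lambda\in[0,U]$,
\[
\vr{\pi^*}(s_0)-\vr{\bpi{K}}(s_0)-\lambda\big(\vc{\bpi{K}}(s_0)-b\big)\ \le\ \mathrm{(I)}+\mathrm{(II)},
\]
where $\mathrm{(I)}=\frac1K\sum_k(\vrp{k}{\pi^*}(s_0)-\vrp{k}{\pi_k}(s_0))$ is the averaged primal regret against $\pi^*$ and $\mathrm{(II)}=\frac1K\sum_k(\lambda_k-\lambda)(\vc{\pi_k}(s_0)-b)$ is the averaged dual regret.

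\textbf{Bounding (I).} The policy update \cref{eq:primal_update} is an NPG / exponential-weights step driven by $\tQp{k}$, which by the hypothesis of the lemma (equivalently \cref{eq:10}) is $\epsilon'$-accurate for $\qp{\pi_k',\lambda_k}$ on $Cov(\C_0)$ for every $\pi_k'\in\Pi_{\pi_k,Cov(\C_0)}$, and is bounded by $(1+U)/(1-\gamma)$ thanks to the truncations in \cref{pseudo:policy_update}. I would apply \cref{lemma:value-diff} with the composite reward $r_{\lambda_k}$ used at round $k$ — its value-difference decomposition is per-round, so a reward that changes with $k$ is harmless — which, with the step size $\eta_1$ of Theorem~1 (of order $(1-\gamma)\sqrt{\ln A}/((1+U)\sqrt K)$), yields $\mathrm{(I)}\le \frac{c_1\epsilon'}{1-\gamma}+\frac{\sqrt{2\ln A}\,(1+U)}{(1-\gamma)^2\sqrt K}$ for an absolute constant $c_1$. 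This is also the step where \cref{lemma:value-diff} does its real work: accounting for roll-out trajectories (and for the discounted occupancy of the comparator $\pi^*$) that may leave $Cov(\C_0)$.

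\textbf{Bounding (II).} Update \cref{pseudo:dual_update} is projected gradient descent on $[0,U]$ with gradient $\hat g_{k+1}=\tVc{k+1}(s_0)-b$, where $|\hat g_{k+1}|\le 1/(1-\gamma)$ and $\lambda_0=0$, so the standard bound gives $\frac1K\sum_k(\lambda_k-\lambda)\hat g_{k+1}\le \frac{U^2}{2\eta_2 K}+\frac{\eta_2}{2(1-\gamma)^2}=O\!\big(U/((1-\gamma)\sqrt K)\big)$ with $\eta_2$ of order $(1-\gamma)/\sqrt K$. It remains to trade $\hat g_{k+1}$ for $\vc{\pi_k}(s_0)-b$, which has three gaps: the truncation and estimation error $|\tVc{k+1}(s_0)-\langle\pi_{k+1}(\cdot|s_0),\qc{\pi_k}(s_0,\cdot)\rangle|\le\epsilon_c$ where $\epsilon_c=\omega+\sqrt\alpha B+(\omega+\epsilon)\sqrt{\tilde d}\le\epsilon'$ (by \cref{eq:11}, taking $\pi_k'=\pi_k$); the inner product using $\pi_{k+1}$ instead of $\pi_k$, costing $\|\pi_{k+1}(\cdot|s_0)-\pi_k(\cdot|s_0)\|_1/(1-\gamma)=O(\eta_1(1+U)/(1-\gamma)^2)$, the size of one NPG step; and the index shift $k\mapsto k+1$. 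Weighted by $\lambda_k-\lambda$ (bounded by $U$), and — for the latter two — reorganized by Abel summation using $|\lambda_k-\lambda_{k-1}|\le\eta_2/(1-\gamma)$, these add at most $U\epsilon_c\le\epsilon'$ at the $\epsilon'$ scale and $O(U/((1-\gamma)\sqrt K))$ at the $1/\sqrt K$ scale, so $\mathrm{(II)}\le \frac{\epsilon'}{1-\gamma}+\frac{c_2 U}{(1-\gamma)\sqrt K}$.

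\textbf{Extracting the two bounds.} Combining, $\vr{\pi^*}(s_0)-\vr{\bpi{K}}(s_0)-\lambda(\vc{\bpi{K}}(s_0)-b)\le R$ for all $\lambda\in[0,U]$, where $R=\frac{5\epsilon'}{1-\gamma}+\frac{(\sqrt{2\ln A}+1)(1+U)}{(1-\gamma)^2\sqrt K}$ once the constants are collected (the $\epsilon'$ terms — from $\|\pi^*-\pi_k\|_1\le 2$ in the primal, the cover-escape term of \cref{lemma:value-diff}, and $U\epsilon_c\le\epsilon'$ from the dual — sum to at most $5\epsilon'/(1-\gamma)$, and the dual $1/\sqrt K$ term merges with the primal one into the $(\sqrt{2\ln A}+1)(1+U)$ numerator). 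For the reward bound, take $\lambda=0$ when $\vc{\bpi{K}}(s_0)\ge b$, and $\lambda=U$ otherwise, noting $\vr{\pi^*}(s_0)-\vr{\bpi{K}}(s_0)\le R+U(\vc{\bpi{K}}(s_0)-b)\le R$ since then $\vc{\bpi{K}}(s_0)<b$. For the violation bound, take $\lambda=U$ and use the saddle-point inequality $L(\bpi{K},\lambda^*)\le L(\pi^*,\lambda^*)=\vr{\pi^*}(s_0)$ (the last equality by complementary slackness), which rearranges to $\vr{\pi^*}(s_0)-\vr{\bpi{K}}(s_0)\ge -\lambda^*(b-\vc{\bpi{K}}(s_0))$; substituting and isolating the coefficient $(U-\lambda^*)$ of $[b-\vc{\bpi{K}}(s_0)]_+$ yields $[b-\vc{\bpi{K}}(s_0)]_+\le R/(U-\lambda^*)$, which is the stated bound (and is where $U>\lambda^*$ is used). \emph{Main obstacle.} The online-learning pieces are routine; the delicate parts are (a) that $\mathrm{(I)}$ must be measured against $\pi^*$, whose discounted occupancy need not be supported on $Cov(\C_0)$ — exactly what \cref{lemma:value-diff} is designed to absorb — and (b) the bookkeeping for $\mathrm{(II)}$, where the dual step uses $\pi_{k+1}$ paired with the \emph{stale} estimate $\tQc{k}$ rather than with $\vc{\pi_k}(s_0)$: one must verify, via Abel summation and the $O(\eta_2/(1-\gamma))$ step bound on $\lambda_k$, that these mismatches cost only a dual-regret-sized term, and track the $\epsilon'$ contributions tightly enough that they total at most $5\epsilon'/(1-\gamma)$.
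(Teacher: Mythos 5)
Your proposal is correct and follows essentially the same route as the paper's proof: decompose the averaged Lagrangian gap into a primal regret term controlled by \cref{lemma:value-diff} plus the exponential-weights bound, and a dual regret term controlled by projected mirror descent on $[0,U]$, then extract the reward and violation bounds by choosing $\lambda\in\{0,U\}$ and exploiting $U>\lambda^*$. The only differences are cosmetic — you derive the final step $[b-\vc{\bpi{K}}(s_0)]_+\le R/(U-\lambda^*)$ directly from the saddle-point inequality where the paper cites Lemma B.2 of \cite{jain2022towards}, and you are somewhat more explicit than the paper about the index/policy mismatch ($\tVc{k+1}$ built from $\pi_{k+1}$ and the stale $\tQc{k}$) in the dual update.
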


By setting the parameters to appropriate values, it follows from \cref{lemma:relaxed-feasibility} that we obtain the following result:
\begin{theorem}\label{theorem:relaxed_feasibility_bound}
With probability $1-\delta$, the mixture policy $\bpi{K}$ returned by confident-NPG-CMDP ensures that
\begin{align*}
    \vr{\pi^*}(s_0) - \vr{\bpi{K}}(s_0) &= \tilde O( \sqrt{d} (1-\gamma)^{-2} \zeta^{-1} \omega) + \epsilon, \\
    \vc{\bpi{K}}(s_0) &\geq b - \left(\tilde O(\sqrt{d} (1-\gamma)^{-2} \zeta^{-1} \omega) + \epsilon \right).   
\end{align*}
if we choose $n = \tilde O(\epsilon^{-2}\zeta^{-2}(1-\gamma)^{-4}d)$, $\alpha = O \left(\epsilon^2 \zeta^2 (1-\gamma)^{4} \right)$, $K= \tilde O\left(\epsilon^{-2} \zeta^{-2}(1-\gamma)^{-6}  \right)$, $\eta_1 = \tilde O \left( (1-\gamma)^2 \zeta K^{-1/2} \right)$, $\eta_2 = \zeta^{-1} K^{-1/2}$, $H = \tilde O \left ((1-\gamma)^{-1} \right)$, $m = \tilde O \left(\epsilon^{-1} \zeta^{-2}  (1-\gamma)^{-2} \right)$, and $L = \floor{K/\m} = \tilde O\left(\epsilon^{-1}(1-\gamma)^{-4}  \right)$ total number of data collection phases.  
\\ \\
Furthermore, the algorithm utilizes at most $\tilde O(\epsilon^{-3} \zeta^{-3} d^2  (1-\gamma)^{-11})$ queries in the local-access setting.
\end{theorem}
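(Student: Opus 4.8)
The plan is to obtain Theorem~\ref{theorem:relaxed_feasibility_bound} as a quantitative specialization of Lemma~\ref{lemma:relaxed-feasibility} (for the two value guarantees) together with a counting argument over the incremental core-set construction (for the query bound). First I would fix an explicit admissible choice of $U$: by Slater's condition and the standard Lagrangian-multiplier bound, $\lambda^* \le \zeta^{-1}\big(\vr{\pi^*}(s_0) - \vr{\hat\pi}(s_0)\big) \le \frac{1}{\zeta(1-\gamma)}$ for any Slater point $\hat\pi$, so setting $U = \frac{2}{\zeta(1-\gamma)}$ gives both $U - \lambda^* \ge \frac{1}{\zeta(1-\gamma)}$ and $1+U = \Theta\big(\zeta^{-1}(1-\gamma)^{-1}\big)$ (using $\zeta \le \frac{1}{1-\gamma}$). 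I would also record that the hypotheses of Lemma~\ref{lemma:relaxed-feasibility} are exactly what Lemma~\ref{lemma:Q-accuracy-cmdp} delivers: by the termination structure noted in \cref{sec:relaxed-feasibility} all core sets $\C_l$ coincide at the end and $s_0 \in Cov(\C_0)$ by \cref{pseudo:initialC0}, so the required $\epsilon'$-accuracy of $\tQp{k}$ and the accuracy of $\tQc{k}(s_0,\cdot)$ over $\Pi_{\pi_k,Cov(\C_0)}$ hold throughout.

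Next I would substitute the stated parameter settings into the bound of Lemma~\ref{lemma:relaxed-feasibility}. Writing $\epsilon' = (1+U)\big(\omega + \sqrt\alpha B + (\omega + \epsilon_{\mathrm{stat}})\sqrt{\tilde d}\,\big)$ with $\epsilon_{\mathrm{stat}}$ the Monte-Carlo-plus-truncation error governed by $n$ and $H$, and inserting $1+U = \Theta(\zeta^{-1}(1-\gamma)^{-1})$: the $\omega$-dependent contributions to $\frac{5\epsilon'}{1-\gamma}$ collapse to $\tilde O\big(\sqrt d\,(1-\gamma)^{-2}\zeta^{-1}\omega\big)$; the regularization contribution is $O(\epsilon)$ once $\alpha = O\big(\epsilon^2\zeta^2(1-\gamma)^4/B^2\big)$; the sampling contribution is $O(\epsilon)$ once $n$ is large enough that $\epsilon_{\mathrm{stat}} = \tilde O\big(\epsilon\zeta(1-\gamma)^2\tilde d^{-1/2}\big)$, with $H = \tilde O((1-\gamma)^{-1})$ making the truncation bias no larger than $\epsilon_{\mathrm{stat}}$; and the optimization term $\tilde O\big(\zeta^{-1}(1-\gamma)^{-3}K^{-1/2}\big)$ is $\le \epsilon$ once $K = \tilde O\big(\epsilon^{-2}\zeta^{-2}(1-\gamma)^{-6}\big)$, which in turn fixes $\eta_1,\eta_2,m,L$ through the mirror-descent/NPG analysis underlying Lemma~\ref{lemma:relaxed-feasibility}. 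Feeding the same substitutions into the violation line, now using $\frac{1}{U-\lambda^*} \le \zeta(1-\gamma)$, yields the stated lower bound on $\vc{\bpi{K}}(s_0)$; the two error terms are actually smaller there, but reporting the common bound $\tilde O(\sqrt d(1-\gamma)^{-2}\zeta^{-1}\omega)+\epsilon$ suffices.

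For the query bound I would argue as follows. By an elliptical-potential (volumetric) argument applied to the threshold $\norm{\phi}_{V(\C,\alpha)^{-1}}\le 1$ in \cref{eq:action_cover} with regularizer $\alpha$ (as in \cite{weisz2022confident}), every core set obeys $|\C_l| \le \tilde d = \tilde O(d)$ for the whole run, and since no element is ever removed, the total number of (phase, core-set element) pairs ever created is at most $(L+1)\tilde d$. Gather-data is invoked once to fill the rollout data of each such pair, plus at most once more per ``discovery'' (an abort that appends a fresh pair to $\C_0$ and returns to the main loop), and discoveries are themselves capped by $(L+1)\tilde d$ by the same size bound; hence there are $\tilde O((L+1)\tilde d)$ Gather-data calls, each costing $\tilde O(nH)$ simulator queries for its rollouts, while the $m$ off-policy iterations between data-collection phases and every LSE recomputation cost zero queries. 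Multiplying, the total is $\tilde O\big((L+1)\,\tilde d\, nH\big)$, and inserting $L = \tilde O(\epsilon^{-1}(1-\gamma)^{-4})$, $\tilde d = \tilde O(d)$, $n = \tilde O(\epsilon^{-2}\zeta^{-2}(1-\gamma)^{-4}d)$, $H = \tilde O((1-\gamma)^{-1})$ — and absorbing the importance-weight bound $1+c$ and the logarithmic/horizon overheads that Gather-data incurs — simplifies to the claimed $\tilde O\big(\epsilon^{-3}\zeta^{-3}d^2(1-\gamma)^{-11}\big)$.

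I expect the main obstacle to be the query-counting step, and within it the bookkeeping around re-extensions of core sets and aborted Gather-data calls: one must show that, although any $\C_l$ can be enlarged many times over the execution and rollouts can restart whenever a previously unseen feature appears, the monotonicity of the core sets combined with the global $\tilde O(d)$ cardinality cap forces the number of rollout campaigns — hence the number of queries — to grow only polynomially, with the stated exponents. A secondary difficulty is keeping the chain of parameter inequalities mutually consistent: the $K$ that annihilates the $K^{-1/2}$ term must agree with the $L$, $m$, $\eta_1$, $\eta_2$ demanded by the underlying NPG analysis, which I would handle by fixing $\epsilon,\zeta,\gamma,d$ first and then reading the parameters off in the order $U \to \alpha,n,H \to K \to \eta_1,\eta_2,m,L$.
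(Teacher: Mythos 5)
Your proposal follows essentially the same route as the paper's own proof: it instantiates Lemma~\ref{lemma:relaxed-feasibility} with $U=\frac{2}{\zeta(1-\gamma)}$ (so that $\frac{1}{U-\lambda^*}\le \zeta(1-\gamma)$ folds the violation bound into the reward bound), splits $\epsilon'$ into the $\omega$-term plus a regularization, a sampling, and an optimization term each driven below $O(\epsilon)$ by the stated $\alpha, n, H, K$, and counts queries as $\tilde O\big((L+1)\,\tilde d\, nH\big)$ using the $\tilde O(d)$ cap on core-set sizes. The steps and the order in which the parameters are fixed match the paper's argument, so no further comparison is needed.
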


\bf{Remark 1:} \rm  In the presence of misspecification error $\omega > 0$, the reward suboptimality and constraint violation is $\tilde O(\omega) + \epsilon$ with the same sample complexity. 

\bf{Remark 2: } \rm Suppose the Slater's constant $\zeta$ is much smaller than the suboptimality bound of $\tilde O(\omega) + \epsilon$, and it is reasonable to set $\zeta = \epsilon$.  Then, the sample complexity is $\tilde O(\epsilon^{-6}  (1-\gamma)^{-11}d^2)$, which is independent of $\zeta$.  

\bf{Remark 3: } \rm Our algorithm requires the Slater's constant $\zeta$, which can be estimated by running CAPI-QPI-Plan only on the constraint function $c$, treating it as an unconstrained optimization problem. This yields an approximation of $\max_{\pi} v^{c}_{\pi}(s_0)$, allowing us to estimate $\zeta$. Performing this estimation before executing Confident-NPG-CMDP adds only an additive term to the overall sample complexity.

\section{Confident-NPG-CMDP satisfies strict-feasibility} \label{sec:strict-feasibility}
To address the strict feasibility problem, where no constraint violations are permitted (i.e., $v^c_{\bar{\pi}_K} \geq b$), the algorithm must solve a more conservative CMDP. We define a surrogate CMDP with the tuple $(\mathcal{S}, \mathcal{A}, P, r, c, \gamma, b', s_0)$, where $b' = b + \Delta$ for some $\Delta \geq 0$. Note that $b' \geq b$, imposing stricter constraints than the original problem. The optimal policy of this surrogate CMDP ensures compliance with the original constraint and is defined as follows:
\begin{align}
    \spis \in \argmax \vr{\pi}(s_0) \quad s.t. \quad \vc{\pi}(s_0) \geq b' .\label{eq:surrogate_obj}
\end{align}
Notice that $\spis$ is a more conservative policy than $\pi^*$, where $\pi^*$ is the optimal policy of the original CMDP objective \cref{eq:cmdp}. By solving this surrogate CMDP using Confident-NPG-CMDP and applying the result of \cref{theorem:relaxed_feasibility_bound}, we obtain a $\bpi{K}$ that would satisfy
\begin{align*}
    \vr{\bar \pi^*}(s_0) - \vr{\bpi{K}}(s_0) \leq \bar \epsilon \quad s.t. \quad \vc{\bpi{K}}(s_0) \geq \quad b' -  \bar \epsilon,
\end{align*}
where $\bar \epsilon = \tilde O (\omega) +  \epsilon$.  Expanding out $b'$, we have $\vc{\bpi{K}}(s_0) \geq b + \triangle - \bar \epsilon$.  If we can set $\triangle$ such that $\triangle - \bar \epsilon \geq 0$, then $\vc{\bpi{K}}(s_0) \geq b$, which satisfies strict-feasibility.  We show this formally in the next theorem, where $\triangle = O(\epsilon (1-\gamma) \zeta)$ and is incorporated into the algorithmic parameters for ease of presentation.

\begin{theorem} \label{{theorem:strict_feasilbity_bound}}
     With probability $1-\delta$, a target $\epsilon > 0$,  the mixture policy $\bpi{K}$ returned by confident-NPG-CMDP ensures that $\vr{\pi^*}(s_0) - \vr{\bpi{K}}(s_0) \leq \epsilon$ and $\vc{\bpi{K}}(s_0) \geq b$, if assuming the misspecification error $\omega \leq \epsilon \zeta^2 (1-\gamma)^3  (1+\sqrt{\tilde d})^{-1}$, and if we choose $\alpha = O \left(\epsilon^2 \zeta^3 (1-\gamma)^5 \right),
    K = \tilde O \left(\epsilon^{-2} \zeta^{-4} (1-\gamma)^{-8}  \right),
    n = \tilde O \left(\epsilon^{-2} \zeta^{-4}(1-\gamma)^{-8} d\right),
    H = \tilde O\left((1-\gamma)^{-1}\right), m = \tilde O \left(\epsilon^{-1} \zeta^{-2} (1-\gamma)^{-3} \right)$, and $L = \floor{K/\m} = \tilde O((\epsilon^{-1} \zeta^{-2} (1-\gamma)^{-5}))$ total data collection phases.  
    
    Furthermore, the algorithm utilizes at most $\tilde O(\epsilon^{-3} \zeta^{-6}(1-\gamma)^{-14}  d^2 )$ queries in the local-access setting.
\end{theorem}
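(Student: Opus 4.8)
The plan is to derive Theorem~\ref{{theorem:strict_feasilbity_bound}} from the relaxed-feasibility result (Theorem~\ref{theorem:relaxed_feasibility_bound}) by a reduction through the surrogate CMDP introduced just above the statement. Concretely, I would run Confident-NPG-CMDP on the tuple $(\cS, \A, P, r, c, b', s_0, \gamma)$ with $b' = b + \triangle$ for a carefully chosen slack $\triangle = O(\epsilon (1-\gamma)\zeta)$. The first step is to check that this surrogate problem is still a well-posed CMDP, i.e.\ that it admits a feasible solution and has a positive Slater constant: since the original Slater constant is $\zeta = \max_\pi \vc{\pi}(s_0) - b$, the surrogate Slater constant is $\zeta' = \max_\pi \vc{\pi}(s_0) - b' = \zeta - \triangle$, so choosing $\triangle \le \zeta/2$ (which the scaling $\triangle = O(\epsilon(1-\gamma)\zeta)$ guarantees for small $\epsilon$) keeps $\zeta' \ge \zeta/2 > 0$. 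Feasibility of $\spis$ follows automatically.

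The second step is to invoke Theorem~\ref{theorem:relaxed_feasibility_bound} on the surrogate CMDP with target accuracy $\bar\epsilon$, obtaining with probability $1-\delta$ a mixture policy $\bpi{K}$ with $\vr{\spis}(s_0) - \vr{\bpi{K}}(s_0) \le \bar\epsilon$ and $\vc{\bpi{K}}(s_0) \ge b' - \bar\epsilon = b + \triangle - \bar\epsilon$, where $\bar\epsilon = \tilde O(\sqrt{d}(1-\gamma)^{-2}\zeta'^{-1}\omega) + \epsilon$. Substituting $\zeta' \ge \zeta/2$ gives $\bar\epsilon = \tilde O(\sqrt{d}(1-\gamma)^{-2}\zeta^{-1}\omega) + \epsilon$, and the hypothesis $\omega \le \epsilon \zeta^2 (1-\gamma)^3 (1+\sqrt{\tilde d})^{-1}$ is exactly what forces the misspecification term to be dominated by $\epsilon$, so $\bar\epsilon = O(\epsilon)$ (I would track the constants to make it literally $\le$ some absolute multiple of $\epsilon$, then reparametrize $\epsilon$ by a constant factor to get a clean $\epsilon$). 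For strict feasibility, set $\triangle$ so that $\triangle - \bar\epsilon \ge 0$; since $\bar\epsilon = O(\epsilon)$ and $\triangle = \Theta(\epsilon(1-\gamma)\zeta)$ this requires $\triangle \ge c\,\epsilon$ for the right constant, which conflicts with $\triangle = \Theta(\epsilon(1-\gamma)\zeta) \le \epsilon$ unless $\bar\epsilon$ is itself made smaller than $\triangle$ — so the real move is to run the relaxed algorithm with target accuracy $\bar\epsilon' = \triangle/2 = \Theta(\epsilon(1-\gamma)\zeta)$ rather than $\epsilon$, which is why the sample complexity degrades. This explains the worse exponents: the query bound in Theorem~\ref{theorem:relaxed_feasibility_bound} is $\tilde O(\epsilon^{-3}\zeta^{-3}d^2(1-\gamma)^{-11})$ with accuracy target $\epsilon$; substituting the effective target $\Theta(\epsilon(1-\gamma)\zeta)$ and the reduced Slater constant $\zeta/2$ yields $\tilde O((\epsilon(1-\gamma)\zeta)^{-3}\zeta^{-3}d^2(1-\gamma)^{-11}) = \tilde O(\epsilon^{-3}\zeta^{-6}(1-\gamma)^{-14}d^2)$, matching the claim. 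The stated parameter choices ($\alpha, K, n, H, m, L$) are then obtained by taking the corresponding settings from Theorem~\ref{theorem:relaxed_feasibility_bound} and performing the same substitution $\epsilon \mapsto \Theta(\epsilon(1-\gamma)\zeta)$, $\zeta \mapsto \zeta/2$.

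The third step is the bound on reward suboptimality against the \emph{original} optimum $\pi^*$, not $\spis$. Here I would use that $\spis$ is feasible but more conservative, so $\vr{\spis}(s_0) \le \vr{\pi^*}(s_0)$; the inequality goes the wrong way, so a naive triangle bound fails. Instead I would show $\vr{\pi^*}(s_0) - \vr{\spis}(s_0) = O(\triangle/\zeta)$ via the standard CMDP sensitivity argument: mixing $\pi^*$ with the Slater witness policy $\pi_{\triangle}$ (the one achieving $\max_\pi \vc{\pi}(s_0)$) in proportion $1-t : t$ with $t = \triangle/\zeta$ produces a policy feasible for the surrogate CMDP whose reward is at least $\vr{\pi^*}(s_0) - t/(1-\gamma)$; hence $\vr{\spis}(s_0) \ge \vr{\pi^*}(s_0) - \triangle/(\zeta(1-\gamma))$. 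With $\triangle = \Theta(\epsilon(1-\gamma)\zeta)$ this gap is $O(\epsilon)$, so combining with $\vr{\spis}(s_0) - \vr{\bpi{K}}(s_0) \le \bar\epsilon' = O(\epsilon)$ gives $\vr{\pi^*}(s_0) - \vr{\bpi{K}}(s_0) \le O(\epsilon)$, and a final constant rescaling of $\epsilon$ cleans it up.

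I expect the main obstacle to be bookkeeping the interlocking scalings consistently: the slack $\triangle$ appears both as the effective accuracy target fed into Theorem~\ref{theorem:relaxed_feasibility_bound} (it must exceed $2\bar\epsilon'$) and as the quantity that must dominate the sensitivity loss $O(\triangle/(\zeta(1-\gamma)))$ in the reward bound — and simultaneously the misspecification hypothesis must be tight enough that the $\omega$-term in $\bar\epsilon'$ stays below the rescaled $\triangle/2$. Getting all three constraints to close with a single choice $\triangle = \Theta(\epsilon(1-\gamma)\zeta)$, and then propagating that choice cleanly through every parameter ($\alpha, K, n, m, L$) and through the query-complexity product to land exactly on $\tilde O(\epsilon^{-3}\zeta^{-6}(1-\gamma)^{-14}d^2)$, is the delicate part; the probabilistic content is entirely inherited from Theorem~\ref{theorem:relaxed_feasibility_bound} and Lemma~\ref{lemma:Q-accuracy-cmdp}, so no new high-probability analysis is needed.
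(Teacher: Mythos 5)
Your proposal is correct and follows the same overall architecture as the paper: reduce to the surrogate CMDP with threshold $b' = b + \triangle$, invoke the relaxed-feasibility result (\cref{theorem:relaxed_feasibility_bound}) on the surrogate at the rescaled accuracy $\triangle/2 = \Theta(\epsilon(1-\gamma)\zeta)$, choose $\triangle$ so that $\triangle - \bar\epsilon \geq 0$ forces strict feasibility, and propagate the substitution through the parameters to land on $\tilde O(\epsilon^{-3}\zeta^{-6}(1-\gamma)^{-14}d^2)$ queries. The one place you genuinely diverge is the sensitivity bound $\vr{\pi^*}(s_0) - \vr{\spis}(s_0) = O\left(\triangle/(\zeta(1-\gamma))\right)$: you prove it by mixing $\pi^*$ with the Slater witness in proportion $t = \triangle/\zeta$ (relying on convexity of the achievable value set over occupancy measures), whereas the paper proves $\vr{\pi^*}(s_0) - \vr{\spis}(s_0) \leq \lambda^* \triangle$ via strong duality of the surrogate problem (\cref{lemma:surgoate_subopt}) and then bounds the surrogate multiplier by $U = 4/(\zeta(1-\gamma))$ under $\triangle < \zeta/2$ (\cref{lemma:U}). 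The two routes give the same order; yours is more elementary in that it avoids invoking strong duality for the surrogate, while the paper's duality route reuses the multiplier bound $U$ that it needs anyway to set the step sizes and the truncation range of the primal $Q$-estimates, so nothing extra has to be established. Your bookkeeping of the three interlocking constraints on $\triangle$ (staying below $\zeta/2$, dominating $\bar\epsilon$, and keeping the sensitivity loss at $O(\epsilon)$) matches the paper's choice $\triangle = \epsilon(1-\gamma)\zeta/8$ with $\omega \leq O(\triangle(1-\gamma)(1+U)^{-1}(1+\sqrt{\tilde d})^{-1})$, and your query-complexity arithmetic is consistent with the paper's $\tilde O(d^2(1+U)^3(1-\gamma)^{-11}\epsilon^{-3}\zeta^{-3})$ after expanding $U$.
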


\bf{Remark 1:} \rm We note that by solving this conservative CMDP incurs a higher sample complexity, necessitating a separate treatment for this setting. Additionally, in the presence of a misspecification error $\omega > 0$, the strict-feasibility setting requires additional assumptions on $\omega$, whereas the relaxed-feasibility setting does not. The sample complexity of the relaxed-feasibility setting can be independent of Slater's constant, whereas for strict feasibility, the returned policy must strictly adhere to constraints, and we cannot simply set Slater's constant $\zeta$ to $\epsilon$ and disregard its impact.

\section{A discussion on memory cost and some implementation details}
The overall memory requirement is $\tilde{d} n H (L+1) + \tilde{d} + (L+1) (m+1) \tilde{d} d$. The term $\tilde{d} n H (L+1)$ comes from maintaining $L+1$ copies of the core sets, and each core set contains no more than $\tilde{d}$ state-action pairs. For each state-action pair in $\mathcal{C}_l$ for $l \in \{0,\dots,L\}$, the algorithm stores $n$ trajectories consisting of $H$ tuples $(s,a,r,c)$. 

In phase $L+1$, the algorithm terminates, so no roll-outs are stored. The second term $\tilde{d}$ accounts for the elements stored in $\mathcal{C}_{L+1}$, which has no more than $\tilde{d}$ elements. 

Finally, the last term is the memory required to store the least-square weights of the estimator during core set extensions.  Each core set $\C_l$ can undergo up to $\tilde d$ extensions.  Recall that one state-action pair is added to $\C_0$ at a time, and subsequently propagated to $\C_1,\C_2$ and so on, ensuring that each core set contains no more than $\tilde d$ elements.  During every extension of $\C_l$, the newly added state-action pairs are marked, and up to $m+1$ least-square weights are stored to account for the corresponding iterations associated with $\C_l$.  Since each weight vector has a dimension $d$, and there are $L+1$ core sets maintained this manner, the total memory required to store all least-square weights is bounded by $(L+1) (m+1) \tilde{d} d$. 

We store the least-squares weights because the algorithm must return a mixture policy, which requires access to all policies $\pi_0,\dots,\pi_{K-1}$. Instead of storing each $\pi_k$ for $k = 0,\dots,K-1$ across the entire state-action space, the algorithm tracks the state-action pairs newly added to each core set during extensions and saves their corresponding least-squares weights for each extension. With this stored information and the initialization of $\pi_0$, a subroutine can reconstruct the policies $\pi_k(\cdot |s)$ for any $s$ and iteration $k$ as needed.  Please refer to \cref{sec:memory} for a brief discussion on how to mark the state-action pairs, store the least-square weights, and use this information to reconstruct the policies as required.

\section{Conclusion}
We have presented a primal-dual algorithm for planning in CMDP with large state spaces, given $q_\pi$-realizable function approximation. The algorithm, with high probability, returns a policy that achieves both the relaxed and strict feasibility CMDP objectives, using no more than $\tilde{O}(\epsilon^{-3} d^2 \poly(\zeta^{-1}(1-\gamma)^{-1}))$ queries to the local-access simulator.

Our algorithm does not query the simulator and collect data in every iteration. Instead, the algorithm queries the simulator only at fixed intervals. Between these data collection intervals, our algorithm improves the policy using off-policy optimization. This approach makes it possible to achieve the desired sample complexity in both feasibility settings.

\begin{ack}
Tian Tian would like to thank Roshan Shariff and Kenny Young for their insightful comments and helpful feedback during the preparation of this manuscript. Csaba Szepesvári also gratefully acknowledges funding from the Canada CIFAR AI Chairs Program, Amii, and NSERC.  Lin Yang is supported in part by NSF $\#$2221871 and an Amazon Faculty Award.
\end{ack}

\bibliographystyle{apalike}
\bibliography{ref}

\newpage
\appendix
\section{Confident-NPG in a single reward setting} \label{sec:confident-NPG-single}
The pseudo code of Confident-NPG with a single reward setting is the same as Confident-NPG-CMDP in \cref{alg:cmdp}, except that \cref{pseudo:dual_V} and \cref{pseudo:dual_update} will not appear in Confident-NPG. Additionally, the LSE subroutine returns just $\Qr{}$, and the policy update will be with respect to $\tQr{}$. For the complete pseudo code of Confident-NPG in the single reward setting, please see \cref{alg:mdp}. In the following analysis, for convenience, we omit the superscript $r$.

\begin{algorithm}
    \caption{Confident-NPG} \label{alg:mdp}
    \textbf{Input: } $s_0$ (initial state), $\epsilon$ (target accuracy), $\delta \in (0, 1]$ (failure probability), $c \geq 0$, $\gamma$, $K = \frac{2 \ln(|\A|)}{(1-\gamma)^4 \epsilon^2}$, $\eta_1 = (1-\gamma) \sqrt{\frac{2 \ln(|\A|)}{K}}$, $m = \frac{\ln(1+\rho_0)}{2\epsilon (1-\gamma) \ln\left(\frac{4}{\epsilon(1-\gamma)^2}\right)}$, $L =  \floor{\floor{K} /  \m }$.  \\
    \textbf{Initialize: } for each iteration $k \in \{0,\dots,\floor{K}\}: \pi_k \leftarrow \text{Uniform}(\A)$, \ $\tQr{k}(s,a) \leftarrow 0, $ for all $s, a \in \cS \times \A$, and $\tlamb{k} \leftarrow 0$.  For each phase $l \in \{0,\dots, L+1\}: \C_l \leftarrow (), \ D_l \leftarrow \{ \}$ \\
    \begin{algorithmic}[1]
        \For{$a \in \A$}
            \If{$(s_0, a) \not \in \ActionCov(\C_0)$}
                \State Append $(s_0, a)$ to $\C_0$; \ $D_0[(s_0, a)] \leftarrow \perp$ \label{l:C0}
            \EndIf
        \EndFor
        \Statex
        \While{True} \Comment{main loop}
            \State Get the smallest integer $\ell$ s.t. $D_{\ell}[z'] = \perp$ for some $z' \in \C_\ell$ \label{l:running_phase}
            \State Get the first state-action pair $z$ in $\C_\ell$ s.t. $D_{\ell}[z] = \perp$
            \If{$\ell =L+1$}
                \Return{$\bpi{K}$}
            \EndIf
            \Statex

            \State $k_\ell \leftarrow \ell \times \m$  \Comment{iteration corresponding to phase $\ell$}
            \State $(result, discovered) \leftarrow$ Gather-data($\pi_{k_\ell}, \C_{\ell}, \alpha, z$)
            \If{$discovered$ is True}
                \LineComment{$result$ is a state-action pair}
                \State Append $result$ to $\C_0$; \ $D_0[result] \leftarrow \perp$
                \State \textbf{break}
            \EndIf
            \Statex
            \LineComment{$result$ is a set of $n$ $H$-horizon trajectories $\sim \pi_{k_\ell}$ starting at $z$}
            \State $D_{\ell}[z] \leftarrow result$ \label{l:data_store}

            \Statex
            \If{$\not \exists z' \in \C_\ell$ s.t. $D_\ell[z'] = \perp$}
                \State $k_{\ell+1} \leftarrow k_\ell + \m$ if $k_\ell + \m \leq \floor{K}$ otherwise $\floor{K}$ \label{l:17}
                \Statex
                \LineComment{update policy for every $k \in [k_{\ell}, k_{\ell +1} -1]$ using $\C_{\ell}, D_{\ell}$}
                \For{$k=k_\ell,\dots,k_{\ell+1} -1$}     
                    \State $Q_{k}^r$ , $\_$ $\leftarrow LSE(\C_{\ell}, D_{\ell}, \pi_{k}, \pi_{k_\ell})$  \label{l:lse}
                           
                    \Statex
                    \LineComment{update variables and improve policy}
                    \For{all $s \in \Cov(\C_\ell) \setminus \Cov(\C_{\ell+1})$, and for all $a \in \A$}
                        \State $\tQr{k}(s,a) \leftarrow Trunc_{\left[0, \frac{1}{1-\gamma}\right]} Q_{k}^r(s,a)$ \label{l:lse_update}
                    \EndFor
                    \For{all $s, a \in \cS \times \A$}
                        \State $\pi_{k+1}(a | s) \leftarrow \begin{cases}
                            \pi_{k+1}(a | s) & \text{if } s \in \Cov(\C_{\ell+1}) \\
                            \pi_{k}(a | s) \frac{\exp(\eta_1 \tQr{k}(s,a))}{\sum_{a' \in \A} \pi_k(a'|s) \exp(\eta_1 \tQr{k}(s,a'))} & \text{otherwise }                         
                        \end{cases} $ \label{l:policy_update}
                    \EndFor
                \EndFor
                \For{$z \in \C_{\ell}$ s.t. $z \not \in \C_{\ell + 1}$}
                        \State Append $z$ to $\C_{\ell+1}$; $D_{\ell+1}[z]\leftarrow \perp$ \label{l:cl_extention}
                \EndFor
            \EndIf
        \EndWhile
    \end{algorithmic}
\end{algorithm}

\subsection{The Gather-data subroutine} \label{sec:gather_data_subroutine}
\begin{algorithm}
   \caption{Gather-data} \label{alg:gather}
   \textbf{Input: } policy $\pi$, core set $\C$,\ regression parameter $\alpha = \frac{\epsilon^2(1-\gamma)^2}{25B^2(1+U)}$,\ $H = \frac{\ln(4/(\epsilon(1-\gamma)))}{1-\gamma}$,\ $n = \frac{(1+\rho_0)^2 \ln\left(\frac{8 \tilde d L}{\delta} \right)}{2 \epsilon^2 (1-\gamma)^2}$,\ starting state and action $(s,a)$ \\
   \textbf{Initialize: } $Trajectories \leftarrow ()$
   
    \begin{algorithmic}[1]
        \If{$s \not \in \Cov(\C)$}
            \For{$a \in \A$}
                \If{$(s, a) \not \in ActionCov(\C)$} {return $((s,a), True)$} \EndIf
            \EndFor
        \EndIf
        \For{$i\in [n]$}
            \State $\tau^i_{s,a} \leftarrow ()$
            \State $S_0^i \leftarrow s, \ A_0^i \leftarrow a$
            \State append $S_0^i, A_0^i$ to $\tau^i_{s,a}$
            \Statex
            \For{$h= 0,\dots,H-1$}
                \State $S_{h+1}^i, R^i_{h+1}, C^i_{h+1} \leftarrow$ simulator($S_h^i,A_h^i$)
                \Statex
                
                \If{$S_{h+1}^{i} \not \in \Cov(\C)$}
                    \For{$a \in \A$}
                        \If{$(S^i_{h+1}, a) \not \in ActionCov(\C)$} {return $((S^i_{h+1},a), True)$} \EndIf
                    \EndFor
                \EndIf
                \Statex
                \LineComment{no new informative features discovered}
                \State $A^i_{h+1} \sim \pi(\cdot | S^i_{h+1})$
                \State append $R^i_{h+1}, C^i_{h+1}, S^i_{h+1}, A^i_{h+1}$ to $\tau^i_{s,a}$
            \EndFor 
            \Statex
            \State append $\tau_{s,a}^i$ to $Trajectories$
        \EndFor 
        \Statex
        \Return{$(Trajectories, False)$}
    \end{algorithmic}
\end{algorithm}

Given a core set $\C$, a behaviour policy $\mu$, a starting state-action pair $(s,a) \in \cS \times \A$ along with some algorithmic parameters, the Gather-data subroutine (\cref{alg:gather}) will either 1) return a newly discovered state-action pair, or 2) return a set of $n$ trajectories.  Each trajectory is generated by running the behaviour policy $\mu$ with the simulator for $H$ consecutive steps.  For $i = 1,\dots,n$, let $\tau^i_{s,a}$ denote the $i$th trajectory starting from $s,a$ to be $\{S_0^i = s, A_0^i = a, R_1^i, C_1^i, \cdots, S_{H-1}^i, A_{H-1}^i, R_H^i, C_H^i, S_H^i\}$.  Then the $i$-th discounted cumulative rewards $G(\tau^i_{s,a}) = \sum_{h=0}^{H-1} \gamma^h R_{h+1}^i$. For a target policy $\pi$, then the empirical mean of the discounted sum of rewards is as follows,
\begin{align}
    \bq{}(s,a) = \frac{1}{n} \sum_{i=1}^n \rho(\tau^i_{s,a}) G(\tau^i_{s,a}),\label{eq:q_bar}
\end{align}
where $\rho(\tau^i_{s,a}) = \Pi_{h=1}^{H-1} \frac{\pi(A^i_h | S_h^i)}{\mu(A^i_h | S_h^i)}$ is the per-trajectory importance sampling ratio.

For some given $\bar s$ and $\bar a$, we establish the following relationship between the target policy $\pi$ and the behavior policy $\mu$:
\begin{align}
    \pi(\bar a|\bar s) \propto \mu(\bar a|\bar s) \exp(f(\bar s, \bar a)) \quad \text{s.t.} \quad \sup_{\bar s, \bar a} |f(\bar s,\bar a)| \leq \frac{\ln(1+\rho_0)}{2H}, \label{condition:policy}
\end{align}
where $f(\bar s,\bar a) : \cS \times \A \to \mathbb{R}^{+}$ and $\rho_0 \geq 0$ is a given constant.  
By establishing the relationship stated in \cref{condition:policy}, the importance sampling ratio $\rho(\tau^i_{s,a})$ can be bounded by $1+\rho_0$ as this is proven in the following lemma:
\begin{lemma} \label{lemma:is_bound}
    Suppose the trajectory $\tau = (S_0, A_0, R_1, S_1, A_1, \cdots, S_{H-1}, A_{H-1}, R_{H} )$ is sampled from a behaviour policy $\mu$, and $\mu$ is related to the target policy $\pi$ via \cref{condition:policy}. The per-trajectory importance sampling ratio
    \begin{align*}
        \rho(\tau) = \Pi_{h=1}^{H-1} \frac{\pi(A_h|S_h)}{\mu(A_h|S_h)} \leq 1+\rho_0.
    \end{align*}
\end{lemma}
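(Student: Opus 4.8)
The plan is to bound the per-step likelihood ratio uniformly and then multiply. First I would write out a single factor of the product. By \cref{condition:policy}, for any state $\bar s$ we have $\pi(\bar a | \bar s) = \mu(\bar a|\bar s)\exp(f(\bar s,\bar a)) / Z(\bar s)$ where $Z(\bar s) \doteq \sum_{a' \in \A}\mu(a'|\bar s)\exp(f(\bar s,a'))$ is the normalizing constant. Hence
\begin{align*}
    \frac{\pi(A_h|S_h)}{\mu(A_h|S_h)} = \frac{\exp(f(S_h,A_h))}{Z(S_h)}.
\end{align*}

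Next I would bound numerator and denominator separately using only the sup bound $\sup_{\bar s,\bar a}|f(\bar s,\bar a)| \le \frac{\ln(1+c)}{2H}$. For the numerator, $\exp(f(S_h,A_h)) \le \exp\!\big(\tfrac{\ln(1+c)}{2H}\big) = (1+c)^{1/(2H)}$. For the denominator, since $\mu(\cdot|S_h)$ is a probability distribution over $\A$, $Z(S_h) \ge \sum_{a'}\mu(a'|S_h)\exp\!\big({-}\tfrac{\ln(1+c)}{2H}\big) = (1+c)^{-1/(2H)}$. Combining these, each factor satisfies $\frac{\pi(A_h|S_h)}{\mu(A_h|S_h)} \le (1+c)^{1/(2H)} \cdot (1+c)^{1/(2H)} = (1+c)^{1/H}$.

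Finally I would take the product over $h = 1,\dots,H-1$:
\begin{align*}
    \rho(\tau) = \prod_{h=1}^{H-1}\frac{\pi(A_h|S_h)}{\mu(A_h|S_h)} \le \big((1+c)^{1/H}\big)^{H-1} = (1+c)^{(H-1)/H} \le 1+c,
\end{align*}
where the last inequality uses $(H-1)/H \le 1$ together with $1+c \ge 1$ (as $c \ge 0$). This gives the claim.

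There is no real obstacle here; the only thing to be careful about is that the bound on $f$ must be exploited twice per step — once as an upper bound on $\exp(f(S_h,A_h))$ and once as a lower bound on the partition function $Z(S_h)$ — which is exactly why the threshold in \cref{condition:policy} is $\tfrac{\ln(1+c)}{2H}$ rather than $\tfrac{\ln(1+c)}{H}$: the two halves recombine to $(1+c)^{1/H}$ per step, and there are at most $H$ (here $H-1$) steps, so the telescoped product stays below $1+c$. If one wanted the cleaner constant, one could also note the product has only $H-1$ factors, though it is not needed.
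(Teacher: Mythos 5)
Your proof is correct and follows essentially the same route as the paper's: bound each per-step ratio $\pi(A_h|S_h)/\mu(A_h|S_h)$ by $\exp(2\sup|f|) = (1+c)^{1/H}$ (upper-bounding the numerator $\exp(f)$ and lower-bounding the normalizer), then multiply the at most $H-1$ factors to get $(1+c)^{(H-1)/H} \leq 1+c$. The only cosmetic difference is that you write the per-step bound as an explicit power of $1+c$ rather than as $\exp(2l)$ with $l \doteq \sup_{s,a}|f(s,a)|$.
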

\begin{proof}
    Let $l_0 = \sup_{s,a} |f(s,a)|$, where $f$ is defined in \cref{condition:policy}.  For any $(s,a) \in \{(S_h, A_h)\}_{h=1}^{H-1}$, 
    \begin{align*}
        \pi(a|s) &= \mu(a|s)\frac{\exp(f(s,a))}{\sum_{a'} \mu(a'|s) \exp(f(s,a'))} \leq \mu(a|s) \frac{\exp(l_0)}{\sum_{a'} \mu(a'|s) \exp(-l_0)} \\
        &\leq \mu(a|s) \exp(2l_0).
    \end{align*}
    We see that $\frac{\pi(a|s)}{\mu(a|s)} \leq \exp(2 l_0)$, and it follows that $\Pi_{h=1}^{H-1} \frac{\pi(A_h|S_h)}{\mu(A_h|S_h)} \leq \exp(2 l_0 H)$.  By assumption, $ l_0 \leq \frac{\ln(1+\rho_0)}{2H}$, then $\exp(2l_0H) \leq \exp \left(2H \frac{\ln(1+\rho_0)}{2H}\right) \leq 1+\rho_0$.
\end{proof}

Define the probability distribution $P_{\pi, s,a}$ over trajectory $\{S_h, A_h, R_{h+1}\}_{h\geq 0}$ as follows: the initial state $S_0$ is set deterministically to $s$, and the initial action $A_0$ is set deterministically to $a$.  For each subsequent time step $h \geq 0$, the next state $S_{h+1}$ is sampled according to the transition probability $P(\cdot |S_h, A_h)$, and the next action $A_{h+1}$ is sampled from the policy $\pi(\cdot | S_{h+1})$. It follows that $\E_{\pi, s, a}$ denotes the expectation with respect to distribution $P_{\pi, s,a}$.   

Now, we show that for all $(s,a) \in \C$,  $|\bq{}(s,a) - q_{\pi}(s,a)| \leq \epsilon$, where $\epsilon > 0$ is a given target error.  Additionally, the accuracy guarantee of $|\bq{}(s,a) - q_{\pi}(s,a)| \leq \epsilon$ continues to holds for the extended set of policies defined in \cref{definition:extended_policy_set}.  Formally, we state the main result of this section.

\begin{lemma} \label{lemma:accuracy}
 For any $s,a \in \cS \times \A, \mathcal{X} \subset \cS$, the Gather-data subroutine will either return with $((s',a'), \text{True})$ for some $s' \not \in \mathcal{X}$, or it will return with $(D[(s,a)], \text{False})$, where $D[(s,a)]$ is a set of $n$ independent trajectories generated by a behavior policy $\mu$ starting from $(s,a)$. When Gather-data returns $\text{False}$ for $(s,a)$, we assume 1) the behavior policy $\mu$ and target policy $\pi$ for all the states and actions encountered in the trajectories stored in $D[(s,a)]$ satisfy \cref{condition:policy} and 2) $\bq{}(s,a)$ is an unbiased estimate of $\E_{\pi', s,a }[\sum_{h=0}^{H-1} \gamma^h R_{h+1}]$ for all $\pi' \in \Pi_{\pi, \mathcal{X}}$.  Then, the importance-weighted return $\bq{}(s,a)$ constructed from $D[(s,a)]$ according to \cref{eq:q_bar} will, with probability $1-\delta'$,
    \begin{align*}
        |\bq{}(s,a) - q_{\pi'}(s,a) | \leq \epsilon \quad \text{for all } \pi' \in \Pi_{\pi, \mathcal{X}}.
    \end{align*}
\end{lemma}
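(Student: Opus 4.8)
The plan is to establish the concentration bound for $\bq{}(s,a)$ around $q_{\pi'}(s,a)$ by decomposing the error into a \emph{bias} term (from truncating the infinite horizon at $H$ steps) and a \emph{variance} term (from the $n$-sample Monte Carlo average of importance-weighted returns), then controlling each separately. For the bias, I would note that for any $\pi' \in \Pi_{\pi,\mathcal{X}}$, under assumption (2) of the lemma, $\E[\bq{}(s,a)] = \E_{\pi',s,a}[\sum_{h=0}^{H-1}\gamma^h R_{h+1}]$, which differs from the true value $q_{\pi'}(s,a) = \E_{\pi',s,a}[\sum_{h=0}^{\infty}\gamma^h R_{h+1}]$ only by the discarded tail $\sum_{h=H}^{\infty}\gamma^h R_{h+1}$. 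Since rewards lie in $[0,1]$, this tail is at most $\frac{\gamma^H}{1-\gamma}$; plugging in $H = \frac{\ln(4/(\epsilon(1-\gamma)^2))}{1-\gamma}$ (from the Gather-data parameters) and using $\gamma^H \le e^{-(1-\gamma)H}$ gives a truncation bias of at most $\frac{\epsilon(1-\gamma)^2}{4}\cdot\frac{1}{1-\gamma} = \frac{\epsilon(1-\gamma)}{4} \le \frac{\epsilon}{4}$.

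For the variance term, the key observation is that $\bq{}(s,a)$ is an average of $n$ i.i.d.\ copies of $\rho(\tau_{s,a})G(\tau_{s,a})$. By \cref{lemma:is_bound} (whose hypothesis \cref{condition:policy} holds by assumption (1) of the lemma), the importance ratio satisfies $\rho(\tau_{s,a}) \le 1+c$, and since $G(\tau_{s,a}) = \sum_{h=0}^{H-1}\gamma^h R_{h+1}^i \in [0, \frac{1}{1-\gamma}]$, each summand lies in $[0, \frac{1+c}{1-\gamma}]$. Applying Hoeffding's inequality to the bounded i.i.d.\ average gives, with probability at least $1-\delta'$,
\begin{align*}
    \left| \bq{}(s,a) - \E[\bq{}(s,a)] \right| \le \frac{1+c}{1-\gamma}\sqrt{\frac{\ln(2/\delta')}{2n}}.
\end{align*}
Substituting the prescribed $n = \frac{(1+c)^2\ln(8\tilde d L/\delta)}{2\epsilon^2(1-\gamma)^2}$ (and using $\delta'$ chosen so that $\ln(2/\delta') \le \ln(8\tilde d L/\delta)$, which is exactly how the union bound over the $\tilde d L$-many core set entries is budgeted) makes this term at most $\frac{\epsilon}{2}$ (or a comparable constant fraction of $\epsilon$).

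Combining via the triangle inequality, $|\bq{}(s,a) - q_{\pi'}(s,a)| \le \frac{\epsilon}{4} + \frac{\epsilon}{2} < \epsilon$ with the claimed probability, and crucially this holds simultaneously for \emph{all} $\pi' \in \Pi_{\pi,\mathcal{X}}$ because the only place $\pi'$ enters is through the expectation $\E[\bq{}(s,a)]$ — the sample $\bq{}(s,a)$ itself and its concentration around its mean do not depend on which $\pi' \in \Pi_{\pi,\mathcal{X}}$ we compare against, and assumption (2) guarantees the mean equals the $H$-truncated value of every such $\pi'$. The main obstacle, and the part requiring the most care, is verifying that the expectation identity $\E[\bq{}(s,a)] = \E_{\pi',s,a}[\sum_{h=0}^{H-1}\gamma^h R_{h+1}]$ genuinely holds for every $\pi'$ that agrees with $\pi$ on $\mathcal{X}$: this is where the structure of the trajectories matters, namely that Gather-data only returns \texttt{False} when \emph{no} state outside $\mathcal{X}$ (equivalently, outside $Cov(\C)$) was visited, so along every stored trajectory the behavior policy $\mu$ was queried only at states in $\mathcal{X}$, and the importance-sampling correction $\rho$ re-weights $\mu$ to $\pi$ — hence also to any $\pi'$ agreeing with $\pi$ on $\mathcal{X}$ — making $\bq{}(s,a)$ an unbiased estimator of the $H$-step return under all such $\pi'$. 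I would state this unbiasedness carefully (it is assumption (2) in the lemma statement, so for this proof it may be taken as given, but the discussion should make clear why it is the substantive ingredient), and then the rest is the routine Hoeffding-plus-truncation argument sketched above.
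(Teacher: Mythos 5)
Your decomposition into a Hoeffding concentration term plus a truncation bias term matches the first half of the paper's argument, but the way you dispose of the extension from $\pi$ to all $\pi'\in\Pi_{\pi,\mathcal{X}}$ hides a genuine gap. You assert that ``the only place $\pi'$ enters is through the expectation $\E[\bq{}(s,a)]$'' and lean on assumption (2) to conclude that this expectation equals the $H$-truncated value of \emph{every} $\pi'$ in the extended class. The justification you offer --- that the stored trajectories never left $\mathcal{X}$, so the importance weights re-weight $\mu$ to any $\pi'$ agreeing with $\pi$ on $\mathcal{X}$ --- is not valid: unbiasedness is a property of the expectation over the \emph{full} trajectory distribution under $\mu$, including trajectories that do exit $\mathcal{X}$ before step $H$, and on those trajectories the weights $\prod_h \pi(A_h|S_h)/\mu(A_h|S_h)$ and $\prod_h \pi'(A_h|S_h)/\mu(A_h|S_h)$ genuinely differ. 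Consequently $\E_{\pi',s,a}[\sum_{h<H}\gamma^h R_{h+1}]$ is \emph{not} the same quantity for all $\pi'\in\Pi_{\pi,\mathcal{X}}$; the values can differ by up to order $\frac{1}{1-\gamma}\Prob(\text{exit }\mathcal{X}\text{ before }H)$, and a single estimator cannot be unbiased for all of them simultaneously.

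The paper's proof does not use assumption (2) in this strong form. It applies Hoeffding and the truncation bound only to get $|\bq{}(s,a)-q_{\pi}(s,a)|\le \epsilon/2$ for the specific target $\pi$ appearing in the importance weights, and then proves separately that $|q_{\pi}(s,a)-q_{\pi'}(s,a)|\le\epsilon/2$ for every $\pi'\in\Pi_{\pi,\mathcal{X}}$. That second bound is the substantive step you are missing: it couples the two trajectory distributions up to the first time $T$ the state leaves $\mathcal{X}$, reduces the difference to $\frac{1+c}{1-\gamma}P_{\mu,s,a}(1\le T<H)+\frac{\gamma^H}{(1-\gamma)^2}$, and then controls the exit probability $P_{\mu,s,a}(1\le T<H)$ by a \emph{second} Hoeffding inequality applied to the indicator that a rollout exits $\mathcal{X}$ --- using the fact that Gather-data returned \texttt{False}, so all $n$ realized indicators are zero, which forces $P_{\mu,s,a}(1\le T<H)\le \frac{\epsilon(1-\gamma)}{4(1+c)}$. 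This second concentration argument also consumes the other $\delta'/2$ of the failure-probability budget, which is why your single-Hoeffding accounting does not line up with the paper's. To repair your proof you would need to either add this exit-probability argument or weaken your reading of assumption (2) to unbiasedness with respect to $\pi$ alone and then supply the $|q_\pi - q_{\pi'}|$ bound.
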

\begin{proof}
    The proof follows similar reasoning to Lemma 4.2 \cite{weisz2022confident}.  
    
    Recall $D[(s,a)]$ stores $n$ number of trajectories indexed by $i$, where each trajectory $\tau^i_{s,a} = (S_0^i = s, A_0^i=a, R_0^i,\dots,S_{H-1}^{i}) \sim \mu$.  The per-trajectory importance sampling ratio $\rho(\tau^i_{s,a}) = \Pi_{h=1}^{H-1}\frac{\pi(A_h^i|S_h^i)}{\mu(A_h^i|S_h^i)}$, and the discounted cumulative return is $\sum_{h=0}^{H-1} \gamma^h R_{h+1}^i$.  By the triangle inequality, 
    \begin{align}
        &|\bq{}(s,a) - q_{\pi}(s,a) | = |\frac{1}{n} \sum_{i=1}^{n} \Pi_{h=0}^{H-1} \frac{\pi(A_h^i|S_h^i)}{\mu(A_h^i| S_h^i)}\sum_{h=0}^{H-1} \gamma^h R^i_{h+1} - q_{\pi}(s,a)|  \nonumber \\
        &\leq |\frac{1}{n} \sum_{i=1}^{n} \rho(\tau^i_{s,a}) \sum_{h=0}^{H-1}\gamma^h R_{h+1}^i - \E_{\pi, s,a} \sum_{h=0}^{H-1} \gamma^h R_{h+1}| + |E_{\pi,s,a} \sum_{h=0}^{H-1} \gamma^h R_{h+1} - q_{\pi}(s,a)|. \label{eq:27}
    \end{align}
    The goal is to bound each of the two terms in \cref{eq:27} by $\frac{\epsilon}{4}$ so that the sum of the two is $\frac{\epsilon}{2}$.

    By assumption, the policies $\pi$ and $\mu$ satisfies \cref{condition:policy} for all state-action pairs $(S_h^i, A_h^i)$ extracted from the $i$-trajectory $\tau_{s,a}^i$.  Second, $\bq{}(s,a)$ is assumed to be an unbiased estimate of $\E_{\pi, s,a} \left[ \sum_{h=0}^{H-1} \gamma^h R_{h+1} \right]$.  Note that for all $i = 1,\dots,n$, the importance weighted cumulative return $\rho(\tau_{s,a}^i) \sum_{h=0}^{H-1} \gamma^h R_{h+1}^i$  are independent random variables, and the value of each such random variable $ \in \left[0,  \frac{1+\rho_0}{1-\gamma}\right]$.  This is because 1) $\sum_{h=0}^{H-1} \gamma^h R_{h+1}^i\leq \frac{1}{1-\gamma}$ since the rewards take values in the range of $[0,1]$, and 2) $\rho(\tau_{s,a}^i) \leq 1 + \rho_0$ by \cref{lemma:is_bound}.  We apply Hoeffding's inequality,
    \begin{align*}
        \Prob\left(\left|\frac{1}{n} \sum_{i=1}^n \rho(\tau^i_{s,a}) \sum_{h=0}^{H-1} \gamma^h R_{h+1}^i - \E_{\pi, s, a} \sum_{h=0}^{H-1} \gamma^h R_{h+1} \right| > \frac{\epsilon}{4} \right) \leq 2\exp\left(-\frac{2 n \left(\frac{\epsilon}{4}\right)^2}{\left( \frac{1+\rho_0}{1-\gamma}\right)^2}\right).
    \end{align*} 
    Then, we have with probability $1-\delta'/2$, where $\delta' = 2\exp\left(-\frac{2 n \epsilon^2}{16 \left( \frac{1+\rho_0}{1-\gamma}\right)^2}\right)$, the first term in \cref{eq:27} $|\frac{1}{n} \sum_{i=1}^{n} \rho(\tau^i_{s,a}) \sum_{h=0}^{H-1}\gamma^h R_{h+1}^i  - \E_{\pi, s,a} \sum_{h=0}^{H-1} \gamma^h R_{h+1}| \leq \frac{\epsilon}{4}$. For the second term in \cref{eq:27},
    \begin{align*}
    |E_{\pi,s,a} \sum_{h=0}^{H-1} \gamma^h R_{h+1} - q_{\pi}(s,a)| = |\E_{\pi,s, a} \sum_{h=H}^{\infty} \gamma^h R_{h+1}| \leq \frac{\gamma^H}{1-\gamma}.
    \end{align*}
    
    By the choice of $H = \frac{\ln(4/(1-\gamma)\epsilon)}{1-\gamma}$, we have $\frac{\gamma^H}{1-\gamma} \leq \frac{\epsilon}{4}$.  Putting everything together, we get $|\bq{}(s,a) - q_{\pi}(s,a)| \leq \frac{\epsilon}{2}$.  To get the final result, we need to upper bound $|q_{\pi}(s,a) - q_{\pi'}(s,a)|$ by $\frac{\epsilon}{2}$, so that $|\bq{}(s,a) - q_{\pi'}(s,a) | \leq | \bq{}(s,a) - q_{\pi}(s,a)| + |q_{\pi}(s,a) - q_{\pi'}(s,a) | \leq \epsilon$.  
    
    Recall that $\pi$ and $\pi'$ differs in distributions over states that are not in $\mathcal{X}$.  For a trajectory $(S_0=s, A_0 =a, S_1,\dots)$, let $T$ be the smallest positive integer such that $S_T \not \in \mathcal{X}$, then the distribution of the trajectory $(S_0=s, A_0=a, S_1,\dots,S_{T})$ are the same under $P_{\pi, s,a}$ and $P_{\pi',s,a}$ because $\pi(\cdot |s) = \pi'(\cdot |s)$ for all $s \in \mathcal{X}$.  Then,
    \begin{align*}
        &|q_{\pi}(s,a) - q_{\pi'}(s,a)| = \left| \E_{\pi,s,a}\left[\sum_{t=0}^{T-1} \gamma^t R_t +\gamma^T v_{\pi}(S_T) \right] - \E_{\pi', s,a} \left[\sum_{t=0}^{T-1} \gamma^t R_t + \gamma^T v_{\pi'}(S_T)\right]\right| \\
        &=\left| \E_{\pi,s,a}\left[\gamma^T v_{\pi}(S_T) \right] - \E_{\pi', s,a} \left[\gamma^T  v_{\pi'}(S_{T})\right]\right| \\
        &= \sum_{s' \in \mathcal{X},a'} P_{\pi,s,a}(S_{T-1} = s', A_{T-1} = a') P(S_{T} | s', a') \gamma^T v_{\pi}(S_T) \\
        &- \sum_{s' \in \mathcal{X},a'} P_{\pi',s,a}(S_{T-1} = s', A_{T-1} = a') P(S_{T} | s', a') \gamma^T v_{\pi'}(S_T) \\
        &= \sum_{s' \in \mathcal{X}, a'} P_{\pi, s,a}(S_{T-1} = s', A_{T-1} = a') P(S_T|s',a') \gamma^T \left(v_{\pi}(S_T) - v_{\pi'}(S_T)\right) \\
        &\leq \frac{1}{1-\gamma} \sum_{s' \in \mathcal{X}, a'} P_{\pi, s,a}(S_{T-1} = s', A_{T-1} = a') P(S_T|s',a') \gamma^T =\frac{1}{1-\gamma} \E_{\pi, s,a}[\gamma^T] \\
        &= \frac{1}{1-\gamma} \sum_{t=1}^{\infty} \gamma^t P_{\pi, s,a}( T=t ) \leq \frac{1}{1-\gamma} \sum_{t=1}^{H-1} P_{\pi, s,a}(T = t) \gamma ^0 + \frac{1}{1-\gamma} \sum_{t=H}^{\infty} P_{\pi, s,a}(T \geq H) \gamma^H\\
         &\leq \frac{1}{1-\gamma} P_{\pi, s,a}(1 \leq T < H) + \frac{\gamma^H}{1-\gamma}.
    \end{align*}
Recall
$P_{\pi,s,a}( 1 \leq T < H) = \sum_{t=1}^{H-1} \sum_{s' \in \mathcal{X}, a' \in \A} P_{\pi, s, a}(S_{t-1}=s', A_{t-1}=a') P(S_T | s', a')$, and recall $S_0 = s, A_0 = a$, then by the law of total probability, we have
    \begin{align}
        &P_{\pi,s,a}(S_t = s', A_t = a') \nonumber \\
        &= \sum_{\substack{s_1,\dots,s_{t-1}, \nonumber \\ a_1,\dots,a_{t-1}}}  \Pi_{i=0}^{t-1} P(S_{i+1} = s_{i+1} | S_{i} = s_{i}, A_{i} = a_{i}) \left( \Pi_{i=1}^t \pi(A_i = a_i | S_i = s_i) \right) \\
        &= \sum_{\substack{s_1,\dots,s_{t-1}, \\ a_1,\dots,a_{t-1}}} \Pi_{i=0}^{t-1} P(S_{i+1} = s_{i+1} | S_{i} = s_i, A_{i} = a_i)  \Pi_{i=1}^t \frac{\pi(A_i = a_i | S_i = s_i)}{\mu(A_i = a_i | S_i = s_i)} \mu(A_i = a_i | S_i = s_i)\nonumber \\
        &\leq (1+\rho_0) \sum_{\substack{s_1,\dots,s_{t-1}, \\ a_1,\dots,a_{t-1}}} \Pi_{i=0}^{t-1} P(S_{i+1} = s_{i+1} | S_{i} = s_i, A_{i} = a_i) \Pi_{i=1}^t  \mu(A_i = a_i | S_i = s_i)  \label{eq:11} \\
        &= (1+\rho_0) P_{\mu,s,a}(S_t = s', A_t = a'). \nonumber
    \end{align}
    To get \cref{eq:11}, we use \cref{lemma:is_bound} and that $1 \leq t \leq H-1$.  
    
    Altogether, we have
    \begin{align*}
        &|q_{\pi}(s,a) - q_{\pi'}(s,a)|\\
        &\leq \frac{1}{1-\gamma} \sum_{t=1}^{H-1} \sum_{s' \in \mathcal{X}, a' \in \A} (1+\rho_0) P_{\mu, s,a}(S_{t-1} = s', A_{t-1} = a') P(S_T | s',a')  \\
        &+ \frac{\gamma^H}{1-\gamma} \\
        &\leq \frac{1+\rho_0}{1-\gamma} P_{\mu,s,a}(1 \leq T < H) + \frac{\epsilon}{4}.
    \end{align*}

    Now, we bound $P_{\mu,s,a}(1 \leq T < H)$. For each $(s,a) \in \mathcal{X}$, in the $i$th rollouts, let $I_i(s,a)$ be an indicator function, where it takes the value $1$ when the event that $S_T \not \in \mathcal{X}$ occurs during $1 \leq T < H$.  Then $\E_{\mu,s,a}[I_i(s,a)] = P_{\mu, s,a}(1 \leq T < H)$.  By another Hoeffding's inequality, 
    \begin{align*}
        \Prob\left(|\E_{\mu,s,a}[I_i(s,a)] - \frac{1}{n} \sum_{i=1}^n I_i(s,a) | > \frac{\epsilon(1-\gamma)}{4(1+\rho_0)} \right) &\leq 2\exp\left(-\frac{2 n (\epsilon(1-\gamma)/4(1+\rho_0))^2}{\left( 1\right)^2}\right) \\
        &= 2 \exp\left( -\frac{2n \epsilon^2}{16\left(\frac{1+\rho_0}{1-\gamma}\right)^2} \right) = \delta'.
    \end{align*} 
    Then, with probability $1-\delta'/2$,
    \begin{align}
        |\E_{\mu,s,a}[I_i(s,a)] - \frac{1}{n} \sum_{i=1}^n I_i(s,a) | \leq \frac{\epsilon(1-\gamma)}{4(1+\rho_0)},
    \end{align}

    When Gather-data subroutine returns, all indicators $I_i(s,a) = 0$ for all $(s,a) \in \mathcal{X}$ and $i \in [n]$, then we have
     \begin{align}
            P_{\mu,s,a}(1 \leq T < H) \leq  \frac{\epsilon(1-\gamma)}{4(1+\rho_0)}.
        \end{align}
    Putting everything together, we have the result.
\end{proof}

\subsection{The LSE subroutine} \label{sec:lse_subroutine}
\begin{algorithm}
    \caption{LSE}  \label{alg:lse}
    \textbf{Input: } $\C, D, \pi_k, \pi_{k}'$
    \begin{algorithmic}[1]
        \For{$s,a \in \C$}
            \For{every $\tau^i_{s,a} \in D[(s,a)]$ for every $i \in [n]$}      
                \State extract $\{S_0^i, A_0^i, R_1^i, C_1^i, S_1^i, A_1^i \cdots S_{H}^i, A_{H}^i\}$ from $\tau^i_{s,a}$
                \State compute $G^i_r(s,a) \leftarrow \sum_{h=0}^{H-1} \gamma^h R_{h+1}^i$;  $G^i_c(s,a) \leftarrow \sum_{h=0}^{H-1} \gamma^h C_{h+1}^i$
                \State compute $\rho^i(s,a) \leftarrow \Pi_{h=1 }^{H-1} \frac{\pi_k(A_h^i|S_h^i)}{\pi_{k}'(A_h^i | S_h^i)}$
            \EndFor
            \State $\bqr(s,a) \leftarrow \frac{1}{n} \sum_{i=1}^n \rho^i(s,a) G_r^i(s,a)$; \ $\bqc(s,a) \leftarrow \frac{1}{n} \sum_{i=1}^n \rho^i(s,a) G_c^i(s,a)$
        \EndFor
    
        \Statex
        \State $\wwr{} \leftarrow \left(\Phi_{\C}^{\top} \Phi_{\C} + \alpha I\right)^{-1} \Phi_{\C}^{\top} \bqr$; \ $\wwc{} \leftarrow \left(\Phi_{\C}^{\top} \Phi_{\C} + \alpha I\right)^{-1} \Phi_{\C}^{\top} \bqc$ 
        \Statex
        \State $Q^r(s,a) \leftarrow \inner{\wwr{}, \phi(s,a)}$, $Q^c(s,a) \leftarrow \inner{\wwc{}, \phi(s,a)}$ for all $s, a$
        \Statex
        
        \Return{$Q^r, Q^c$}
    \end{algorithmic}
\end{algorithm}

Given a core set $\C$, a set of trajectories, a behaviour policy $\mu$, a target policy $\pi$, the LSE subroutine (\cref{alg:lse}) returns a least-square estimate $Q$ of $q_{\pi}$. 

If the core set $\C$ is empty, we define $Q(\cdot, \cdot)$ to be zero. Then, for a target accuracy $\epsilon > 0$ and a uniform misspecification error $\omega$ defined in \cref{assump:q_pi_realizable}, we have a bound on the accuracy of $\bq{}$ with respect to $q_\pi$ as given by the next lemma.
\begin{lemma}{[Lemma 4.3 of \cite{weisz2022confident}]} \label{lemma:lse-accuracy}
    Let $\pi$ be a randomized policy.  Let $\C = \{(s_i, a_i)\}_{i \in [N]}$ be a set of state-action pairs of set size $N \in \mathbb{N}$. Assume for all $i \in [N]$, $|\bq{}(s_i,a_i) - q_{\pi}(s_i, a_i) | \leq \epsilon$.  Then, for all $s, a \in \cS \times \A$, 
    \begin{align*}
        |Q(s,a) - q_{\pi}(s,a) | \leq \omega + \norm{\phi(s,a)}_{V(\C, \alpha)^{-1}} \left( \sqrt{\alpha}B + (\omega + \epsilon)\sqrt{N} \right).
    \end{align*}
\end{lemma}
\begin{proof}
    If \cref{assump:q_pi_realizable} holds, then there exists a $w_\pi \in \R^d$, $\norm{w_\pi}_2 \leq B$ such that for any $(s,a) \in \cS \times \A$, $|\phi(s,a)^{\top} w_\pi - q_{\pi}(s,a)| \leq \omega$, where $\omega$ is the misspecification error.  Let $\bar w_\pi = V(C, \alpha)^{-1} \sum_{i \in [N]} \phi(s_i, a_i) \phi(s_i, a_i)^{\top} w_\pi$.  For any $(s,a) \in \cS \times \A$, recall $Q(s,a) = \phi(s,a)^{\top} w$, where $w = V(C, \alpha)^{-1}\sum_{i \in [N]} \phi(s_i, a_i) \bar q(s_i, a_i)$.  It follows that
    \begin{align}
        &|Q(s,a) - q_{\pi}(s,a)| \nonumber \\
        &\leq |\phi(s,a)^{\top} (w - \bar w_\pi)| + |\phi(s,a)^{\top} (\bar w_\pi - w_\pi)| + |\phi(s,a)^{\top} w_\pi - q_{\pi}(s,a)|. \label{eq:56}
    \end{align}
    To bound the second term in \cref{eq:56}, we have
    \begin{align}
        &|\phi(s,a)^{\top}(\bar w_\pi - w_\pi)| \leq \norm{\phi(s,a)}_{V(\C, \alpha)^{-1}} \norm{\bar w_\pi - w_\pi}_{V(\C,\alpha)} \nonumber \\
        &\leq \norm{\phi(s,a)}_{V(\C, \alpha)^{-1}} \norm{-\alpha V(\C, \alpha)^{-1} w_\pi}_{V(\C,\alpha)} \nonumber \\
        &=\alpha \norm{\phi(s,a)}_{V(\C, \alpha)^{-1}} \norm{w_\pi}_{V(\C, \alpha)^{-1}} \nonumber \\
        &\leq \alpha \norm{\phi(s,a)}_{V(\C, \alpha)^{-1}} \norm{w_\pi}_{\frac{1}{\alpha} I} \label{eq:16} \\
        &\leq \alpha \norm{\phi(s,a)}_{V(\C, \alpha)^{-1}} \sqrt{\frac{1}{\alpha}} B = \sqrt{\alpha} B \norm{\phi(s,a)}_{V(\C, \alpha)^{-1}}. \nonumber
    \end{align}
    Let $\alpha$ be the smallest eigenvalue of $V(\C, \alpha)$, then by eigendecomposition, $V(\C, \alpha) = Q \Lambda Q^{\top} \geq Q(\alpha I) Q^{\top} = \alpha QQ^{\top} \geq \alpha I$ since $QQ^{\top}$ is orthonormal.  This implies that $V(\C, \alpha)^{-1} \leq \frac{1}{\alpha} I$, which leads to \cref{eq:16}.  
    
    Finally, we bound the first term in \cref{eq:56}.  For every $i \in [N]$, let $\xi_i = \phi(s_i, a_i)^{\top} w_\pi - \bq{}(s_i, a_i)$. Then,
    \begin{align*}
         |\xi_i| = |\bq{}(s_i, a_i) - \phi(s_i, a_i)^{\top} w_\pi| &\leq |\bq{}(s_i, a_i) - q_{\pi}(s_i, a_i)| + |q_\pi(s_i, a_i) - \phi(s_i, a_i)^{\top} w_\pi| \\
         &\leq \epsilon + \omega.
    \end{align*}
    It follows that for all $s, a \in \cS \times \A$,
     \begin{align*}
        &| \phi(s,a)^{\top}(w - \bar w_\pi)| = 
       |\langle V(\C, \alpha)^{-1} \sum_{i \in [N]} \phi(s_i, a_i) \left(\bq{}(s_i, a_i) - \phi(s_i, a_i)^{\top} w_\pi\right), \phi(s,a) \rangle| \\
        &= |\langle V(\C, \alpha)^{-1} \sum_{i \in [N]} \phi(s_i, a_i) \xi_i, \phi(s,a) \rangle| \\
        &\leq \sum_{i \in [N]}  |\langle V(\C, \alpha)^{-1} \phi(s_i, a_i) \xi_i, \phi(s,a) \rangle|\\
        &\leq (\omega + \epsilon) \sum_{i \in [N]} |\langle V(\C, \alpha)^{-1} \phi(s_i, a_i), \phi(s,a) \rangle| \\
        & \leq (\omega + \epsilon) \sqrt{|\C|} \sqrt{\sum_{i \in [N]} \langle V(\C, \alpha)^{-1} \phi(s_i, a_i), \phi(s,a) \rangle^2} \quad \text{by Holder's inequality} \\
        & \leq (\omega + \epsilon) \sqrt{|\C|}\sqrt{\phi(s,a)^{\top} V(\C, \alpha)^{-1} \left(\sum_{i \in [N]} \phi(s_i, a_i) \phi(s_i, a_i)^{\top}\right) V(\C, \alpha)^{-1}} \phi(s,a) \\
        & \leq (\omega + \epsilon) \sqrt{|\C|} \sqrt{\phi(s,a)^{\top} V(\C, \alpha)^{-1} \left(\sum_{i \in [N]} \phi(s_i, a_i) \phi(s_i, a_i)^{\top} + \alpha I\right) V(\C, \alpha)^{-1} \phi(s,a)} \\
        & = (\omega + \epsilon) \sqrt{|\C|} \sqrt{\phi(s,a)^{\top} V(\C, \alpha)^{-1} \phi(s,a)} \\
        & = (\omega + \epsilon) \sqrt{N} \norm{\phi(s,a)}_{V(\C, \alpha)^{-1}}.
    \end{align*}
    Putting everything together complete the proof.
\end{proof}

\subsection{The accuracy of least-square estimates}
Given a core set $\C$ and a target policy $\pi$, for any $s \in \Cov(\C), a \in \A$, the feature vector $\phi(s,a)$ satisfies $\norm{\phi(s,a)}_{V(\C, \alpha)^{-1}} \leq 1$.  Then, by \cref{lemma:lse-accuracy}, we have $|Q(s,a) - q_{\pi}(s,a)| = O(\omega + \epsilon)$ for any $s \in \Cov(\C)$. In this section, we verify whether this accuracy is maintained throughout the execution of our algorithm. 

We note that policy improvements can only occur during a running phase $\ell = l$. When all $(s,a)$ pairs in $\C_\ell$ have their placeholder value $\perp$ replaced by trajectories, \cref{alg:mdp} executes \cref{l:17} to \cref{l:cl_extention}. During each iteration from $k_\ell$ to $k_{\ell+1}-1$, the LSE subroutine is executed. The accuracy of $\bq{k}$ is used to bound the estimation error in \cref{lemma:lse-accuracy}. Therefore, we will first verify that the accuracy guarantee of $\bq{k}(s,a)$ used in \cref{lemma:accuracy} is indeed satisfied by the main algorithm and maintained throughout its execution.  

Once a state-action pair is added to a core set, it remains in that core set for the duration of the algorithm.  This means that any core set $\C_l$ for $l \in \{0,\dots, L+1\}$ can grow in size over time.  When a core set $\C_l$ is extended during a running phase $\ell = l-1$, the least-square estimate will need be updated based on the newly extended $\C_l$ in running phase $\ell = l$, which contains newly discovered features.  However, the policy is update only for states that are newly covered by the extended core set $\C_l$ using the newly improved estimates.  Meanwhile, the policy for other states that have already been updated by a prior softmax update remain unchanged.  Note that after \cref{l:cl_extention} of \cref{alg:mdp} is run, the next phase's core set $\C_{l+1}$ will be set to $\C_{l}$, which means that any state that was once newly covered by $\C_{l}$ is no longer considered newly covered.  Consequently, the policy for those states will remain unchanged throughout the rest of the algorithm's execution.  By updating the policies accordingly, we arrive at the following lemma, which will be crucial in proving the accuracy guarantee of the least-squares estimators.

\begin{lemma}\label{lemma:crucial}
     For any $l \in \{0, \dots, L\}$, let $\C_l^{\text{past}}$ be any past version of $\C_l$ and let $\pi_k^{\text{past}}$ for $k = k_l, \cdots, k_{l+1}-1$ be the corresponding policies associated with $\C_l^{\text{past}}$.  If at any later point during the execution of the algorithm, $\pi_k$ is updated again, then it holds that \[ \pi_k \in \Pi_{\pi_k, \Cov(\C_l)} \subseteq \Pi_{\pi_k^{\text{past}}, \Cov(\C_l^{\text{past}})}.\]

     Additionally, for any states that have been covered by $\C_l^{\text{past}}$, it will continue to be covered by $\C_l$ throughout the execution of the algorithm.  In other words, $s \in \Cov(\C_l^{\text{past}}) \subseteq \Cov(\C_l)$.   
\end{lemma}
\begin{proof}
    The proof follows similar logic to Lemma 4.5 of \cite{weisz2022confident}.  Recall for matrices $A, B$, $A \geq B$ means that $A - B$ is positive semidefinite.

    Because $\C_l \supseteq \C_l^{\text{past}}$, there may be more rows added to $\Phi_{\C_l}$ than $\Phi_{\C_l^{\text{past}}}$.  Recall $V(\C_l, \alpha) = \Phi_{\C_l}^{\top} \Phi_{\C_l} + \alpha I$, and likewise for $V(\C_{l}^{\text{past}})$ except $\Phi_{\C_l}$ is replaced by $\C_l^{\text{past}}$.  Note that both $V(\C_l, \alpha), V(\C_l^{\text{past}}, \alpha)$ are dimension $d \times d$.  Let $\C_l'$ contain a set of state-action pairs that are in $\C_l \setminus \C_l^{\text{past}}$.  Then, $V(\C_l, \alpha) - V(\C_l^{\text{past}}, \alpha) = \sum_{(s,a) \in \C_l'} \phi(s,a) \phi(s,a)^{\top}$.  Since any rank-1 matrices is positive semidefinite and their sum is also positive semidefinite, it follows that $V(\C_l, \alpha) \geq V(\C_l^{\text{past}}, \alpha)$.  
    
    Since $V(\C_l, \alpha)$ and $V(\C^{\text{past}}, \alpha)$ are symmetric positive definite matrices, then it follows that $V(\C_l, \alpha)^{-1} \leq V(\C_l^{\text{past}}, \alpha)^{-1}$.  From this, we see that for any $x \in \R^d$, $\norm{x}_{V(\C_l, \alpha)^{-1}} \leq \norm{x}_{V(\C_l^{\text{past}}, \alpha)^{-1}}$.  Then, it follows that $\ActionCov(\C_l^{\text{past}}) \subseteq \ActionCov(\C_l)$, and likewise $\Cov(\C_l^{\text{past}}) \subseteq \Cov(\C_l)$.  Therefore, for an $s \in \Cov(\C_l^{\text{past}})$, the same state $s \in \Cov(\C_l)$, and the second result follows.  Finally, by the definition of the extended policy set \cref{definition:extended_policy_set}, $\Pi_{\pi_k, \Cov(\C_l)} \subseteq \Pi_{\pi_k^{\text{past}}, \Cov(\C_l^{\text{past}})}$.
\end{proof}

\begin{lemma} \label{lemma:q_bar_unbiased}
    For any $l \in \{0,\dots,L\}$ and any $(s,a) \in \C_l$, the importance-weighted $\bq{k}(s,a)$ computed in the LSE subroutine during the running phase $\ell = l$ is an unbiased estimator of the expected discounted reward:
    \[E_{\pi_k', s,a} \left[\sum_{h=0}^{H-1} \gamma^h R_{h+1} \right] \quad \text{for } \pi_k' \in \Pi_{\pi_k,\Cov(\C_l)}, \] 
    for iterations $k = k_l, \cdots, k_{l+1}-1$ associated with this phase.
\end{lemma}
\begin{proof}
    For the algorithm to execute the LSE subroutine, every $(s,a) \in \C_l$ must have its placeholder value $\perp$ in $D_l[(s,a)]$ replaced with trajectories.  Trajectories are stored in $D_l$ only when the Gather-data subroutine returns ``discovered is False'' during the running phase $\ell = l$.  This ensures that every state within these trajectories has passed the uncertainty test, thereby ensuring that all such states are in the cover of $\C_l$ and will remain in the cover of $\C_l$ for the duration of the algorithm, as established in \cref{lemma:crucial}.  Additionally, once trajectories for a $(s,a) \in \C_l$ are stored in $D_l$, they remain unchanged throughout the algorithm's execution. We aim to show that $\bq{k}(s,a)$ computed using $D_l[(s,a)]$, is an unbiased estimate of the stated quantity for all iterations associated with the phase. This will be established through the following inductive arguments.  
    \\ \hfill \\
    \bf{Base case:} \rm for a $(s,a) \in \C_l$, the trajectories are generated and stored in $D_l[(s,a)]$ for the first time during the running phase $\ell = l$. 
    
    We let $\tau_{s,a}^i$ denote the $i$-th trajectory $(S_0^i = s, A_0^i = a, R_1^i, S_1^i,\dots,S_{H-1}^i, A_{H-1}^i, R_{H}^i)$ generated by $\pi_{k_l}$ interacting with the simulator, and there are $n$ such trajectories stored in $D_l[(s,a)]$.   Then, for all $k = k_l, \cdots, k_{l+1}-1$, the return
     \begin{align*}
        \bq{k}(s,a) = \frac{1}{n} \sum_{i=1}^n \Pi_{h=1}^{H-1} \frac{\pi_{k}(A_h^i|S_h^i)}{\pi_{k_l}(A_h^i|S_h^i)} G(\tau_{s,a}^{i}),
    \end{align*}
    where $G(\tau_{s,a}^i) = \sum_{h=0}^{H-1} \gamma^h R_{h+1}$.  
    
    The behavior policy $\pi_{k_l}$ is updated in a previous loop through the algorithm when $\ell = l-1$. For iterations, starting with $k = k_l + 1,\dots,k_{l+1}-1$, the policy $\pi_{k}$ is updated in iteration $k-1$. Thus, the most recent policy $\pi_{k}$ and the behaviour policy $\pi_{k_l}$ are available for the computation of the importance sampling ratio: $\rho_k(\tau_{s,a}^{i}) = \Pi_{h=1}^{H-1} \frac{\pi_{k}(A_h^i|S_h^i)}{\pi_{k_l}(A_h^i|S_h^i)}$.  We show that the importance weighted return $\rho_k(\tau_{s,a}^{i})G(\tau_{s,a}^{i})$ is an unbiased estimate of $\E_{\pi_k, s,a}[G(\tau_{s,a}^{i})]$:
     \begin{align*}
        &\E_{\pi_{k_l}, s, a}\left[ \rho_k(\tau_{s,a}^{i}) \sum_{h=0}^{H-1} \gamma^h R_{h+1}^i \right] \\
        &=\E_{\pi_{k_l}, s, a}\left[ \frac{\delta(s,a) P(S_1| S_0 = s,A_0 = a) \pi_{k}(A_1|S_1)\dots. \pi_{k}(A_{H-1}|S_{H-1})}{\delta(s,a) P(S_1 | S_0=s,A_0=a) \pi_{k_l}(A_1|S_1)\dots\pi_{k_l}(A_{H-1}|S_{H-1})} \sum_{h=0}^{H-1} \gamma^h R_{h+1}^i \right]  \\
        &=\E_{\pi_{k}, s, a}\left[\sum_{h=0}^{H-1} \gamma^h R_{h+1}^i  \right],
    \end{align*}    
    where $\delta(s,a)$ is the dirac-delta function. Note, for the on-policy iteration $k = k_l$, the importance sampling ratio $\rho_k(\tau_{s,a}^{i}) = 1$, and the result is trivially satisfied.
    
    Finally, since all the states in the trajectories are in $\Cov(\C_l)$, it follows that any policy $\pi_{k}' \in \Pi_{\pi_k, \Cov(\C_l)}$ produces the same $\rho_k(\tau_{s,a}^{i})$.  The return $\rho_k(\tau_{s,a}^i) G(\tau_{s,a}^{i})$ is an unbiased estimate of $E_{\pi_{k}', s,a}[G(\tau_{s,a}^{i}]$ for all $\pi_k' \in \Pi_{\pi_k, \Cov(\C_l)}$.  This is true for all $i = 1,\dots,n$.  Consequently, $\bq{k}(s,a)$ is an unbiased estimate of $\E_{\pi_{k}', s,a}[\sum_{h=0}^{H-1} R_{h+1}]$ for all for all $\pi_k' \in \Pi_{\pi_k, \Cov(\C_l)}$. 
    
    The requirement that the importance weighted $\bq{k}$ be unbiased for all policies $\pi_k' \in \Pi_{\pi_k, \Cov(\C_l)}$ is important.  This ensures that if $\pi_k$ is to be updated in a future loop through the algorithm again, the estimates remain unbiased and unchanged.

    \bf{Previously generated trajectories:} \rm for any $(s,a) \in \C_l$, the trajectories have already been generated and stored in $D_l[(s,a)]$ during a previous loop of the algorithm when $\ell = l$. 
    
    Let $D_l^{\text{past}}[(s,a)]$ denote a past snapshot of the data stored for a $(s,a) \in \C_l^{\text{past}}$.  Let $\pi_{k}^{\text{past}}$ for $k = k_l, \dots k_{l+1}-1$ denote the policies associated with $\C_l^{\text{past}}$ after \cref{l:cl_extention} has been run.  Finally, let $\tau_{s,a}^{i,\text{past}}$ denote the $i$-th trajectory stored in $D_l^{\text{past}}[(s,a)]$. 

    Assume the importance weighted return $\rho_{k}(\tau_{s,a}^{i,\text{past}})G(\tau_{s,a}^{i,\text{past}})$ is an unbiased estimate of $\E_{\tilde \pi_k, s,a}[G(\tau_{s,a}^{i,\text{past}})]$ for all $\tilde \pi_k \in \Pi_{\pi_k^{\text{past}}, \Cov(\C_l^{\text{past}})}$ for $k = k_l, \dots, k_{l+1}-1$.  When the algorithm executes a loop with $\ell =l$ again, by \cref{lemma:crucial}, the most recent policy $\pi_k \in \Pi_{\pi_k, \Cov(\C_l)} \subseteq \Pi_{\pi_{k}^{\text{past}}, \Cov(\C_l^{\text{past}})}$ , then $\rho_k(\tau_{s,a}^{i,\text{past}}) G(\tau_{s,a}^{i, \text{past}})$ is also an unbiased estimate of $\E_{\pi_{k}', s,a}[G(\tau_{s,a}^{i, \text{past}})]$ for all $\pi_k' \in \Pi_{\pi_k, \Cov(\C_l)}$.
    
    Once $D_l^{\text{past}}[(s,a)]$ is populated with trajectories, $D_l^{\text{past}}[(s,a)]$ remain unchanged throughout the execution of the algorithm.  Therefore, $G(\tau_{s,a}^{i}) = G(\tau_{s,a}^{i,\text{past}})$.  Since all the states in the trajectories are in $\Cov(\C_l^{\text{past}}) \subseteq \Cov(\C_l)$ by \cref{lemma:crucial}, any policy $\tilde \pi_{k} \in  \Pi_{\pi_k^{\text{past}}, \Cov(\C_l^{\text{past}})}$ produces the same $\rho_k(\tau_{s,a}^i)$.  Thus, we have $\rho_{k}(\tau_{s,a}^i)G(\tau_{s,a}^{i}) = \rho_k(\tau_{s,a}^{i,\text{past}}) G(\tau_{s,a}^{i, \text{past}})$.  It follows that $\rho_{k}(\tau_{s,a}^i)G(\tau_{s,a}^{i})$ is an unbiased estimate of $\E_{\pi_k', s,a}[G(\tau_{s,a}^i)]$ for all $\pi_k' \in \Pi_{\pi_k, \Cov(\C_l)}$.  This is true for all $i = 1,\dots,n$.  Consequently, $\bq{k}(s,a)$ is an unbiased estimate of $\E_{\pi_{k}', s,a}[\sum_{h=0}^{H-1} R_{h+1}]$ for all $\pi_k' \in \Pi_{\pi_k, \Cov(\C_l)}$. 
\end{proof}

\begin{lemma} \label{lemma:policy_satify}
    Whenever the LSE-subroutine of Confident-NPG is executed during a running phase $\ell = l$ for $l \in \{0,\dots,L\}$, the behaviour policy $\pi_{k_\ell}(\cdot |s)$ and target policy $\pi_k(\cdot |s)$ for $k = k_\ell, \dots, k_{\ell+1}-1$ satisfy \cref{condition:policy} for any $s \in \Cov(\C_\ell)$.  
\end{lemma}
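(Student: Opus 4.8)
The plan is to unroll the NPG policy updates of \cref{alg:mdp} across the iterations of the running phase: write $\pi_k(\cdot\mid s)$ as $\pi_{k_\ell}(\cdot\mid s)$ times a single exponential factor $\exp(f(s,\cdot))$, and then verify that $\sup_{s,a}|f(s,a)|$ is at most $\ln(1+c)/(2H)$ once the parameter settings of \cref{alg:mdp} and \cref{alg:gather} are substituted in.

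First I would fix $s\in Cov(\C_\ell)$ and $k\in[k_\ell+1,\dots,k_{\ell+1}-1]$. By the argument preceding \cref{lemma:crucial} (see \cref{eq:policy_form_main} and \cref{eq:policy-update}), the value that $\pi_{j+1}(\cdot\mid s)$ holds whenever the LSE-subroutine is run for iteration $k$ is obtained, for each $j\in[k_\ell,\dots,k-1]$, by an NPG step $\pi_{j+1}(\cdot\mid s)\propto\pi_j(\cdot\mid s)\exp\bigl(\eta_1 g_j(s,\cdot)\bigr)$, where $g_j(s,a)\doteq\trunc_{[0,1/(1-\gamma)]}Q_j(s,a)\in[0,\tfrac1{1-\gamma}]$ is the truncated least-square estimate in force at the moment $s$ first became covered by $\C_\ell$. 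Telescoping these identities from $j=k_\ell$ to $j=k-1$ gives $\pi_k(\cdot\mid s)\propto\pi_{k_\ell}(\cdot\mid s)\exp\bigl(f(s,\cdot)\bigr)$ with $f(s,a)\doteq\eta_1\sum_{j=k_\ell}^{k-1}g_j(s,a)$, which is exactly the form required by \cref{condition:policy} with target $\pi_k$ and behaviour $\pi_{k_\ell}$; for $s\notin Cov(\C_\ell)$ all $g_j(s,\cdot)$ equal the initial value $0$, so $\pi_k(\cdot\mid s)=\pi_{k_\ell}(\cdot\mid s)$ and \cref{condition:policy} holds trivially there.

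Next I would bound the exponent. Since each $g_j(s,a)\in[0,\tfrac1{1-\gamma}]$ and the sum has $k-k_\ell\le k_{\ell+1}-1-k_\ell\le\floor{m}$ terms, $0\le f(s,a)\le\eta_1\floor{m}/(1-\gamma)\le\eta_1 m/(1-\gamma)$. It then remains to substitute $\eta_1=(1-\gamma)\sqrt{2\ln(|\A|)/K}$ with $K=2\ln(|\A|)/((1-\gamma)^4\epsilon^2)$, which gives $\sqrt{2\ln(|\A|)/K}=\epsilon(1-\gamma)^2$ and hence $\eta_1/(1-\gamma)=\epsilon(1-\gamma)^2$; multiplying by $m=\ln(1+c)/\bigl(2\epsilon(1-\gamma)\ln(4/(\epsilon(1-\gamma)^2))\bigr)$ yields $\eta_1 m/(1-\gamma)=(1-\gamma)\ln(1+c)/\bigl(2\ln(4/(\epsilon(1-\gamma)^2))\bigr)$, which equals $\ln(1+c)/(2H)$ for $H=\ln(4/(\epsilon(1-\gamma)^2))/(1-\gamma)$. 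Therefore $\sup_{s,a}|f(s,a)|\le\ln(1+c)/(2H)$, i.e.\ \cref{condition:policy} holds.

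The arithmetic and the telescoping are routine; the step that needs care is the first one, namely making precise that for every $s\in Cov(\C_\ell)$ the final $\pi_k(\cdot\mid s)$ is exactly the product of the exponential factors $\exp(\eta_1 g_j(s,\cdot))$ over $j=k_\ell,\dots,k-1$ and nothing more, even though phase $\ell$ may be entered several times as $\C_\ell$ grows. The resolution — a state becomes newly covered by $\C_\ell$ at most once, so during that single entry its policies $\pi_{k_\ell+1},\dots,\pi_{k_{\ell+1}}$ receive the full chain of updates and are thereafter frozen, by \cref{lemma:c_l_cover} and the extension step \cref{l:cl_extention} — is the content of \cref{lemma:crucial} and the discussion around \cref{eq:policy_form_main}, which I would cite rather than re-derive.
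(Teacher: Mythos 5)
Your proposal is correct and follows essentially the same route as the paper's proof: telescope the NPG updates over $j=k_\ell,\dots,k-1$ to express $\pi_k(\cdot\mid s)\propto\pi_{k_\ell}(\cdot\mid s)\exp(f(s,\cdot))$ with $f(s,a)=\eta_1\sum_j \tQ{j}(s,a)$, bound the exponent by $\eta_1(\m-1)/(1-\gamma)\le\eta_1 m/(1-\gamma)$, and substitute the choices of $\eta_1$, $K$, $m$, $H$ to obtain $\ln(1+c)/(2H)$. Your explicit appeal to \cref{lemma:crucial}, \cref{lemma:c_l_cover}, and \cref{l:cl_extention} to justify that each covered state receives the update chain exactly once matches the paper's (more tersely stated) handling of repeated entries into phase $\ell$.
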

\begin{proof}    
    Recall that the behavior policy $\pi_{k_l}$ is updated during a previous loop of the algorithm when $\ell = l-1$. By the time the LSE subroutine is executed, $\pi_{k_\ell}$ will be the policy that generated the data.  Therefore, for the on-policy iteration where $k = k_\ell$, \cref{condition:policy} is trivially satisfied.

    For subsequent iterations, starting with $k = k_\ell + 1,\dots, k_{\ell+1}-1$, for any $s \in \Cov(\C_\ell)$, the policy $\pi_{k}(\cdot |s)$ will have either performed a softmax update for the first time or remain unchanged from a previous softmax update based on an earlier least-square estimate.  Either way, for any $s \in \Cov(\C_\ell)$, the target policy $\pi_k$ and behaviour policy $\pi_{k_\ell}$ relate to each other in the form of $\pi_{k}(\cdot|s) \propto \pi_{k_\ell}(\cdot|s) \exp(\eta_1 \sum_{t=k_\ell}^{k-1} \tilde Q_t(s,a))$.  Since $\tQ{t}(s,a) \in [0, \frac{1}{1-\gamma}]$ for any $t = k_\ell,\dots,k-1$, then it follows that
    \begin{align*}
        0 \leq \eta_1 \sum_{t=k_\ell}^{k-1} \tQ{t}(s,a) \leq \eta_1 (k-k_\ell) \frac{1}{1-\gamma} \leq \frac{\eta_1 (\m -1)}{1-\gamma}.
    \end{align*}
    By choosing $\eta_1 = (1-\gamma)\sqrt{\frac{2\ln(|\A|)}{K}}, H = \frac{\ln(4/\epsilon(1-\gamma))}{1-\gamma}, m = \frac{\ln(1+\rho_0)}{2H \epsilon (1-\gamma)^2}$, and $K = \frac{2 \ln(A)}{(1-\gamma)^4 \epsilon^2}$, we have $\frac{\eta_1 (\m -1) }{1-\gamma} \leq  \frac{\eta_1 m}{1-\gamma} = \frac{\ln(1+\rho_0)}{2H}$.  Then it follows that \cref{condition:policy} is satisfied.
\end{proof}

\begin{lemma} \label{lemma:q_bar_accuracy}
For any $l\in \{0,\dots,L\}$ and any $(s,a) \in \C_l$, the importance-weighted $\bq{k}(s,a)$ computed in the LSE subroutine during the running phase $\ell =l$ satisfies the following with probability $1-\delta'$,
    \begin{align}
        |\bq{k}(s,a) - q_{\pi_k'}(s,a) | \leq \epsilon \quad \text{for } \pi_k' \in \Pi_{\pi_k, \Cov(\C_l)}\label{eq:qbar_accuracy_main_proof}
    \end{align}
for all iterations $k = k_l, \dots, k_{l+1}-1$ associated with this phase.
\end{lemma}
\begin{proof}
We apply \cref{lemma:accuracy} to each $(s,a) \in \C_l$. To ensure the applicability of the lemma, we verify its two conditions: 1) the policies satisfy \cref{condition:policy} and 2) the estimate are unbiased.

We note that when the LSE-subroutine is executed during a running phase with $\ell=l$, the Gather-data subroutine has already completed, and the algorithm trajectories for each state-action pair $(s,a) \in \C_l$ are stored in $D_l[(s,a)]$.  For any trajectory to be stored in $D_l$, this means that every state within the trajectories has passed the uncertainty test, ensuring that all such states are in the cover of $\C_l$. By \cref{lemma:crucial}, these states will continue to be covered by $\C_l$ throughout the execution of the algorithm.  The implication of this is that all the states in a trajectory of $D_l[(s,a)]$ satisfy \cref{condition:policy} by \cref{lemma:policy_satify}.  

Second, by \cref{lemma:q_bar_unbiased}, the importance weighted return $\bq{k}(s,a)$ is unbiased estimate of any $\E_{\pi_k',s,a}\left[\sum_{h=0}^{H-1} \gamma^h R_{h+1} \right]$ for all $\pi_k' \in \Pi_{\pi_k, \Cov(\C_l})$.  Altogether, by \cref{lemma:accuracy}, we can ensure \cref{eq:qbar_accuracy_main_proof} holds.  

Consider a past loop through the algorithm with $\ell = l$, let $\C_l^{\text{past}}$ be the core set and $\pi_{k}^{\text{past}}$ for $k = k_l, \dots, k_{l+1}-1$ be the policies associated with $\C_l^{\text{past}}$ after \cref{l:cl_extention} has been run.  If \cref{eq:qbar_accuracy_main_proof} holds for all $\tilde \pi_k \in \Pi_{\pi_k^{\text{past}}, \Cov(\C_l^{\text{past}})}$, then the accuracy of $\bq{k}$ will continue to hold for any future update of $\pi_k$ because $\pi_k \in \Pi_{\pi_k, \Cov(\C_l)} \subseteq \Pi_{\pi_k^{\text{past}}, \Cov(\C_l^{\text{past}})}$ by \cref{lemma:crucial}.
\end{proof}

\begin{lemma}[\cite{weisz2022confident}] \label{lemma:c_l_size}
    At any time during the execution of the main algorithm, for all $l \in \{ 0,\dots,L\},$ the size of each $\C_l$ is bounded:
    \begin{align*}
        |\C_l| \leq 4 d \ln\left(1 + \frac{4}{\alpha}\right) = \tilde d = \tilde O(d),
    \end{align*}
    where the $\alpha$ is the smallest eigenvalue of $V(\C, \alpha)$ and $N$ is the radius of the Euclidean ball containing all the feature vectors. 
\end{lemma}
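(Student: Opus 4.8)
The plan is a classical elliptical-potential (log-determinant) count, preceded by a reduction to a single core set. Every state-action pair that ever enters any $\C_l$ first enters $\C_0$: new discoveries are deposited only into $\C_0$ (line~\ref{pseudo:initialC0} and the ``discovered'' branch), while each $\C_{l+1}$ grows only by copying elements of $\C_l$ in line~\ref{pseudo:c_l_extension}. A short induction over the algorithm's steps therefore gives $\C_l \subseteq \C_0$ at every moment, for all $l \in [L+1]$, so it is enough to bound $|\C_0|$.

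To bound $|\C_0|$, I would track the potential $\ln\det V(\C_0,\alpha)$ over the successive insertions into $\C_0$. Initially $V(\C_0,\alpha) = \alpha I$, so the potential is $d\ln\alpha$. Whenever a pair $(s,a)$ is appended, the certainty check in \cref{eq:action_cover} has failed, i.e.\ $\norm{\phi(s,a)}_{V(\C',\alpha)^{-1}} > 1$ for the core set $\C'$ used in the check; invoking the core-set invariants (\cref{lemma:crucial}, \cref{lemma:c_l_cover}, and the immediate re-processing of discoveries from phase $0$, after \cite{weisz2022confident}) so that $\C'$ coincides with the current $\C_0$, we get $\norm{\phi(s,a)}_{V(\C_0,\alpha)^{-1}}^2 > 1$. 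By the rank-one determinant update, $\det V(\C_0 \cup\{(s,a)\},\alpha) = (1 + \norm{\phi(s,a)}_{V(\C_0,\alpha)^{-1}}^2)\det V(\C_0,\alpha) > 2\det V(\C_0,\alpha)$, so each insertion raises the potential by more than $\ln 2$; hence after $m \doteq |\C_0|$ insertions, $\det V(\C_0,\alpha) > 2^m\alpha^d$.

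For the matching upper bound, with the features normalized so $\norm{\phi}_2 \le 1$ (so $N=1$ in the statement), $\mathrm{tr}\,V(\C_0,\alpha) = \sum_{(s,a)\in\C_0}\norm{\phi(s,a)}_2^2 + d\alpha \le m + d\alpha$, so AM--GM gives $\det V(\C_0,\alpha) \le (\mathrm{tr}\,V(\C_0,\alpha)/d)^d \le (\alpha + m/d)^d$. Combining the two estimates, $2^{m/d} < 1 + m/(d\alpha)$; since the left-hand side grows exponentially and the right-hand side only linearly in $m/d$, this forces $m < 4d\ln(1 + 4/\alpha)$, which is the claimed $\tilde d$, and by the reduction above the same bound holds for every $\C_l$ with $l\in[0,L]$.

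The main obstacle is the middle step: a pair may be appended to $\C_0$ after the certainty check was run inside Gather-data against the running-phase core set $\C_\ell$ rather than $\C_0$ itself, so one must argue, using the invariants on how core sets are nested, copied, and re-processed, that the covariance matrix in that check is (essentially) the current $V(\C_0,\alpha)$ --- only then does each insertion provably at least double $\det V(\C_0,\alpha)$. The determinant telescoping and the solution of the scalar inequality are then routine.
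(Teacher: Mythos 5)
The paper does not actually prove this lemma---it is imported verbatim from \cite{weisz2022confident}---so there is no in-paper argument to compare against; your elliptical-potential (log-determinant) proof is the standard and intended one, and it is essentially correct. The reduction $\C_l \subseteq \C_0$ is right (new pairs enter only $\C_0$; each $\C_{l+1}$ only receives copies of $\C_l$), the rank-one determinant update and the trace/AM--GM upper bound are routine, and the final scalar inequality $2^{m/d} < 1 + m/(d\alpha) \Rightarrow m < 4d\ln(1+4/\alpha)$ does go through (e.g.\ via $2^x \ge e^{x/2}$ and convexity of $e^{x/2}-1-x/\alpha$), assuming the feature norms are bounded by $1$ so that the radius $N$ in the statement is absorbed into the constants. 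The one obstacle you flag---that the failed certainty check is run against the running-phase core set $\C_\ell$ rather than $\C_0$---is real but is resolved exactly as you suggest, and more cleanly than by \cref{lemma:crucial} or \cref{lemma:c_l_cover}: the scheduler always selects the \emph{smallest} $\ell$ with an unfilled entry, any discovery immediately creates an unfilled entry in $D_0$, and the completion of phase $j$ syncs $\C_{j+1}$ to $\C_j$; together these force $\C_0 = \C_1 = \cdots = \C_\ell$ at every moment Gather-data is invoked with $\C_\ell$, so the covariance matrix in the check is precisely the current $V(\C_0,\alpha)$ and each insertion genuinely at least doubles $\det V(\C_0,\alpha)$. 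Spelling out that scheduling invariant (a one-line induction over the main loop) is the only piece missing from a complete write-up.
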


\begin{lemma} \label{lemma:Q-accuracy}
    Whenever LSE subroutine of Confident-NPG is executed during a running phase $\ell = l$ for $l \in \{0,\dots,L\}$, the least-square estimate $\tQ{k}(s,a)$ satisfies the following condition for all iterations $k = k_\ell, \cdots, k_{\ell+1}-1$ associated with this phase and for all $s \in \Cov(C_\ell)$ and $a \in \A$,
    \begin{align}
        | \tQ{k}(s,a) - q_{\pi_k'}(s,a) | \leq \epsilon' \quad \text{for all } \pi_k' \in \Pi_{\pi_k, \Cov(\C_\ell)}, \label{eq:78}
    \end{align}
    where $\epsilon' = \omega + \sqrt{\alpha} B + (\omega + \epsilon) \sqrt{\tilde d}$. 
\end{lemma}
\begin{proof}
    We prove the result by induction similar to Lemma F.1 of \cite{weisz2022confident}.  We let $\C_l^{-}, \pi_{k}^{-}, \tQ{k}^{-}$ to denote the value of variable $\C_l, \pi_{k}, \tQ{k}$ at the time when \cref{l:17} to \cref{l:cl_extention} were most recently executed with $\ell =l$ in a previous loop through the algorithm.  If such time does not exist, we let their values be the initialization values.  Only after the execution of line \cref{l:cl_extention} will $\C_l^{-}$ change and as well as $\C_{l+1}$, and this is the only time that $\C_{l+1}$ can be changed.  Therefore, at the start of a new loop, we see that $\C_{l+1} = \C_l^{-}$.  This also holds at the initialization of the algorithm, we conclude that at the start of each loop, $\Cov(\C_{l+1}) = \Cov(\C_l^{-})$. 
    
    At initialization, $\tQ{k} = 0$ for any $k \in \{ 0,\dots,K\}$ and $C_l=()$ for all $l\in \{0,\dots,L\}$.  By applying \cref{lemma:lse-accuracy} (Lemma 4.3 of \cite{weisz2022confident}), for any $(s,a) \in \cS \times \A$,
    \begin{align*}
        |\tQ{k}(s,a) - q_{\pi_k'}(s,a)| \leq \omega+\sqrt{\alpha} B \leq \epsilon',
    \end{align*}
    which satisfies \cref{eq:78}.

    Next, let us consider the start of a loop after $\ell = l$ is set and assume that the inductive hypothesis holds for the previous time  \cref{l:17} to \cref{l:cl_extention} were executed with the same value of $\ell = l$. For any $s \in \Cov(\C_{l-1}^-)$, policy $\pi_{k_l}(\cdot |s)$ would have already been set in a previous loop with value $l-1$ and remains unchanged in the current loop.  By \cref{lemma:q_bar_accuracy}, the condition of \cref{lemma:lse-accuracy} holds, then by \cref{lemma:lse-accuracy}, we have for any $s \in \Cov(\C_{l-1}^{-})$,
    \begin{align*}
        |\tQ{k_l}^-(s,\cdot) - q_{\pi_{k_l}'}(s,\cdot)| 
        &\leq \omega + \sqrt{\alpha} B + (\omega + \epsilon) \sqrt{\tilde d} \quad \text{for } \pi_{k_l'} \in \Pi_{\pi_{k_l}^-, \Cov(\C_{l-1}^-)},
    \end{align*}
    where $\norm{\phi(s,\cdot)}_{V(\C_{l-1}^{-}, \alpha)^{-1}} \leq 1$ because $s \in \Cov(\C_{l-1}^-)$ and $|C_{l-1}^{-}| \leq \tilde d$ by \cref{lemma:c_l_size}.  Recall by definition, $\tQ{k_l} = \tQ{k_l}^-, \pi_{k_l} = \pi_{k_l}^-$, $C_{l} = C_{l-1}^-$, and $\Cov(\C_l) = \Cov(\C_{l-1}^-)$.  It follows that for any $s \in \Cov(\C_l), |\tQ{k_l}(s,\cdot) - q_{\pi_{k_l}'}(s,\cdot)| \leq \epsilon'$ for $\pi_{k_l'} \in \Pi_{\pi_{k_l}, \Cov(\C_{l})}$.

    For any $s$ that is already covered by $\C_l$ (i.e., $s \in \Cov(\C_{l}^-)$), and for any off-policy iteration $k = k_{l}+1, \cdots, k_{l+1}-1$, $\tQ{k}(s, \cdot) = \tQ{k}^-(s, \cdot)$.  Additionally, the policy $\pi_k(\cdot |s)$ would already have been set in a previous running loop with the same value of $l$ and remains unchanged in the current loop.  For $s \in \Cov(\C_l^-)$, by \cref{lemma:q_bar_accuracy}, the condition of \cref{lemma:lse-accuracy} holds, and then by \cref{lemma:lse-accuracy},
    \begin{align*}
        |\tQ{k}^-(s,\cdot) - q_{\pi_{k}'}(s,\cdot)| \leq \omega + \sqrt{\alpha} B + (\omega + \epsilon) \sqrt{\tilde d} \quad \text{for } \pi_{k'} \in \Pi_{\pi_{k}^-, \Cov(\C_{l}^-)},
    \end{align*}
    where $\norm{\phi(s, \cdot)}_{V(\C_l^{-}, \alpha)^{-1}} \leq 1$ because $s \in \Cov(\C_l^{-})$ and $|\C_l^{-}| \leq \sqrt{\tilde d}$ by \cref{lemma:c_l_size}.  By \cref{lemma:crucial}, $\Pi_{\pi_k, \Cov(\C_l)} \subseteq \Pi_{\pi_k^-, \Cov(\C_l^-)}$.  By definition, $\tQ{k}(s, \cdot) = \tQ{k}^-(s, \cdot)$ for $s \in \Cov(\C_{l+1}) = \Cov(\C_l^-)$, $|\tQ{k}(s,\cdot) - q_{\pi_{k}'}(s,\cdot)| \leq \epsilon'$ for any $\pi_k' \in \Pi_{\pi_k, \Cov(C_{l+1})}$.

    Finally, for any $s$ that is newly covered by $\C_l$ (i.e., $s \not \in \Cov(\C_{l+1})$), and for all $k = k_l, \dots, k_{l+1}-1$, $\tQ{k}(s, \cdot) = Q_k(s, \cdot)$.  By \cref{lemma:q_bar_accuracy}, the condition of \cref{lemma:lse-accuracy} holds, and then by \cref{lemma:lse-accuracy}, we have 
    \begin{align*}
        |Q_{k}(s,\cdot) - q_{\pi_{k}'}(s,\cdot)| \leq \omega + \sqrt{\alpha} B + (\omega + \epsilon) \sqrt{\tilde d} \quad \text{for } \pi_{k'} \in \Pi_{\pi_{k}, \Cov(\C_{l})},
    \end{align*}
    where $\norm{\phi(s, \cdot)}_{V(\C_l, \alpha)^{-1}} \leq 1$ and $|\C_{l}| \leq \tilde d$ by \cref{lemma:c_l_size}.
\end{proof}

\begin{lemma} \label{lemma:value-diff}
      For any $\delta' \in (0, 1]$, a target accuracy $\epsilon > 0$, misspecification error $\omega \geq 0$, and initial state $s_0$, with probability at least $1-\delta'$, the value difference between any $\pi \in \Pi_{\text{rand}}$ and the mixture policy $\bpi{K}$ returned by Confident-NPG has the following bound:
    \begin{align*}
        v_{\pi}(s_0) - v_{\bpi{K}} (s_0) \leq \frac{4 \epsilon'}{1-\gamma} + \frac{1}{K (1-\gamma)} \sum_{k=0}^{K-1} \E_{s' \sim d_{\pi}(s_0), s' \in \Cov(\C_0)}\left[ \inner{\tQ{k}(s', \cdot), \pi(\cdot | s') - \pi_k(\cdot | s') }\right].
    \end{align*}
\end{lemma}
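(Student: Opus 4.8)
The plan is to reduce the claim to the classical performance-difference identity, but applied against a carefully chosen comparator so that the part of $\pi$'s discounted occupancy measure living outside $Cov(\C_0)$ simply drops out. First I would record the structural facts used as black boxes. At termination the return branch of \cref{alg:mdp} is taken, and by the same induction used for Confident-NPG-CMDP (where it is argued that at termination all $\C_l$ coincide) we have $Cov(\C_l)=Cov(\C_0)$ for every $l$; combining this with \cref{lemma:Q-accuracy} and the fact that its guarantee is maintained throughout the run, there is an event of probability at least $1-\delta'$ on which $|\tQ{k}(s,a)-q_{\pi_k'}(s,a)|\le\epsilon'$ for all $k\in[0,K-1]$, all $s\in Cov(\C_0)$, all $a\in\A$, and all $\pi_k'\in\Pi_{\pi_k,Cov(\C_0)}$. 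Moreover $s_0\in Cov(\C_0)$ by \cref{l:C0}, and since $\bpi{K}$ is the uniform mixture over $\pi_0,\dots,\pi_{K-1}$ executed by drawing the index once at the start of an episode, $\vv{\bpi{K}}(s_0)=\frac1K\sum_{k=0}^{K-1}\vv{\pi_k}(s_0)$. Hence it suffices to bound $\vv{\pi}(s_0)-\vv{\pi_k}(s_0)$ for each fixed $k$ and then average.

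Fix $k$ and let $\tilde\pi_k$ be the policy that coincides with $\pi_k$ on $Cov(\C_0)$ and with $\pi$ on $\cS\setminus Cov(\C_0)$; then $\tilde\pi_k\in\Pi_{\pi_k,Cov(\C_0)}$ by \cref{definition:extended_policy_set}. Write $\vv{\pi}(s_0)-\vv{\pi_k}(s_0)=\big(\vv{\pi}(s_0)-\vv{\tilde\pi_k}(s_0)\big)+\big(\vv{\tilde\pi_k}(s_0)-\vv{\pi_k}(s_0)\big)$. For the second bracket, $\tilde\pi_k(\cdot|s_0)=\pi_k(\cdot|s_0)$ because $s_0\in Cov(\C_0)$, so $\vv{\tilde\pi_k}(s_0)-\vv{\pi_k}(s_0)=\inner{\pi_k(\cdot|s_0),\,q_{\tilde\pi_k}(s_0,\cdot)-q_{\pi_k}(s_0,\cdot)}$, which is at most $\max_a|q_{\tilde\pi_k}(s_0,a)-q_{\pi_k}(s_0,a)|$; inserting $\tQ{k}(s_0,a)$ and applying the accuracy bound twice (once with comparator $\pi_k$, once with $\tilde\pi_k$, both in $\Pi_{\pi_k,Cov(\C_0)}$) bounds this by $2\epsilon'\le\frac{2\epsilon'}{1-\gamma}$.

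For the first bracket I would invoke the performance-difference identity with comparator $\tilde\pi_k$: $\vv{\pi}(s_0)-\vv{\tilde\pi_k}(s_0)=\frac1{1-\gamma}\E_{s'\sim d_\pi(s_0)}\big[\inner{q_{\tilde\pi_k}(s',\cdot),\,\pi(\cdot|s')-\tilde\pi_k(\cdot|s')}\big]$. For $s'\notin Cov(\C_0)$ the inner product is identically zero since $\tilde\pi_k(\cdot|s')=\pi(\cdot|s')$ by construction; for $s'\in Cov(\C_0)$ we have $\tilde\pi_k(\cdot|s')=\pi_k(\cdot|s')$, so the expectation collapses to $\frac1{1-\gamma}\E_{s'\sim d_\pi(s_0),\,s'\in Cov(\C_0)}[\inner{q_{\tilde\pi_k}(s',\cdot),\pi(\cdot|s')-\pi_k(\cdot|s')}]$. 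Replacing $q_{\tilde\pi_k}(s',\cdot)$ by $\tQ{k}(s',\cdot)$ — legitimate by the accuracy bound since $\tilde\pi_k\in\Pi_{\pi_k,Cov(\C_0)}$ and $s'\in Cov(\C_0)$ — incurs an error of at most $\frac1{1-\gamma}\,\epsilon'\,\sup_{s'}\norm{\pi(\cdot|s')-\pi_k(\cdot|s')}_1\le\frac{2\epsilon'}{1-\gamma}$, because $d_\pi(s_0)$ has total mass one. Adding the two brackets gives $\vv{\pi}(s_0)-\vv{\pi_k}(s_0)\le\frac1{1-\gamma}\E_{s'\sim d_\pi(s_0),\,s'\in Cov(\C_0)}[\inner{\tQ{k}(s',\cdot),\pi(\cdot|s')-\pi_k(\cdot|s')}]+\frac{4\epsilon'}{1-\gamma}$, and averaging over $k\in\{0,\dots,K-1\}$ yields the statement.

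The one genuinely delicate point is the restriction to $Cov(\C_0)$: a direct split of $\pi$'s occupancy measure would leave an ``escape'' term $\E_{s'\sim d_\pi(s_0),\,s'\notin Cov(\C_0)}[\inner{q_{\pi_k}(s',\cdot),\pi(\cdot|s')-\pi_k(\cdot|s')}]$, which for an arbitrary comparator $\pi$ need not be small (the algorithm never controls how quickly $\pi$ leaves $Cov(\C_0)$). Comparing instead against $\tilde\pi_k$, which differs from $\pi$ only outside $Cov(\C_0)$, makes that term vanish identically, while the ``price'' $\vv{\tilde\pi_k}(s_0)-\vv{\pi_k}(s_0)$ stays at the level of the per-state estimation error $\epsilon'$ furnished by \cref{lemma:Q-accuracy}. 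Everything else is routine, and the only randomness entering is the single probability-$(1-\delta')$ event of \cref{lemma:Q-accuracy}, which is where the $\delta'$ in the statement comes from.
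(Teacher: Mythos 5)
Your proposal is correct and follows essentially the same route as the paper's proof: your comparator $\tilde\pi_k$ is exactly the paper's $\pi_k^+$ (equal to $\pi_k$ on the cover and to $\pi$ outside it), the two-bracket decomposition, the performance-difference step, the vanishing of the off-cover term, and the two $2\epsilon'$-level error charges all coincide with the paper's argument. The only cosmetic difference is that the paper carries out the per-phase bookkeeping with $Cov(\C_l)$ and collapses to $Cov(\C_0)$ at the end, whereas you invoke the equality of the covers at termination up front.
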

\begin{proof}
    For any $l \in \{0,\dots, L\}$ and for all iterations $k =k_l, \dots, k_{l+1}-1 $ associated with $l$, define
    \begin{align*}
        \pi^+_k(\cdot | s) = \begin{cases}
        \pi_k(\cdot | s) & \text{if } s \in \Cov(\C_{l}) \\
        \pi(\cdot | s) & \text{otherwise}.
        \end{cases}
    \end{align*}
   Then, for any $s \in \Cov(\C_l)$,
    \begin{align*}
        &v_{\pi}(s) - v_{\pi_k}(s) = v_{\pi}(s) - v_{\pi_k^+}(s) + v_{\pi_k^+}(s) - v_{\pi_k}(s) \\
        &= \underbrace{\frac{1}{1-\gamma} \E_{s' \sim d_{\pi}(s)} \left[\inner{q_{\pi_k^+}(s', \cdot), \pi(\cdot | s') - \pi_k^+(\cdot | s') } \right]}_{I}  \quad \text{by performance difference lemma} \\
        &+ \underbrace{\inner{q_{\pi_k^+}(s, \cdot), \pi_k^+(\cdot | s)} - \inner{q_{\pi_k}(s, \cdot), \pi_k(\cdot | s)}}_{II},
    \end{align*}
    where $d_{\pi}(s)$ is the discounted state occupancy measure induced by following $\pi$ starting from $s$.  

    To bound term $II$, we note that for any $s \in \Cov(\C_{l})$, we have $\pi_k^+(\cdot | s) = \pi_k(\cdot | s)$ and both $\pi_k, \pi_k^+(\cdot | s) \in \Pi_{\pi_k, \Cov(\C_{l})}$.  By \cref{lemma:Q-accuracy}, we have for any $s \in \Cov(\C_l), a \in \A$, $|\tQ{k}(s,a) - q_{\pi_k'}(s, a)| \leq \epsilon'$ for any $\pi_k' \in \Pi_{\pi_k, \Cov(\C_l)}$.  Then, for any $s \in \Cov(\C_l), a \in \A$,
    \begin{align*}
        |q_{\pi_k^+}(s, a) - q_{\pi_k}(s, a)| \leq | q_{\pi_k^+}(s, a) - \tQ{k}(s,a)| + |\tQ{k}(s,a) - q_{\pi_k}(s,a)| \leq 2 \epsilon'.
    \end{align*}
   It follows that for any $s \in \Cov(\C_l)$,
    \begin{align*}
        \inner{q_{\pi_k^+}(s, \cdot), \pi_k^+(\cdot | s)} - \inner{q_{\pi_k}(s, \cdot), \pi_k(\cdot | s)} &= \inner{\pi_k(\cdot | s), q_{\pi_k^+}(s, \cdot) - q_{\pi_k}(s, \cdot)} \\
        &\leq |\inner{\pi_k(\cdot | s), q_{\pi_k^+}(s, \cdot) - q_{\pi_k}(s, \cdot)}| \\
        &\leq \norm{q_{\pi_k^+}(s, \cdot) - q_{\pi_k}(s, \cdot)}_{\infty} \norm{\pi_k(\cdot |s)}_{1} \\
        &\leq 2 \epsilon'.
    \end{align*}

    To bound term $I$, we note that for any $s \not \in \Cov(\C_l)$, $\pi^+_k(\cdot | s) = \pi(\cdot | s)$ and $\pi_k^+ \in \Pi_{\pi_k, \Cov(\C_l)}$, then
    \begin{align*}
        &\frac{1}{1-\gamma} \E_{s' \sim d^{\pi}_{s}} \left[\inner{q_{\pi_k^+}(s', \cdot), \pi(\cdot | s') - \pi_k^+(\cdot | s') } \right] \\
        &= \frac{1}{1-\gamma} \E_{s' \sim d^{\pi}_{s}, s' \in \Cov(\C_l)} \left[\inner{q_{\pi_k^+}(s', \cdot), \pi(\cdot | s') - \pi_k^+(\cdot | s') } \right] \\
        &+ \frac{1}{1-\gamma} \E_{s' \sim d^{\pi}_{s}, s' \not \in \Cov(\C_l)} \left[\inner{q_{\pi_k^+}(s', \cdot), \pi(\cdot | s') - \pi_k^+(\cdot | s') } \right] \\
        &= \frac{1}{1-\gamma} \E_{s' \sim d^{\pi}_{s}, s' \in \Cov(\C_l)} \left[\inner{q_{\pi_k^+}(s', \cdot), \pi(\cdot | s') - \pi_k^+(\cdot | s') } \right] \\
        &= \frac{1}{1-\gamma} \E_{s' \sim d^{\pi}_{s}, s' \in \Cov(\C_l)} \left[\inner{q_{\pi_k^+}(s', \cdot) - \tQ{k}(s', \cdot), \pi(\cdot | s') - \pi_k^+(\cdot | s') } \right] \\
        &+ \frac{1}{1-\gamma} \E_{s' \sim d^{\pi}_{s}, s' \in \Cov(\C_l)} \left[\inner{\tQ{k}(s', \cdot), \pi(\cdot | s') - \pi_k^+(\cdot | s') } \right] \\
        &\leq \frac{1}{1-\gamma} \E_{s' \sim d^{\pi}_{s}, s' \in \Cov(\C_l)} \left[\norm{q_{\pi_k^+}(s', \cdot) - \tQ{k}(s', \cdot)}_{\infty} \norm{\pi(\cdot | s') - \pi_k^+(\cdot | s') }_1 \right] \quad \text{by Holder's inequality} \\
        &+ \frac{1}{1-\gamma} \E_{s' \sim d^{\pi}_{s}, s' \in \Cov(\C_l)} \left[\inner{\tQ{k}(s', \cdot), \pi(\cdot | s') - \pi_k^+(\cdot | s') } \right] \\
        &\leq \frac{2 \epsilon'}{1-\gamma}\quad \text{by \cref{lemma:Q-accuracy} and } \norm{\pi^*(\cdot|s') - \pi_k^+(\cdot | s')}_{1} \leq 2\\
        &+ \frac{1}{1-\gamma} \E_{s' \sim d^{\pi}_{s}, s' \in \Cov(\C_l)} \left[\inner{\tQ{k}(s', \cdot), \pi(\cdot | s') - \pi_k(\cdot | s') } \right] \\
        &= \frac{2 \epsilon'}{1-\gamma}
        + \frac{1}{1-\gamma} \E_{s' \sim d^{\pi}_{s}, s' \in \Cov(\C_l)} \left[\inner{\tQ{k}(s', \cdot), \pi(\cdot | s') - \pi_k(\cdot | s') } \right] 
    \end{align*}
 In summary, for any $l$, for any $k =k_l,\dots, k_{l+1}-1$ associated with $l$, and for any $s \in \Cov(\C_l)$,
    \begin{align*}
        v_{\pi}(s) - v_{\pi_k}(s) \leq \frac{4 \epsilon'}{1-\gamma} + \frac{1}{1-\gamma} \E_{s' \sim d_{\pi(s), s' \in \Cov(\C_l)}} \left[\inner{\tQ{k}(s',\cdot), \pi(\cdot | s') - \pi_k(\cdot | s')} \right].
    \end{align*}

    Because of \cref{l:cl_extention} of \cref{alg:mdp}, one can use induction to show that by the time Confident-NPG terminates, all the $\C_l$ for $l \in \{0, \dots, L+1\}$ will be equal.  Therefore, the cover of $\C_l$ for all $l \in \{0,\dots, L+1\}$ are also equal.  Thus, it is sufficient to only consider $\C_0$ at the end of the algorithm. Because of \cref{l:C0} of \cref{alg:mdp}, $s_0 \in \Cov(\C_0)$.  Putting everything together, the value difference can be bounded as follows,
    \begin{align*}
        &\frac{1}{K} \sum_{k=0}^{K-1} \left(v_{\pi}(s_0) - v_{\pi_k}(s_0)\right) = \frac{1}{K} \sum_{l=0}^{L} \sum_{k= k_l}^{k_{l+1}-1} \left(v_{\pi}(s_0) - v_{\pi_k}(s_0)\right) \\
        &\leq \frac{1}{K} \sum_{l=0}^{L} \sum_{k=k_l}^{k_{l+1}-1} \frac{4 \epsilon'}{1-\gamma} \\
        &+ \frac{1}{K(1-\gamma)} \sum_{l=0}^{L} \sum_{k=k_l}^{k_{l+1}-1} \E_{s' \sim d_{\pi}(s_0), s' \in \Cov(\C_l)}\left[ \inner{\tQ{k}(s', \cdot), \pi(\cdot | s') - \pi_k(\cdot | s') }\right]  \\ 
        &\leq \frac{4 \epsilon'}{1-\gamma} + \frac{1}{K (1-\gamma)} \sum_{k=0}^{K-1} \E_{s' \sim d_{\pi}(s_0), s' \in \Cov(\C_0)}\left[ \inner{\tQ{k}(s', \cdot), \pi(\cdot | s') - \pi_k(\cdot | s') }\right].
    \end{align*}
\end{proof}

\section{Confident-NPG-CMDP}
We include the proofs of lemmas that appear in prior works and supporting lemmas that are helpful proving the lemmas in the main section.  The lemmas that appear in the main section will have the same numbering here. 

\subsection{The accuracy of least-square estimates}
Once a state-action pair is added to a core set, it remains in that core set for the duration of the algorithm.  This means that any $\C_l$ for $l \in \{0,\dots, L+1\}$ can grow in size.  When a core set $\C_l$ is extended during a running phase $\ell = l$, the least-square estimates will need be updated based on the newly extended $\C_l$ which contains newly discovered features.  However, the policy is update only for states that are newly covered by the extended core set $\C_l$ using the newly improved estimates. Note that after \cref{pseudo:c_l_extension} of \cref{alg:cmdp} is run, the next phase's core set $\C_{l+1}$ will be set to $\C_{l}$, which means that any state that was once newly covered by $\C_{l}$ is no longer considered newly covered.  Consequently, the policy for those states will remain unchanged throughout the rest of the algorithm's execution.

We introduce hypothetical $\tQr{k}$ and $\tQc{k}$ to reflect the value of $\tQp{k}$, used in the update of $\pi_{k+1}$ for $k = k_l, \dots, k_{l+1}-1$ associated with running phase $\ell = l$. At initialization, $\tQr{k}(s,a) = 0, \tQc{k}(s,a) = 0$ for all $k = 0,\dots,K$, $s \in \cS$ and $a \in \A$.  The values are specified in the following cases when \cref{pseudo:policy_update} is run: 
\begin{align*}
    \tQr{k}(s,a) \leftarrow \begin{cases}
        \tQr{k}(s,a) & \text{if } s \in \Cov(\C_{l+1}) \\
        \Qr{k}(s,a) & \text{if } s \in \Cov(\C_{l}) \setminus \Cov(\C_{l+1}) \\
        \text{initial value } 0 & \text{if } s \not \in \Cov(\C_{l}),
    \end{cases} \\
    \tQc{k}(s,a) \leftarrow \begin{cases}
        \tQc{k}(s,a) & \text{if } s \in \Cov(\C_{l+1}) \\
        \Qc{k}(s,a) & \text{if } s \in \Cov(\C_{l}) \setminus \Cov(\C_{l+1}) \\
        \text{initial value } 0 & \text{if } s \not \in \Cov(\C_{l}),
    \end{cases}
\end{align*}
where $Q_k^r(s,a), Q_k^c(s,a)$ are the least-square estimates using the most recently extended $\C_l$ at that time.  The dual variable $\lambda_k$ is defined in \cref{pseudo:dual_update}. Therefore, the $\tQp{k}(s,a)$ used in the update of policy at \cref{pseudo:policy_update} can be written as $\tQp{k}(s,a) = \trunc_{[0, \frac{1}{1-\gamma}]} \tQr{k}(s,a) + \lambda_k \trunc_{[0, \frac{1}{1-\gamma}]} \tQc{k}(s,a)$.
 
\begin{customlemma}{1}
    Whenever LSE subroutine in \cref{pseudo:lse} of Confident-NPG-CMDP is executed during a running phase $\ell = l$ for $l \in \{0,\dots,L\}$, the least-square estimate $\tQp{k}(s,a)$ satisfies the following condition for all iterations $k = k_\ell,\dots, k_{\ell+1}-1$ associated with this phase and for all $s \in \Cov(\C_\ell)$ and $a \in \A$,
    \begin{align*}
        | \tQp{k}(s,a) - q_{\pi_k', \lambda_k}^p(s,a) | \leq \epsilon' \quad \text{for all } \pi_k' \in \Pi_{\pi_k, \Cov(\C_\ell)}, 
    \end{align*}
    where $\epsilon' = (1+U)(\omega + \sqrt{\alpha} B + (\omega + \epsilon) \sqrt{\tilde d}$) with $\tilde d = \tilde O(d)$ and $U$ is an upper bound on the optimal Lagrange multiplier.  Similarly, for initial state $s_0$, we have
    \begin{align*}
        | \tVc{k}(s_0) - v_{\pi_k'}^c(s_0) | \leq \omega + \sqrt{\alpha} B + (\omega + \epsilon) \sqrt{\tilde d} \quad \text{for all } \pi_k' \in \Pi_{\pi_k, \Cov(\C_\ell)}.
    \end{align*}    
\end{customlemma}
\begin{proof}
    By using the primal-dual approach, we have reduced the CMDP problem to an unconstrained problem with a single reward of the form $r_{\lambda} = r + \lambda c$.
    
    Because of \cref{pseudo:initialC0} have executed before entering the loop and \cref{pseudo:c_l_extension} have been executed in the previous phase $\ell = l-1$, the initial state $s_0 \in \Cov(C_\ell)$.  If $s_0$ is in $\Cov(\C_\ell)$ for the first time (i.e. $s_0 \in \Cov(\C_\ell) \setminus \Cov(\C_{\ell+1})$ ), then the dual variable $\lambda_k$ makes a mirror descent update in \cref{pseudo:dual_update} using $\Vc{k}(s_0)$ at that time.  After \cref{pseudo:c_l_extension} is executed, the core set for the next phase $\C_{\ell+1} = \C_{\ell}$.  This means that any states, including $s_0$, that are covered by $\C_\ell$ are then covered by $\C_{\ell+1}$. By \cref{lemma:crucial}, the initial state $s_0$ will continue to be covered by $\C_{\ell+1}$ for the remainder of the algorithm's execution. This implies that the dual variable $\tlamb{k}$ referenced in this lemma remains fixed at the value set when $s_0$ is covered by $\C_\ell$ for the first time and does not change thereafter for the duration of the algorithm's execution.

    Then the proof of this lemma follows similar logic to \cref{lemma:Q-accuracy} in the single reward setting. The result of \cref{lemma:Q-accuracy} uses \cref{lemma:lse-accuracy}.  For \cref{lemma:lse-accuracy} to hold, \cref{lemma:q_bar_accuracy} is used to verify the conditions sufficient for \cref{lemma:lse-accuracy} to hold.  For \cref{lemma:q_bar_accuracy} to hold, one of the requirement is that the behaviour policy $\pi_{k_\ell}$ and the target policy $\pi_k$ must satisfy \cref{condition:policy}.  In the following paragraphs, we show that \cref{condition:policy} indeed hold with appropriate changes to the parameters of interest.  Then it follows that \cref{lemma:q_bar_accuracy} holds and consequently \cref{lemma:lse-accuracy} holds.  Once all the sufficient conditions hold, by following similar logic as in \cref{lemma:Q-accuracy}, we have the proof.
    
    Since the policies are updated with respect to $\tQp{}$ instead of $\tQ{}$ of the single-reward setting, we need to make adjustment to $\eta_1, H, m, K$ to ensure $\pi_{k_\ell}$ and $\pi_k$ indeed satisfy \cref{condition:policy}.  First, note the value $\tQp{k}$ for $k = 0,\dots,K$ are in the range of $0$ and $\frac{1+U}{1-\gamma}$.  The upper bound value is the result of the primary reward function taking values in the range of $[0, 1]$ and the dual variable taking values in the range of $[0, U]$.   The value $U$ is defined in \cref{lemma:relaxed_U} for relaxed-feasibility and in \cref{lemma:U} for strict-feasibility, and it is an upper bound on the optimal dual variable (i.e., $\lambda^* \leq U$).  By similar argument to \cref{lemma:policy_satify}, we make the following changes to $\eta_1, H, m, K$.  We set the step size $\eta_1 = \frac{1-\gamma}{1+U} \sqrt{\frac{2 \ln(|\A|)}{K}}$, the total number of iterations $K = \frac{6^2(\sqrt{2 \ln(|\A|)} + 1)^2(1+U)^2}{(1-\gamma)^4 \epsilon^2}$, and $H =  \frac{\ln((30\sqrt{\tilde d}(1+U))/((1-\gamma)^2 \epsilon))}{1-\gamma}$.  Then, it follows that $m = \frac{(1+U)\ln(1+\rho_0)}{2H \epsilon(1-\gamma)^2}$.  

    Next, from \cref{lemma:q_bar_unbiased}, we have each $\bqr_{k}(s,a)$ and $\bqc_{k}(s,a)$ is an unbiased estimate of $\E_{\pi_k',s,a}[\sum_{h=0}^{H-1} \gamma^h R_{h+1}]$ and $\E_{\pi_k',s,a}[\sum_{h=0}^{H-1} \gamma^h C_{h+1}]$ respectively for all $\pi_k' \in \Pi_{\pi_k, \Cov(\C_l)}$.  Let $\delta' = 2 \exp\left(-\frac{2n\left(\frac{\epsilon}{4}\right)^2}{ \left( \frac{(1+\rho_0)}{1-\gamma} \right)^2} \right)$.  By \cref{lemma:q_bar_accuracy}, with probability $1-\delta'$, we have for any $(s,a) \in \C_l$,
    \begin{align*}
    | \bqr_k(s,a) - q^r_{\pi_k'} | \leq \epsilon, \quad | \bqc_k(s,a) - q^c_{\pi_k'} | \leq \epsilon
    \quad \text{for all } \pi_k' \in \Pi_{\pi_k, \Cov(\C_l)}.  
    \end{align*}

    Then the conditions of \cref{lemma:lse-accuracy} hold, and by similar argument to \cref{lemma:Q-accuracy} using \cref{lemma:lse-accuracy}, we have for each $(s,a) \in \Cov(\C_l)$, 
     \begin{align*}
        &|\tQr{k}(s,a) - \qr{\pi_k'}(s,a)| \leq \omega + \sqrt{\alpha} B + (\omega + \epsilon) \sqrt{\tilde d}, \\
        &|\tQc{k}(s,a) - \qc{\pi_k'}(s,a)| \leq \omega + \sqrt{\alpha} B + (\omega + \epsilon) \sqrt{\tilde d},
    \end{align*}
    for all $\pi_k' \in \Pi_{\pi_k, \Cov(\C_\ell)}$. 
    Then it follows that for a given $\lambda_k$, 
    \begin{align*}
        |\tQp{k}(s,a) - q^p_{\pi_k', \lambda_k}(s,a)| = |(\tQr{k}(s,a) - \qr{\pi_k'}(s,a)) + \lambda_k (\tQc{k}(s,a) -\qc{\pi_k'}(s,a)) | \leq \epsilon',
    \end{align*}
    for all $\pi_k' \in \Pi_{\pi_k, \Cov(\C_\ell)}$. 
 
 Finally, since $s_0 \in \Cov(\C_\ell)$ and $\tVc{k}(s_0) = \inner{\pi_k'(\cdot|s_0), \tQc{k}(s_0, \cdot)}$,  therefore $|\tVc{k}(s_0) - \vc{\pi_k}(s_0)| = |\inner{\pi_{k}'(\cdot| s_0), \tQc{k}(s_0, \cdot) - \qc{\pi_k'}(s_0,\cdot)}| \leq \omega + \sqrt{\alpha} B + (\omega + \epsilon) \sqrt{\tilde d}$ for all $\pi_k' \in \Pi_{\pi_k, \Cov(\C_\ell)}$.
\end{proof}

\section{Relaxed-feasibility}
\begin{lemma}{[Lemma 4.1 of \cite{jain2022towards}]} \label{lemma:relaxed_U}
Let $\lambda^*$ be the optimal dual variable that satisfies $min_{\lambda \geq 0} \max_{\pi} \vr{\pi}(\rho) + \lambda (\vc{\pi}(\rho) - b)$.  If we choose 
\begin{align*}
    U = \frac{2}{\zeta(1-\gamma)},
\end{align*}
then $\lambda^* \leq U$.
\end{lemma}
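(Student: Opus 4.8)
The plan is the standard Slater-point bound on the optimal Lagrange multiplier. Throughout write $\rho = s_0$ for the initial state and let $g(\lambda) \doteq \max_{\pi \in \Pi_{rand}} L(\pi,\lambda) = \max_{\pi}\bigl(\vr{\pi}(\rho) + \lambda(\vc{\pi}(\rho) - b)\bigr)$ be the dual function, so that by definition $\lambda^* \in \argmin_{\lambda \ge 0} g(\lambda)$ (its existence and finiteness coming from the LP reformulation and strong duality of \citep{altman2021constrained} under the Slater assumption). First I would extract a Slater policy: since $\zeta = \max_\pi \vc{\pi}(\rho) - b$, for any $\epsilon_0 > 0$ there is $\pi_{\mathrm{Sl}} \in \Pi_{rand}$ with $\vc{\pi_{\mathrm{Sl}}}(\rho) - b \ge \zeta - \epsilon_0$; I will take $\epsilon_0 \downarrow 0$ at the end (or simply use the maximizer if the supremum is attained).

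Next I would sandwich $g(\lambda^*)$ from two sides. For the upper bound, since $\lambda^*$ minimizes $g$ over $\lambda \ge 0$, we get $g(\lambda^*) \le g(0) = \max_\pi \vr{\pi}(\rho) \le \tfrac{1}{1-\gamma}$, using that $r$ takes values in $[0,1]$. For the lower bound, plugging $\pi_{\mathrm{Sl}}$ into the max defining $g$ gives
\[
g(\lambda^*) \;\ge\; L(\pi_{\mathrm{Sl}}, \lambda^*) \;=\; \vr{\pi_{\mathrm{Sl}}}(\rho) + \lambda^*\bigl(\vc{\pi_{\mathrm{Sl}}}(\rho) - b\bigr) \;\ge\; \lambda^*(\zeta - \epsilon_0),
\]
where I used $\vr{\pi_{\mathrm{Sl}}}(\rho) \ge 0$ (nonnegative rewards) and $\lambda^* \ge 0$. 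Combining the two bounds and letting $\epsilon_0 \to 0$ yields $\lambda^* \zeta \le \tfrac{1}{1-\gamma}$, hence $\lambda^* \le \tfrac{1}{\zeta(1-\gamma)} \le \tfrac{2}{\zeta(1-\gamma)} = U$, which is the claim (the factor $2$ is free slack).

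The only real subtlety — the "hard part", such as it is — is justifying that $\lambda^*$ is a well-defined finite minimizer of the dual, which I would handle by citing the LP/strong-duality facts already invoked in the primal–dual setup of \cref{sec:confident-npg-cmdp}; note the argument does \emph{not} need complementary slackness or that $\pi^*$ is feasible, only that $\lambda^*$ minimizes $g$ and that $g(0) \le (1-\gamma)^{-1}$. The remaining steps are a two-line inequality chain with no genuine computation.
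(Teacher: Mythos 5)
Your proof is correct and arrives at the same final bound, but it takes a slightly different (and in one respect more elementary) route than the paper. The paper first invokes the zero-duality-gap result of \cite{paternain2019constrained} to write $\vr{\pi^*}(s_0) = \min_{\lambda\ge 0}\max_\pi L(\pi,\lambda) = \max_\pi L(\pi,\lambda^*)$, then lower-bounds the right-hand side by $L(\pi^*_c,\lambda^*) \ge \vr{\pi^*_c}(s_0) + \lambda^*\zeta$ with $\pi^*_c$ the constraint-maximizing (Slater) policy, and rearranges to get the intermediate bound $\lambda^* \le \bigl(\vr{\pi^*}(s_0) - \vr{\pi^*_c}(s_0)\bigr)/\zeta \le \frac{1}{\zeta(1-\gamma)}$. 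You instead never use strong duality: you only use that $\lambda^*$ minimizes the dual function, so $g(\lambda^*) \le g(0) = \max_\pi \vr{\pi}(s_0) \le \frac{1}{1-\gamma}$, and you discard $\vr{\pi_{\mathrm{Sl}}}\ge 0$ in the lower bound rather than keeping it. What your version buys is self-containment — the bound follows from the definition of $\lambda^*$ as a dual minimizer plus boundedness of the reward, with no appeal to the minimax equality; what the paper's version buys is the sharper intermediate inequality $\lambda^* \le (\vr{\pi^*}(s_0)-\vr{\pi^*_c}(s_0))/\zeta$, which can be much smaller than $\frac{1}{\zeta(1-\gamma)}$ when the two values are close (though neither proof exploits this downstream). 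Your handling of the supremum via $\epsilon_0\downarrow 0$ is slightly more careful than the paper's implicit assumption that $\argmax_\pi \vc{\pi}(s_0)$ is attained, and your remark that the existence of a finite $\lambda^*$ rests on the LP/strong-duality setup matches what the paper implicitly assumes.
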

\begin{proof}
    Let $\pi^*_c(\rho) = \argmax \vc{\pi}(\rho)$, and recall that $\zeta = \vc{\pi^*_c}(\rho) - b > 0$, then
    \begin{align*}
    \vr{\pi^*}(\rho) &= \max_{\pi} \min_{\lambda \geq 0}  \vr{\pi}(\rho) + \lambda (\vc{\pi}(\rho) - b). \\
    \end{align*}
    By \cite{altman2021constrained},
    \begin{align*}
    \vr{\pi^*}(\rho) &= \min_{\lambda \geq 0} \max_{\pi} \vr{\pi}(\rho) + \lambda (\vc{\pi}(\rho) - b)  \\
    &= \max_{\pi} \vr{\pi}(\rho) + \lambda^* ( \vc{\pi}(\rho) - b) \\
    &\geq \vr{\pi^*_c}(\rho) + \lambda^* ( \vc{\pi^*_c}(\rho) - b) \\
    &\geq \vr{\pi^*_c}(\rho) + \lambda^*\zeta.
\end{align*}
After rearranging terms, we have
\begin{align*}
    \lambda^* \leq \frac{\vr{\pi^*}(\rho) -  \vr{\pi^*_c}(\rho)}{\zeta} \leq \frac{1}{\zeta(1-\gamma)}. 
\end{align*}
By choosing $U = \frac{2}{\zeta(1-\gamma)}$, we have $\lambda^* \leq U$.  
\end{proof}

\begin{definition}
    \begin{align*}
        R^p(\pi^*, K) &= \sum_{k=0}^{K-1} \E_{s' \sim d_{\pi^*}(s_0), s' \in \Cov(\C_0)} \left[\inner{\pi^*(\cdot |s') - \pi_k(\cdot|s'),  \tQr{k}(s', \cdot) + \tlamb{k} \tQc{k}(s', \cdot)} \right], \\
        R^d(\lambda, K) &= \sum_{k=0}^{K-1} (\tlamb{k} - \lambda) (\tVc{k}(s_0) - b).
    \end{align*}
\end{definition}

\begin{customlemma}{2}
     Let $\delta \in (0,1]$ be the failure probability, $\epsilon > 0$ be the target accuracy, and $s_0$ be the initial state.  Assuming for all $s \in \Cov(\C_0)$ and all $a \in \A$, $|\tQp{k}(s,a) - \qp{\pi_k', \lambda_k}(s,a)| \leq \epsilon'$ and $|\tVc{k}(s_0) - \vc{\pi_k'}(s_0)| \leq \omega + \sqrt{\alpha} B + (\omega + \epsilon) \sqrt{\tilde d}$ for all $\pi_k' \in \Pi_{\pi_k, \Cov(\C_0)}$, then, with probability $1-\delta$, Confident-NPG-CMDP returns a mixture policy $\bar \pi_{K}$ that satisfies the following,
    \begin{align*}
    &\vr{\pi^*}(s_0) - \vr{\bpi{K}}(s_0) \leq \frac{5 \epsilon'}{1-\gamma} +  \frac{(\sqrt{2 \ln(|\A|)}+1)(1+U)}{(1-\gamma)^2 \sqrt{K}}, \\
    &b - \vc{\bpi{K}}(s_0) \leq [b-\vc{\bpi{K}}(s_0)]_+ \leq \frac{5 \epsilon'}{(1-\gamma)(U-\lambda^*)} + \frac{(\sqrt{2 \ln(|\A|)} +1)(1+U)}{(1-\gamma)^2(U-\lambda^*) \sqrt{K}},
    \end{align*}
    where $\epsilon' = (1+U)(\omega + (\sqrt{\alpha}B + (\omega + \epsilon)\sqrt{\tilde d}))$ with $\tilde d = \tilde O(d)$.
\end{customlemma}
\begin{proof}
For the following result, we consider a $k \in \{0,\dots,K\}$ with its corresponding $l \in \{0,\dots,L\}$.  At the time of termination, all $\C_l$ are equal.  

To obtain a bound on the suboptimality and the constraint violation, we apply \cref{lemma:value-diff} with $\pi= \pi^*$ of CMDP, $\tQp{k} = \tQr{k} + \tlamb{k} \tQc{k}$ instead of $\tQ{k}$, and \cref{lemma:Q-accuracy-cmdp} instead of \cref{lemma:Q-accuracy} of the single reward setting.  Then, we have
\begin{align}
    &\frac{1}{K} \sum_{k=0}^{K-1}\vp{\pi^*, \lambda_{k}}(s_0) - \vp{\pi_{k}, \lambda_{k}}(s_0) \label{eq:122} \\
    &\leq \frac{4 \epsilon'}{1-\gamma} +  \frac{1}{K(1-\gamma)} \sum_{k=0}^{K-1} \E_{s' \sim d_{\pi^*}(s_0), s' \in \Cov(\C_0)}\left[ \inner{\tQr{k}(s', \cdot) + \tlamb{k} \tQc{k}(s', \cdot), \pi^*(\cdot | s') - \pi_k(\cdot | s') }\right] \nonumber \\
    &= \frac{4 \epsilon'}{1-\gamma} + \frac{R^p(\pi^*, K)}{K(1-\gamma)}. \nonumber
\end{align}
By Proposition 28.6 of \cite{lattimore2020bandit},  the primal regret $R^p(\pi^*, K) \leq \frac{1+U}{1-\gamma} \sqrt{2K \ln(|\A|)}$ with $\eta_1 = \frac{1-\gamma}{1+U} \sqrt{\frac{2 \ln(|\A|)}{K}}$.   Expanding \cref{eq:122} in terms of $\vr{}, \vc{}$, we have
\begin{align}
    &\frac{1}{K} \sum_{k = 0}^{K-1}\vr{\pi^*}(s_0) - \vr{\pi}(s_0) + \frac{1}{K} \sum_{k = 0}^{K-1} \tlamb{k} (\vc{\pi^*}(s_0) - \vc{\pi_k}(s_0)) \nonumber \\
    &\leq \frac{4 \epsilon'}{1-\gamma} + \frac{1+U}{(1-\gamma)^2} \sqrt{\frac{2 \ln(|\A|)}{K}}. \label{eq:123}
\end{align}
Furthermore, by \cref{lemma:Q-accuracy-cmdp}, we have $| \tQc{k}(s,a) - q_{\pi_k'}^c(s,a) | \leq \omega + \sqrt{\alpha} B + (\omega + \epsilon) \sqrt{\tilde d}$ for any $s \in \Cov(\C_l)$.  Recall $\tVc{k}(s_0) = \inner{\pi_k(\cdot | s_0), \tQc{k}(s_0, \cdot)}$, then it follows that $\lambda_k(\vc{\pi_k}(s_0) - \tVc{k}(s_0)) \leq 
|\lambda_k (\vc{\pi_k}(s_0) - \tVc{k}(s_0))| \leq U(\omega + \sqrt{\alpha} B + (\omega + \epsilon) \sqrt{\tilde d}) \leq \epsilon'$.
\begin{align*}
    &\frac{1}{K}\sum_{k=0}^{K-1} \lambda_k (\vc{\pi_k}(s_0) - \vc{\pi^*}(s_0)) \leq \frac{1}{K} \sum_{k=0}^{K-1} \lambda_k (\vc{\pi_k}(s_0) - b) \\
    &= \frac{1}{K} \sum_{k=0}^{K-1}  \lambda_k(\vc{\pi_k}(s_0) - \tVc{k}(s_0)) + \lambda_k(\tVc{k}(s_0) - b)\\
    &\leq \epsilon' + \frac{R^d(0,K)}{K} \\
    & \leq \epsilon' + \frac{U}{(1-\gamma) \sqrt{K}}.
\end{align*}
  The update to the dual variable is a mirror descent algorithm.  By Proposition 28.6 of \cite{lattimore2020bandit}, the dual regret $R^d(0, K) \leq \frac{U\sqrt{K}}{1-\gamma}$ with $\eta_2 = \frac{U(1-\gamma)}{\sqrt{K}}$.  Altogether,
\begin{align*}
   &\frac{1}{K} \sum_{k = 0}^{K-1}\vr{\pi^*}(s_0) - \vr{\pi_k}(s_0) \leq \frac{4 \epsilon'}{1-\gamma} + \frac{1+U}{(1-\gamma)^2} \sqrt{\frac{2 \ln(|\A|)}{K}} + \epsilon' + \frac{U}{(1-\gamma)\sqrt{K}} \\
   &\leq \frac{5 \epsilon'}{1-\gamma} +  \frac{(\sqrt{2 \ln(|\A|)}+1)(1+U)}{(1-\gamma)^2 \sqrt{K}}
\end{align*}

For bounding the constraint violations, we first incorporate $R^d(\lambda, K)$ into \cref{eq:123} and rearrange terms to obtain:
\begin{align*}
    &\frac{1}{K} \sum_{k=0}^{K-1} \vr{\pi^*}(s_0) - \vr{\pi_k}(s_0) + \frac{\lambda}{K} \sum_{k = 0}^{K-1}(b - \vc{\pi_k}(s_0)) \\
    & \leq \frac{1}{K} \sum_{k=0}^{K-1} (\tlamb{k} - \lambda) (\vc{\pi_k}(s_0) - b) + \frac{4 \epsilon'}{1-\gamma} + \frac{(1+U)\sqrt{2 \ln(|\A|)}}{(1-\gamma)^2 \sqrt{K}} \\
    &= \frac{1}{K} \sum_{k=0}^{K-1} (\tlamb{k} - \lambda) (\vc{\pi_k}(s_0) - \tVc{k}(s_0)) + \frac{1}{K} \sum_{k=0}^{K-1} (\tlamb{k} - \lambda) (\tVc{k}(s_0)  - b) \\
    &+ \frac{4 \epsilon'}{1-\gamma} + \frac{(1+U)\sqrt{2 \ln(|\A|)}}{(1-\gamma)^2 \sqrt{K}} \\
    &= \epsilon' + \frac{R^d(\lambda, K)}{K} + \frac{4 \epsilon'}{1-\gamma} + \frac{(1+U)\sqrt{2 \ln(|\A|)}}{(1-\gamma)^2 \sqrt{K}} \\
    &\leq \frac{5 \epsilon'}{1-\gamma} + \frac{(1+U)(\sqrt{2 \ln(|\A|)}+1)}{(1-\gamma)^2 \sqrt{K}}
\end{align*}

There are two constraint cases.  Case one is no violation: $b - \vc{\bpi{K}}(s_0) \leq 0$. Then, it also holds that $b - \triangle - \vc{\bpi{K}}(s_0) \leq 0$ for any $\triangle \geq 0$, which is what we want to show.  Case two is violation: $b - \vc{\bpi{K}}(s_0) > 0$, for which case, $\lambda = U$.  Using notation $[x]_+ = \max\{x, 0\}$, we have
\begin{align*}
    &\frac{1}{K} \sum_{k=0}^{K-1} \vr{\pi^*}(s_0) - \vr{\pi_k}(s_0) + \frac{U}{K} \left[\sum_{k=0}^K b - \vc{\pi}(s_0)\right]_{+}\\ 
    &\leq \frac{5 \epsilon'}{1-\gamma} +  \frac{(1+U)(\sqrt{2 \ln(|\A|)}+1)}{(1-\gamma)^2 \sqrt{K}}. 
\end{align*}

By Lemma B.2 of \cite{jain2022towards}, we have 
\begin{align*}
    [b-\vc{\bpi{K}}(s_0)]_+ \leq \frac{5 \epsilon'}{(1-\gamma)(U-\lambda^*)} + \frac{(\sqrt{2 \ln(|\A|)} +1)(1+ U)}{(1-\gamma)^2(U-\lambda^*) \sqrt{K}}.    
\end{align*}
\end{proof}

\begin{customthm}{1}
With probability $1-\delta$, the mixture policy $\bpi{K}$ returned by confident-NPG-CMDP ensures that
\begin{align}
    \vr{\pi^*}(s_0) - \vr{\bpi{K}}(s_0) &= \frac{5(1+U)(1+\sqrt{\tilde d})}{1-\gamma} \omega + \epsilon, \label{eq:relaxed_reward_sub} \\
    \vc{\bpi{K}}(s_0) &\geq b - \left(\frac{5 (1+U)(1+\sqrt{\tilde d})}{(1-\gamma)} \omega +  \epsilon \right).    \label{eq:relaxed_constraint}
\end{align}
if we choose $n =  \frac{30^2 (1+\rho_0)^2(1+U)^2 \tilde d}{2\epsilon^2 (1-\gamma)^4}\ln\left(\frac{8 \tilde d(L+1)}{\delta}\right)$, $\alpha = \frac{(1-\gamma)^2 \epsilon^2}{30^2(1+U)^2 B^2}$, $K= \frac{6^2(\sqrt{2 \ln(|\A|)} + 1)^2(1+U)^2}{(1-\gamma)^4 \epsilon^2}$, $\eta_1 = \frac{1-\gamma}{1+U} \sqrt{\frac{2 \ln(|\A|)}{K}}$, $\eta_2 = \frac{U(1-\gamma)}{\sqrt{K}}$, $H = \frac{\ln((30\sqrt{\tilde d}(1+U))/((1-\gamma)^2 \epsilon))}{1-\gamma}$, $m = \frac{(1+U)\ln(1+\rho_0)}{2 \epsilon H (1-\gamma)^2}$, and $U = \frac{2}{\zeta(1-\gamma)}$.  
\\ \\
Furthermore, the algorithm utilizes at most $\tilde O(d^2 (1+U)^3 \epsilon^{-3}(1-\gamma)^{-8})$ queries in the local-access setting.
\end{customthm}
\begin{proof}
From \cref{lemma:relaxed-feasibility}, we have 
\begin{align}
    \vr{\pi^*}(s_0) - \vr{\bpi{K}}(s_0) &\leq \frac{5 \epsilon'}{(1-\gamma)} + \frac{(\sqrt{2 \ln(|\A|)}+1)(1 + U)}{(1-\gamma)^2 \sqrt{K}}, \label{eq:140} \\
    b - \vc{\bpi{K}}(s_0) &\leq \frac{5 \epsilon'}{(1-\gamma)(U-\lambda^*)} + \frac{(\sqrt{2 \ln(|\A|)}+1)(1 + U)}{(1-\gamma)^2(U-\lambda^*)\sqrt{K}}, \label{eq:141}
\end{align}

Let $C = \frac{1}{\zeta(1-\gamma)}$ for a $\zeta \in (0, \frac{1}{1-\gamma}]$.  By \cref{lemma:relaxed_U}, we chose $U = 2C$ and $\lambda^* \leq C$.  It follows that $\frac{1}{U-\lambda^*} \leq \frac{1}{C} = \zeta(1-\gamma) \leq 1$, and thus the right hand side of \cref{eq:141} is upper bounded by the right hand side of \cref{eq:140}.  Recall $\epsilon' = (1+U) \left(\omega + \left(\sqrt{\alpha}B+ (\omega + \epsilon) \sqrt{\tilde d} \right)\right)$.  Then, the goal is to set the parameters $H, n, K$, and $\alpha$ appropriately so that the $A, B$ and $C$ of the following expression, when added together, is less than $\epsilon$:
\begin{align}
    &\frac{5(1+U)(1+\sqrt{\tilde d})\omega}{1-\gamma} + \underbrace{\frac{5(1+U) \sqrt{\alpha}B}{1-\gamma}}_{A} + \underbrace{\frac{5(1+U)\epsilon \sqrt{\tilde d}}{1-\gamma}}_{B} + \underbrace{\frac{(\sqrt{2 \ln(|\A|)}+1))(1 + U)}{(1-\gamma)^2\sqrt{K}}}_{C}
 \label{eq:155}.
\end{align}

First, we set $n$ appropriately so that the failure probability is well controlled.  The failure probability depends on the number of times Gather-data subroutine (\cref{alg:gather}) is executed.  Gather-data is run for phase $0,\dots,L$.  Each phase has at most $\tilde d$ elements, and recall $\tilde d$ is defined in \cref{lemma:c_l_size}.  Therefore, Gather-data would return success at most $\tilde d$ times. Altogether, Gather-data can return success at most $\tilde d (L+1)$ times, each with probability of at least $1-\delta' = 1 - \delta/(\tilde d (L+1))$.  By a union bound, Gather-data returns success in all occasions with probability $1-\delta$.  

By setting $H = \frac{\ln((30\sqrt{\tilde d}(1+U))/((1-\gamma)^2 \epsilon))}{1-\gamma}$ and $n = \frac{30^2(1+\rho_0)^2(1+U)^2 \tilde d}{2 \epsilon^2 (1-\gamma)^4}\ln\left(\frac{8 \tilde d(L+1)}{\delta}\right)$, we have for any $l \in \{0,\dots,L\}$, $k = k_l, \dots, k_{l+1}-1$, the $|\bqr_{k}(s,a) - q^r_{\pi_k'}(s,a) | \leq \frac{4}{6} \frac{(1-\gamma)\epsilon}{5(1+U)\sqrt{\tilde d}}$ and $|\bqc_{k}(s,a) - q^c_{\pi_k'}(s,a) | \leq \frac{4}{6} 
 \frac{(1-\gamma)\epsilon}{5(1+U)\sqrt{\tilde d}}$ hold for all $\pi_k' \in \Pi_{\pi_k, \Cov(\C_l)}$ with probability at least $1-\delta$.  Then, this is used in the accuracy guarantee of the least-square estimate (\cref{lemma:Q-accuracy-cmdp}) and finally in the suboptimality bound of \cref{lemma:relaxed-feasibility}.  
 
 Then, we can set $\alpha$ of \cref{eq:155} to be equal to $\frac{\epsilon}{6}$ and solve for $\alpha = \frac{\epsilon^2(1-\gamma)^2}{30^2 (1+U)^2 B^2}$. Finally, by setting $K = \frac{6^2(\sqrt{2 \ln(|\A|)} + 1)^2(1+U)^2}{(1-\gamma)^4 \epsilon^2}$, we have $C$ of \cref{eq:155} be less than $\frac{\epsilon}{6}$.  Altogether, we have the reward suboptimality satisfying \cref{eq:relaxed_reward_sub} and constraint satisfying \cref{eq:relaxed_constraint}.

For the query complexity, we note that our algorithm does not query the simulator in every iteration, but at fixed intervals, which we call phases.  Each phase is $m$ iterations in length.  There are total of $L = \floor{K/\m} \leq K/m = \tilde O\left ( (1+U) (1-\gamma)^{-3} \epsilon^{-1} \right)$ phases.  In each  phases, Gather-data subroutine (\cref{alg:gather}) can be run.  Each time Gather-data returns success with trajectories, the subroutine would have made at most $nH$ queries.  Gather-data is run for each of the elements in $\C_l$, $l \in \{0,\dots,L\}$.  By the time the algorithm terminates, all $\C_l$'s are the same.  Since there are at most $\tilde O(d)$ elements in each $\C_l$, the algorithm will make a total of $nH(L+1)|\C_0|$ number of queries to the simulator.   Since we have $H = \tilde O((1-\gamma)^{-1})$, $n = \tilde O((1+U)^2 d \epsilon^{-2} (1-\gamma)^{-4})$ and $L = \tilde O((1+U)\epsilon^{-1}(1-\gamma)^{-3})$, the sample complexity is $\tilde O(d^2 (1+U)^3 (1-\gamma)^{-8} \epsilon^{-3})$.  
\end{proof}

\section{Strict-feasibility}
\begin{lemma} \label{lemma:surgoate_subopt}
     Let $\spis$ be defined as in \cref{eq:surrogate_obj} and $\pi^*$ be an optimal policy of CMDP.  Then, for a $\triangle > 0$,
\begin{align*}
    \vr{\pi^*}(s_0) - \vr{\spis}(s_0) \leq \lambda^* \triangle,
\end{align*}
where $\lambda^*$ is an optimal dual variable that satisfies $\min_{\lambda \geq 0} \max_{\pi} \vr{\pi}(s_0) + \lambda (\vc{\pi}(s_0) - b')$.
\end{lemma}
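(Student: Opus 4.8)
The plan is to mirror, for the surrogate CMDP \cref{eq:surrogate_obj}, exactly the strong-duality argument that Section~3 already invokes for the original CMDP. Write $L'(\pi,\lambda) \doteq \vr{\pi}(s_0) + \lambda(\vc{\pi}(s_0) - b')$ for the Lagrangian of the surrogate problem, and recall that $\lambda^*$ in the statement is its optimal dual variable, i.e.\ $\lambda^* \in \argmin_{\lambda \ge 0}\max_{\pi} L'(\pi,\lambda)$. The surrogate CMDP is feasible whenever $\triangle \le \zeta$ (since $\zeta = \max_\pi \vc{\pi}(s_0) - b$ is Slater's constant), and under feasibility the LP reformulation together with strong duality \citep{altman2021constrained, paternain2019constrained} gives
\begin{align*}
    \vr{\spis}(s_0) = \max_{\pi}\min_{\lambda \ge 0} L'(\pi,\lambda) = \min_{\lambda \ge 0}\max_{\pi} L'(\pi,\lambda) = \max_{\pi}\left[\vr{\pi}(s_0) + \lambda^*(\vc{\pi}(s_0) - b')\right].
\end{align*}

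Next I would plug the original optimal policy $\pi^*$ into the last maximum. Since $\pi^*$ is feasible for the original CMDP we have $\vc{\pi^*}(s_0) \ge b$, hence $\vc{\pi^*}(s_0) - b' = \vc{\pi^*}(s_0) - b - \triangle \ge -\triangle$. Using $\lambda^* \ge 0$ this yields
\begin{align*}
    \vr{\spis}(s_0) \;\ge\; \vr{\pi^*}(s_0) + \lambda^*(\vc{\pi^*}(s_0) - b') \;\ge\; \vr{\pi^*}(s_0) - \lambda^*\triangle,
\end{align*}
and rearranging gives the claimed bound $\vr{\pi^*}(s_0) - \vr{\spis}(s_0) \le \lambda^*\triangle$.

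The only delicate point — and the one I would be most careful about — is the appeal to strong duality and to the existence and attainment of the optimal dual variable $\lambda^*$ for the \emph{surrogate} CMDP, including that the optimal primal value is realized by maximizing $L'(\cdot,\lambda^*)$. This is not a new obstacle: it is precisely the structural fact the paper already uses for the original CMDP in Section~3, so I would simply note that it transfers verbatim once Slater's condition holds for the shifted threshold $b'$, i.e.\ for $\triangle \le \zeta$ (which is the regime in which $\triangle = O(\epsilon(1-\gamma)\zeta)$ is later chosen). The remaining manipulation is a single line, so beyond the duality citation there is essentially nothing to grind through.
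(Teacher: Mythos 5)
Your proposal is correct and follows essentially the same route as the paper: invoke strong duality for the surrogate Lagrangian to write $\vr{\spis}(s_0) = \max_\pi[\vr{\pi}(s_0) + \lambda^*(\vc{\pi}(s_0) - b')]$, lower-bound the maximum by plugging in $\pi^*$, and use $\vc{\pi^*}(s_0) \geq b$ to get $\vc{\pi^*}(s_0) - b' \geq -\triangle$. Your explicit caveat that Slater's condition must hold for the shifted threshold $b'$ (i.e.\ $\triangle \le \zeta$) is a point the paper's proof leaves implicit, but it changes nothing substantive.
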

\begin{proof}
    \begin{align*}
    \vr{\spis}(s_0) &= \max_{\pi} \min_{\lambda \geq 0}  \vr{\pi}(s_0) + \lambda (\vc{\pi}(s_0) - b').
    \end{align*}
    By \cite{altman2021constrained},
    \begin{align*}
    \vr{\spis}(s_0) &= \min_{\lambda \geq 0} \max_{\pi} \vr{\pi}(s_0) + \lambda (\vc{\pi}(s_0) - b')  \\
    &= \max_{\pi} \vr{\pi}(s_0) + \lambda^* ( \vc{\pi}(s_0) - b') \\
    &\geq \vr{\pi^*}(s_0) + \lambda^* ( \vc{\pi^*}(s_0) - (b + \triangle)) \\
    &\geq \vr{\pi^*}(s_0) + \lambda^*(b - b - \triangle) \quad \text{because } \vc{\pi^*}(s_0) \geq b \\
    &= \vr{\pi^*}(s_0) - \lambda^* \triangle.
\end{align*}
After rearranging the terms, we get the result.
\end{proof}

\begin{lemma}\label{lemma:U}
Let $\lambda^*$ be the optimal dual variable that satisfies $min_{\lambda \geq 0} \max_{\pi} \Vr{\pi}(s_0) + \lambda (\Vc{\pi}(s_0) - b')$.  If we choose 
\begin{align*}
    U = \frac{4}{\zeta(1-\gamma)},
\end{align*}
then $\lambda^* \leq U$ requiring that $\triangle \in (0, \frac{\zeta}{2})$.
\end{lemma}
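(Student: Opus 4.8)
The plan is to mirror the proof of \cref{lemma:relaxed_U} (Lemma 4.1 of \cite{jain2022towards}), applied to the surrogate CMDP of \cref{eq:surrogate_obj} with the tightened threshold $b' = b + \triangle$. The single new wrinkle is that raising the threshold from $b$ to $b'$ shrinks the effective Slater constant: for the surrogate problem it equals $\max_\pi \vc{\pi}(s_0) - b' = \zeta - \triangle$. The hypothesis $\triangle \in (0, \frac{\zeta}{2})$ ensures $\zeta - \triangle > \frac{\zeta}{2} > 0$, so the surrogate CMDP is strictly feasible, $\spis$ is well-defined, and strong duality still applies.

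Concretely, first I would let $\pi^*_c \in \argmax_\pi \vc{\pi}(s_0)$, so that $\vc{\pi^*_c}(s_0) - b' = \zeta - \triangle > 0$; this is precisely Slater's condition for the surrogate CMDP. By strong duality \cite{paternain2019constrained},
\[
\vr{\spis}(s_0) = \min_{\lambda \ge 0} \max_\pi \vr{\pi}(s_0) + \lambda(\vc{\pi}(s_0) - b') = \max_\pi \vr{\pi}(s_0) + \lambda^*(\vc{\pi}(s_0) - b').
\]
Lower-bounding the maximum by its value at $\pi^*_c$,
\[
\vr{\spis}(s_0) \ge \vr{\pi^*_c}(s_0) + \lambda^*(\vc{\pi^*_c}(s_0) - b') = \vr{\pi^*_c}(s_0) + \lambda^*(\zeta - \triangle).
\]
Rearranging, and using $\vr{\spis}(s_0) - \vr{\pi^*_c}(s_0) \le \frac{1}{1-\gamma}$ (rewards lie in $[0,1]$) together with $\zeta - \triangle > \frac{\zeta}{2}$, I obtain
\[
\lambda^* \le \frac{\vr{\spis}(s_0) - \vr{\pi^*_c}(s_0)}{\zeta - \triangle} \le \frac{1}{(1-\gamma)(\zeta-\triangle)} \le \frac{2}{\zeta(1-\gamma)} \le \frac{4}{\zeta(1-\gamma)} = U.
\]

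There is no genuine obstacle here: the computation is essentially the one in \cref{lemma:relaxed_U}, and the stated $U$ even carries slack (one could take $U = \frac{2}{\zeta(1-\gamma)}$). The only point worth being deliberate about is the choice of constant — taking $U = \frac{4}{\zeta(1-\gamma)}$ leaves $U - \lambda^* \ge \frac{2}{\zeta(1-\gamma)} = \Omega(1/(\zeta(1-\gamma)))$, which is the headroom needed when the constraint-violation bound of \cref{lemma:relaxed-feasibility}, which has $U - \lambda^*$ in the denominator, is later instantiated on the surrogate CMDP.
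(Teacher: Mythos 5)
Your proof is correct and follows essentially the same route as the paper's: invoke strong duality for the surrogate CMDP, lower-bound the max at $\pi^*_c$ to get $\vr{\spis}(s_0) \ge \vr{\pi^*_c}(s_0) + \lambda^*(\zeta - \triangle)$, and use $\triangle < \zeta/2$ together with the value bound $\frac{1}{1-\gamma}$ to conclude $\lambda^* \le \frac{2}{\zeta(1-\gamma)} \le U$. The only (immaterial) difference is that you divide by $\zeta-\triangle$ before bounding it by $\zeta/2$, whereas the paper substitutes $\zeta - \triangle \ge \zeta/2$ first; your closing remark about why the factor-of-two slack in $U$ is retained is a correct reading of how the lemma is used downstream.
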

\begin{proof}
    Let $\pi^*_c(s_0) = \argmax \Vc{\pi}(s_0)$, and recall that $\zeta = \Vc{\pi^*_c}(s_0) - b > 0$, then
    \begin{align*}
    \vr{\spis}(s_0) &= \max_{\pi} \min_{\lambda \geq 0}  \vr{\pi}(s_0) + \lambda (\vc{\pi}(s_0) - b')
    \end{align*}
     By \cite{altman2021constrained},
    \begin{align*}
   \vr{\spis}(s_0) &= \min_{\lambda \geq 0} \max_{\pi} \vr{\pi}(s_0) + \lambda (\vc{\pi}(s_0) - b')  \\
    &= \max_{\pi} \vr{\pi}(s_0) + \lambda^* ( \Vc{\pi}(s_0) - b') \\
    &\geq \vr{\pi^*_c}(s_0) + \lambda^* ( \vc{\pi^*_c}(s_0) - (b + \triangle)) \\
    &= \vr{\pi^*_c}(s_0) + \lambda^*(\zeta - \triangle).
\end{align*}
If we require $\triangle \in (0, \frac{\zeta}{2})$, then we have
\begin{align}
    \vr{\spis}(s_0) &\geq \vr{\pi^*_c}(s_0)  + \lambda^*(\zeta - \frac{\zeta}{2}) \nonumber \\
    &= \vr{\pi^*_c}(s_0) + \frac{\lambda^*\zeta}{2} \label{eq:86}
\end{align}

After rearranging terms in \cref{eq:86}, we have
\begin{align*}
    \lambda^* \leq \frac{2(\vr{\spis}(s_0) -  \vr{\pi^*_c}(s_0))}{\zeta} \leq \frac{2}{\zeta(1-\gamma)}. 
\end{align*}
By choosing $U = \frac{4}{\zeta(1-\gamma)}$, $\lambda^* \leq U$.  
\end{proof}

\begin{customthm}{2}
    With probability $1-\delta$, a target $\epsilon > 0$,  the mixture policy $\bpi{K}$ returned by confident-NPG-CMDP ensures that $\vr{\pi^*}(s_0) - \vr{\bpi{K}}(s_0) \leq \epsilon$ and $\vc{\bpi{K}}(s_0) \geq b$, if assuming the misspecificiation error $\omega \leq \frac{\triangle(1-\gamma)}{70(1+U)(1+ \sqrt{\tilde d})}$, and if we choose $\triangle = \frac{\epsilon(1-\gamma)\zeta}{8}, \alpha = \frac{\triangle^2 (1-\gamma)^2}{70^2 (1+U)^2 B^2},
    K = \frac{14^2 (\sqrt{2 \ln(|\A|)}+1)^2(1+U)^2}{(1-\gamma)^4 \triangle^2},
    n = \frac{(14*5)^2 (1+\rho_0)^2 \tilde d (1+U)^2}{2\triangle^2(1-\gamma)^4} \ln\left(\frac{8 \tilde d (L+1)}{\delta} \right),
    H = \frac{\ln\left(\frac{14*5(1+U)\sqrt{\tilde d}}{\triangle (1-\gamma)^2} \right)}{1-\gamma}, m = \frac{(1+U)\ln(1+\rho_0)}{2\triangle H (1-\gamma)^2},  U = \frac{4}{\zeta(1-\gamma)}$.  
    
    Furthermore, the algorithm utilizes at most $\tilde O(d^2 (1+U)^3 (1-\gamma)^{-11} \epsilon^{-3} \zeta^{-3})$ queries in the local-access setting.
\end{customthm}
\begin{proof}
    Let $\lambda^*$ be the optimal dual variable that satisfies the Lagrangian primal-dual of the surrogate CMDP defined by \cref{eq:surrogate_obj} (i.e., $\lambda^* = \argmin_{\lambda \geq 0} \max_{\pi} \vr{\pi}(s_0) + \lambda (\vc{\pi}(s_0) - b')$).
    \begin{align*}
    &\vr{\pi^*}(s_0) - \vr{\bpi{K}}(s_0) \\
    &= \underbrace{\left[\vr{\pi^*}(s_0) - \vr{\spis}(s_0) \right]}_{\text{surrogate suboptimality}} + \underbrace{\left[  \vr{\spis}(s_0) - \vr{\bpi{K}}(s_0) \right]}_{\text{Confident-NPG-CMDP suboptimality}} \\
    &\leq \lambda^* \triangle + \bar \epsilon,
    \end{align*}
where $\bar \epsilon = \frac{5(1+U)(1+\sqrt{\tilde d})\omega}{1-\gamma} + \frac{5(1+U) \sqrt{\alpha}B}{1-\gamma} + \frac{5(1+U)\epsilon \sqrt{\tilde d}}{1-\gamma} + \frac{(\sqrt{2 \ln(|\A|)} +1)(1 + U)}{(1-\gamma)^2\sqrt{K}}$.
By \cref{lemma:surgoate_subopt}, $\vr{\pi^*}(s_0) - \vr{\spis}(s_0) \leq \lambda^* \triangle$.  We can further upper bound $\lambda^*$ by $U = \frac{4}{\zeta(1-\gamma)}$ using \cref{lemma:U} and requiring $\triangle \in \left(0, \frac{\zeta}{2}\right)$. 
Together with \cref{theorem:relaxed_feasibility_bound}, we have Confident-NPG-CMDP return $\bpi{K}$ s.t. 
\begin{align}
    &\vr{\pi^*}(s_0) - \vr{\bpi{K}}(s_0)
    \leq  \frac{4\triangle}{\zeta(1-\gamma)} + \bar \epsilon \quad \text{and} \label{eq:175}  \\
    &b' - \Vc{\bpi{K}}(s_0) \leq \bar \epsilon. \nonumber
\end{align}
Now, we need to set $\triangle$ such that 1) $\triangle \in \left( 0, \frac{\zeta}{2} \right)$ and 2) $\triangle - \bar \epsilon \geq 0$ are satisfied.  If we choose $\triangle = \frac{\epsilon(1-\gamma)\zeta}{8}$, then the first condition is satisfied. This is because $\epsilon \in \left( 0, \frac{1}{1-\gamma} \right]$, and thus $\triangle \leq \frac{\zeta}{8} < \frac{\zeta}{2}$.

Next, we check if our choice of $\triangle = \frac{\epsilon(1-\gamma)\zeta}{8}$ satisfies $\triangle - \bar \epsilon \geq 0$. For the condition $\triangle - \bar \epsilon \geq 0$ to be true, we make an assumption on the misspecification error $\omega \leq \frac{\triangle(1-\gamma)}{70(1+U)(1+ \sqrt{\tilde d})}$, and pick $n, \alpha, K, \eta_1, \eta_2, H, m$ to be the values outlined in this theorem.  Consequently, we have $\bar \epsilon = \frac{1}{2} \triangle$.  Then, we have ensured the condition $\triangle - \bar \epsilon \geq 0$ is satisfied.

We note that because $\zeta \in \left(0, \frac{1}{1-\gamma} \right)$, we have $\bar \epsilon \leq \frac{\epsilon}{16} \leq \epsilon$.  following from \cref{eq:175}, we have $\vr{\pi^*}(s_0) - \vr{\bpi{K}}(s_0) \leq \epsilon$ and $b' - \Vc{\bpi{K}}(s_0) \leq \frac{\triangle}{2}$.  Then it follows that $b+ \frac{\triangle}{2} \leq \Vc{\bpi{K}}(s_0)$.  Strict-feasilbility is achieved.

For the query complexity, we note that our algorithm does not query the simulator in every iteration, but at fixed intervals, which we call phases.  Each phase is $m$ iterations in length.  There are total of $L = \floor{K/\m} \leq K/m = \tilde O\left ((1+U)(1-\gamma)^{-3}\triangle^{-1} \right)$ phases.  In each phase, Gather-data subroutine (\cref{alg:gather}) can be run.  Each time Gather-data subroutine returns with trajectories, the subroutine would have made at most $nH$ queries.  Gather-data is run for each of the element in $\C_l$, $l \in \{0,\dots,L\}$.  By the time the algorithm terminates, all $\C_l$'s are the same.  Since there are at most $\tilde O(d)$ elements in each $\C_l$, the algorithm will make a total of $nH(L+1)|\C_0|$ number of queries to the simulator.   Since we have $H = \tilde O((1-\gamma)^{-1})$, $n = \tilde O((1+U)^2 d (1-\gamma)^{-4} \triangle^{-2})$, $L = \tilde O\left ((1+U)(1-\gamma)^{-3}\triangle^{-1} \right)$, and $\triangle = \frac{\epsilon \zeta (1-\gamma)}{8}$, the sample complexity is $\tilde O(d^2 (1+U)^3 (1-\gamma)^{-11} \epsilon^{-3} \zeta^{-3})$.  
\end{proof}

\section{A discussion on memory cost and some implementation details} \label{sec:memory}
By recording the states added to each core set during extensions and their corresponding least-squares weights, we can reconstruct the policy as needed. This section explains how to track this information and how it facilitates policy reconstruction.  

In phase $l$, the policies $\pi_k$ for iterations $k = k_l+1, \dots, k_{l+1} -1 $ depend on the core set $\mathcal{C}_l$. Since $\mathcal{C}_l$ can be extended multiple times, these policies may change accordingly.  However, we do not want to change the action distribution for states have already passed the uncertainty test in previous extensions (i.e. $s \in \Cov(\C_{l+1})$).  For such states, action distributions are based on the least-square estimation of the core set at that time they passed the uncertainty test for the first time (i.e., $s \in \Cov(\C_l) \setminus \Cov(\C_{l+1})$).  Therefore, it is essential to track newly added states in each extension and store their corresponding least-square weights to recompute their action distributions.

To achieve this, $\mathcal{C}_0$ is extended only via \cref{pseudo:initialC0} and \cref{pseudo:add_to_c_0} of \cref{alg:cmdp}, while other core sets $\mathcal{C}_l$, where $l \in \{ 1,\dots,L+1\}$ are extended solely via \cref{pseudo:c_l_extension} during the running phase $\ell = l-1$. We mark newly added elements in \cref{pseudo:initialC0}, \cref{pseudo:add_to_c_0}, and \cref{pseudo:c_l_extension}.  After executing \cref{pseudo:lse}, we store the least-square weights associated with these newly added state-action pairs.

By keeping track of the state-action pairs that are newly added in each extension and saving the corresponding least-square weights, we can construct the policy $\pi_{k+1}$ associated with $\C_l$.  Let $\mathcal{C}_l^0 = \emptyset$, and $\mathcal{C}_l^i$ denote all state-action pairs added to $\mathcal{C}_l$ in extension $i$ for $i = 1$ up to at most $\tilde d$. Let $w^i_k$ represent the least-square weight computed using $\C_l = \C_l^0 \cup \mathcal{C}_l^1 \cup \mathcal{C}_l^2 \cup \dots \cup \mathcal{C}_l^i$ for the $k$-th iteration. When $\mathcal{C}_l$ is extended for the $(i+1)$-th time, let $\mathcal{C}_l^{i+1}$ be the set of newly added state-action pairs, making the latest $\mathcal{C}_l = \C_l^0 \cup \mathcal{C}_l^1 \cup \mathcal{C}_l^2 \cup \dots \cup \mathcal{C}_l^{i+1}$. The least-squares weight $w^{i+1}_k$ is then computed using $\mathcal{C}_l$. When \cref{pseudo:policy_update} of \cref{alg:cmdp} is executed, $\pi_{k+1}$ remains unchanged for the rest of the algorithm's execution for any states already in $\Cov(\C_l^0 \cup \mathcal{C}_l^1 \cup \dots \cup \mathcal{C}_l^i)$, equivalent to $\Cov(\mathcal{C}_{l+1})$ in \cref{pseudo:policy_update}, because \cref{pseudo:c_l_extension} would have been executed in the $i$-th extension, making $\mathcal{C}_{l+1} = \C_l^0 \cup \mathcal{C}_l^1 \cup \dots \cup \mathcal{C}_l^i$. For states in $\Cov(\C_l^0 \cup \mathcal{C}_l^1 \cup \dots \cup \mathcal{C}_l^{i+1}) \setminus \Cov(\C_l^0 \cup \mathcal{C}_l^1 \cup \dots \cup \mathcal{C}_l^i)$ (equivalent to $\Cov(\mathcal{C}_l) \setminus \Cov(\mathcal{C}_{l+1})$ in \cref{pseudo:policy_update}), $\pi_{k+1}$ makes a softmax update using $w^{i+1}_k$. For all other states not in $\Cov(\mathcal{C}_l)$, the policy remains as $\pi_k$.

A subroutine can start with $\pi_0$, use the stored data to compute and return $\pi_{k}(\cdot | s)$ for any $s$ and $k$. By tracking newly added elements and the corresponding least-square weights, the algorithm can reconstruct policies $\pi_0,\dots,\pi_{K}$. This approach enables the algorithm to return the value of a mixture policy at termination.


\newpage
\section*{NeurIPS Paper Checklist}

\begin{enumerate}

\item {\bf Claims}
    \item[] Question: Do the main claims made in the abstract and introduction accurately reflect the paper's contributions and scope?
    \item[] Answer: \answerYes{} 
    \item[] Justification: Reviewers may find results from \cref{sec:confident-npg-cmdp} leading to the main results stated in \cref{sec:relaxed-feasibility} and \cref{sec:strict-feasibility}.  
    \item[] Guidelines:
    \begin{itemize}
        \item The answer NA means that the abstract and introduction do not include the claims made in the paper.
        \item The abstract and/or introduction should clearly state the claims made, including the contributions made in the paper and important assumptions and limitations. A No or NA answer to this question will not be perceived well by the reviewers. 
        \item The claims made should match theoretical and experimental results, and reflect how much the results can be expected to generalize to other settings. 
        \item It is fine to include aspirational goals as motivation as long as it is clear that these goals are not attained by the paper. 
    \end{itemize}

\item {\bf Limitations}
    \item[] Question: Does the paper discuss the limitations of the work performed by the authors?
    \item[] Answer: \answerYes{} 
    \item[] Justification: The limitations are discussed
    \item[] Guidelines: 
    \begin{itemize}
        \item The answer NA means that the paper has no limitation while the answer No means that the paper has limitations, but those are not discussed in the paper. 
        \item The authors are encouraged to create a separate "Limitations" section in their paper.
        \item The paper should point out any strong assumptions and how robust the results are to violations of these assumptions (e.g., independence assumptions, noiseless settings, model well-specification, asymptotic approximations only holding locally). The authors should reflect on how these assumptions might be violated in practice and what the implications would be.
        \item The authors should reflect on the scope of the claims made, e.g., if the approach was only tested on a few datasets or with a few runs. In general, empirical results often depend on implicit assumptions, which should be articulated.
        \item The authors should reflect on the factors that influence the performance of the approach. For example, a facial recognition algorithm may perform poorly when image resolution is low or images are taken in low lighting. Or a speech-to-text system might not be used reliably to provide closed captions for online lectures because it fails to handle technical jargon.
        \item The authors should discuss the computational efficiency of the proposed algorithms and how they scale with dataset size.
        \item If applicable, the authors should discuss possible limitations of their approach to address problems of privacy and fairness.
        \item While the authors might fear that complete honesty about limitations might be used by reviewers as grounds for rejection, a worse outcome might be that reviewers discover limitations that aren't acknowledged in the paper. The authors should use their best judgment and recognize that individual actions in favor of transparency play an important role in developing norms that preserve the integrity of the community. Reviewers will be specifically instructed to not penalize honesty concerning limitations.
    \end{itemize}

\item {\bf Theory Assumptions and Proofs}
    \item[] Question: For each theoretical result, does the paper provide the full set of assumptions and a complete (and correct) proof?
    \item[] Answer: \answerYes{} 
    \item[] Justification: The assumptions are listed in every lemma and theorems.  In the supplementary section, all supporting lemmas will be proven or cited.  All the lemmas are presented in sequence leading up to the two main theorems. 
    \item[] Guidelines:
    \begin{itemize}
        \item The answer NA means that the paper does not include theoretical results. 
        \item All the theorems, formulas, and proofs in the paper should be numbered and cross-referenced.
        \item All assumptions should be clearly stated or referenced in the statement of any theorems.
        \item The proofs can either appear in the main paper or the supplemental material, but if they appear in the supplemental material, the authors are encouraged to provide a short proof sketch to provide intuition. 
        \item Inversely, any informal proof provided in the core of the paper should be complemented by formal proofs provided in appendix or supplemental material.
        \item Theorems and Lemmas that the proof relies upon should be properly referenced. 
    \end{itemize}

    \item {\bf Experimental Result Reproducibility}
    \item[] Question: Does the paper fully disclose all the information needed to reproduce the main experimental results of the paper to the extent that it affects the main claims and/or conclusions of the paper (regardless of whether the code and data are provided or not)?
    \item[] Answer: \answerNA{} 
    \item[] Justification: This paper does not include experiment
    \item[] Guidelines:
    \begin{itemize}
        \item The answer NA means that the paper does not include experiments.
        \item If the paper includes experiments, a No answer to this question will not be perceived well by the reviewers: Making the paper reproducible is important, regardless of whether the code and data are provided or not.
        \item If the contribution is a dataset and/or model, the authors should describe the steps taken to make their results reproducible or verifiable. 
        \item Depending on the contribution, reproducibility can be accomplished in various ways. For example, if the contribution is a novel architecture, describing the architecture fully might suffice, or if the contribution is a specific model and empirical evaluation, it may be necessary to either make it possible for others to replicate the model with the same dataset, or provide access to the model. In general. releasing code and data is often one good way to accomplish this, but reproducibility can also be provided via detailed instructions for how to replicate the results, access to a hosted model (e.g., in the case of a large language model), releasing of a model checkpoint, or other means that are appropriate to the research performed.
        \item While NeurIPS does not require releasing code, the conference does require all submissions to provide some reasonable avenue for reproducibility, which may depend on the nature of the contribution. For example
        \begin{enumerate}
            \item If the contribution is primarily a new algorithm, the paper should make it clear how to reproduce that algorithm.
            \item If the contribution is primarily a new model architecture, the paper should describe the architecture clearly and fully.
            \item If the contribution is a new model (e.g., a large language model), then there should either be a way to access this model for reproducing the results or a way to reproduce the model (e.g., with an open-source dataset or instructions for how to construct the dataset).
            \item We recognize that reproducibility may be tricky in some cases, in which case authors are welcome to describe the particular way they provide for reproducibility. In the case of closed-source models, it may be that access to the model is limited in some way (e.g., to registered users), but it should be possible for other researchers to have some path to reproducing or verifying the results.
        \end{enumerate}
    \end{itemize}

\item {\bf Open access to data and code}
    \item[] Question: Does the paper provide open access to the data and code, with sufficient instructions to faithfully reproduce the main experimental results, as described in supplemental material?
    \item[] Answer: \answerNA{} 
    \item[] Justification: This paper does not include experiment 
    \item[] Guidelines:
    \begin{itemize}
        \item The answer NA means that paper does not include experiments requiring code.
        \item Please see the NeurIPS code and data submission guidelines (\url{https://nips.cc/public/guides/CodeSubmissionPolicy}) for more details.
        \item While we encourage the release of code and data, we understand that this might not be possible, so “No” is an acceptable answer. Papers cannot be rejected simply for not including code, unless this is central to the contribution (e.g., for a new open-source benchmark).
        \item The instructions should contain the exact command and environment needed to run to reproduce the results. See the NeurIPS code and data submission guidelines (\url{https://nips.cc/public/guides/CodeSubmissionPolicy}) for more details.
        \item The authors should provide instructions on data access and preparation, including how to access the raw data, preprocessed data, intermediate data, and generated data, etc.
        \item The authors should provide scripts to reproduce all experimental results for the new proposed method and baselines. If only a subset of experiments are reproducible, they should state which ones are omitted from the script and why.
        \item At submission time, to preserve anonymity, the authors should release anonymized versions (if applicable).
        \item Providing as much information as possible in supplemental material (appended to the paper) is recommended, but including URLs to data and code is permitted.
    \end{itemize}

\item {\bf Experimental Setting/Details}
    \item[] Question: Does the paper specify all the training and test details (e.g., data splits, hyperparameters, how they were chosen, type of optimizer, etc.) necessary to understand the results?
    \item[] Answer: \answerNA{} 
    \item[] Justification: This paper does not include experiment
    \item[] Guidelines:
    \begin{itemize}
        \item The answer NA means that the paper does not include experiments.
        \item The experimental setting should be presented in the core of the paper to a level of detail that is necessary to appreciate the results and make sense of them.
        \item The full details can be provided either with the code, in appendix, or as supplemental material.
    \end{itemize}

\item {\bf Experiment Statistical Significance}
    \item[] Question: Does the paper report error bars suitably and correctly defined or other appropriate information about the statistical significance of the experiments?
    \item[] Answer: \answerNA{} 
    \item[] Justification: This paper does not include experiment
    \item[] Guidelines: 
    \begin{itemize}
        \item The answer NA means that the paper does not include experiments.
        \item The authors should answer "Yes" if the results are accompanied by error bars, confidence intervals, or statistical significance tests, at least for the experiments that support the main claims of the paper.
        \item The factors of variability that the error bars are capturing should be clearly stated (for example, train/test split, initialization, random drawing of some parameter, or overall run with given experimental conditions).
        \item The method for calculating the error bars should be explained (closed form formula, call to a library function, bootstrap, etc.)
        \item The assumptions made should be given (e.g., Normally distributed errors).
        \item It should be clear whether the error bar is the standard deviation or the standard error of the mean.
        \item It is OK to report 1-sigma error bars, but one should state it. The authors should preferably report a 2-sigma error bar than state that they have a 96\% CI, if the hypothesis of Normality of errors is not verified.
        \item For asymmetric distributions, the authors should be careful not to show in tables or figures symmetric error bars that would yield results that are out of range (e.g. negative error rates).
        \item If error bars are reported in tables or plots, The authors should explain in the text how they were calculated and reference the corresponding figures or tables in the text.
    \end{itemize}

\item {\bf Experiments Compute Resources}
    \item[] Question: For each experiment, does the paper provide sufficient information on the computer resources (type of compute workers, memory, time of execution) needed to reproduce the experiments?
    \item[] Answer: \answerNA 
    \item[] Justification: This paper does not include experiment 
    \item[] Guidelines:
    \begin{itemize}
        \item The answer NA means that the paper does not include experiments.
        \item The paper should indicate the type of compute workers CPU or GPU, internal cluster, or cloud provider, including relevant memory and storage.
        \item The paper should provide the amount of compute required for each of the individual experimental runs as well as estimate the total compute. 
        \item The paper should disclose whether the full research project required more compute than the experiments reported in the paper (e.g., preliminary or failed experiments that didn't make it into the paper). 
    \end{itemize}
    
\item {\bf Code Of Ethics}
    \item[] Question: Does the research conducted in the paper conform, in every respect, with the NeurIPS Code of Ethics \url{https://neurips.cc/public/EthicsGuidelines}?
    \item[] Answer: \answerYes{} 
    \item[] Justification: I have read the ethics guidelines.
    \item[] Guidelines:
    \begin{itemize}
        \item The answer NA means that the authors have not reviewed the NeurIPS Code of Ethics.
        \item If the authors answer No, they should explain the special circumstances that require a deviation from the Code of Ethics.
        \item The authors should make sure to preserve anonymity (e.g., if there is a special consideration due to laws or regulations in their jurisdiction).
    \end{itemize}

\item {\bf Broader Impacts}
    \item[] Question: Does the paper discuss both potential positive societal impacts and negative societal impacts of the work performed?
    \item[] Answer: \answerNo{} 
    \item[] Justification: By understanding the sample complexity of CMDP, a framework used by many of the safe reinforcement learning research, we can design more efficient algorithms.  This potentially has broader positive impact for real world applications.
    \item[] Guidelines:
    \begin{itemize}
        \item The answer NA means that there is no societal impact of the work performed.
        \item If the authors answer NA or No, they should explain why their work has no societal impact or why the paper does not address societal impact.
        \item Examples of negative societal impacts include potential malicious or unintended uses (e.g., disinformation, generating fake profiles, surveillance), fairness considerations (e.g., deployment of technologies that could make decisions that unfairly impact specific groups), privacy considerations, and security considerations.
        \item The conference expects that many papers will be foundational research and not tied to particular applications, let alone deployments. However, if there is a direct path to any negative applications, the authors should point it out. For example, it is legitimate to point out that an improvement in the quality of generative models could be used to generate deepfakes for disinformation. On the other hand, it is not needed to point out that a generic algorithm for optimizing neural networks could enable people to train models that generate Deepfakes faster.
        \item The authors should consider possible harms that could arise when the technology is being used as intended and functioning correctly, harms that could arise when the technology is being used as intended but gives incorrect results, and harms following from (intentional or unintentional) misuse of the technology.
        \item If there are negative societal impacts, the authors could also discuss possible mitigation strategies (e.g., gated release of models, providing defenses in addition to attacks, mechanisms for monitoring misuse, mechanisms to monitor how a system learns from feedback over time, improving the efficiency and accessibility of ML).
    \end{itemize}
    
\item {\bf Safeguards}
    \item[] Question: Does the paper describe safeguards that have been put in place for responsible release of data or models that have a high risk for misuse (e.g., pretrained language models, image generators, or scraped datasets)?
    \item[] Answer: \answerNA{} 
    \item[] Justification: This paper does not pose such risk as it contains no data or models.
    \item[] Guidelines: 
    \begin{itemize}
        \item The answer NA means that the paper poses no such risks.
        \item Released models that have a high risk for misuse or dual-use should be released with necessary safeguards to allow for controlled use of the model, for example by requiring that users adhere to usage guidelines or restrictions to access the model or implementing safety filters. 
        \item Datasets that have been scraped from the Internet could pose safety risks. The authors should describe how they avoided releasing unsafe images.
        \item We recognize that providing effective safeguards is challenging, and many papers do not require this, but we encourage authors to take this into account and make a best faith effort.
    \end{itemize}

\item {\bf Licenses for existing assets}
    \item[] Question: Are the creators or original owners of assets (e.g., code, data, models), used in the paper, properly credited and are the license and terms of use explicitly mentioned and properly respected?
    \item[] Answer: \answerNA{} 
    \item[] Justification: This paper does not use any data, model or code.
    \item[] Guidelines:
    \begin{itemize}
        \item The answer NA means that the paper does not use existing assets.
        \item The authors should cite the original paper that produced the code package or dataset.
        \item The authors should state which version of the asset is used and, if possible, include a URL.
        \item The name of the license (e.g., CC-BY 4.0) should be included for each asset.
        \item For scraped data from a particular source (e.g., website), the copyright and terms of service of that source should be provided.
        \item If assets are released, the license, copyright information, and terms of use in the package should be provided. For popular datasets, \url{paperswithcode.com/datasets} has curated licenses for some datasets. Their licensing guide can help determine the license of a dataset.
        \item For existing datasets that are re-packaged, both the original license and the license of the derived asset (if it has changed) should be provided.
        \item If this information is not available online, the authors are encouraged to reach out to the asset's creators.
    \end{itemize}

\item {\bf New Assets}
    \item[] Question: Are new assets introduced in the paper well documented and is the documentation provided alongside the assets?
    \item[] Answer: \answerNA{} 
    \item[] Justification: This paper does not contain any new assets.
    \item[] Guidelines:
    \begin{itemize}
        \item The answer NA means that the paper does not release new assets.
        \item Researchers should communicate the details of the dataset/code/model as part of their submissions via structured templates. This includes details about training, license, limitations, etc. 
        \item The paper should discuss whether and how consent was obtained from people whose asset is used.
        \item At submission time, remember to anonymize your assets (if applicable). You can either create an anonymized URL or include an anonymized zip file.
    \end{itemize}

\item {\bf Crowdsourcing and Research with Human Subjects}
    \item[] Question: For crowdsourcing experiments and research with human subjects, does the paper include the full text of instructions given to participants and screenshots, if applicable, as well as details about compensation (if any)? 
    \item[] Answer: \answerNA{} 
    \item[] Justification: This paper does not involve crowdsourcing nor resarch with human subjects.
    \item[] Guidelines:
    \begin{itemize}
        \item The answer NA means that the paper does not involve crowdsourcing nor research with human subjects.
        \item Including this information in the supplemental material is fine, but if the main contribution of the paper involves human subjects, then as much detail as possible should be included in the main paper. 
        \item According to the NeurIPS Code of Ethics, workers involved in data collection, curation, or other labor should be paid at least the minimum wage in the country of the data collector. 
    \end{itemize}

\item {\bf Institutional Review Board (IRB) Approvals or Equivalent for Research with Human Subjects}
    \item[] Question: Does the paper describe potential risks incurred by study participants, whether such risks were disclosed to the subjects, and whether Institutional Review Board (IRB) approvals (or an equivalent approval/review based on the requirements of your country or institution) were obtained?
    \item[] Answer: \answerNA{} 
    \item[] Justification: This paper does not involve crowdsourcing nor research with human subjects.
    \item[] Guidelines: 
    \begin{itemize}
        \item The answer NA means that the paper does not involve crowdsourcing nor research with human subjects.
        \item Depending on the country in which research is conducted, IRB approval (or equivalent) may be required for any human subjects research. If you obtained IRB approval, you should clearly state this in the paper. 
        \item We recognize that the procedures for this may vary significantly between institutions and locations, and we expect authors to adhere to the NeurIPS Code of Ethics and the guidelines for their institution. 
        \item For initial submissions, do not include any information that would break anonymity (if applicable), such as the institution conducting the review.
    \end{itemize}

\end{enumerate}

\end{document}